\documentclass[twoside,11pt]{article}

%

\usepackage{jmlr2e}
 \usepackage{lastpage}
\usepackage{smile_jmlr}
 \usepackage{mathtools}
\usepackage{glatent}
\usepackage{cme-math}
 \usepackage{rotating}

 \newtheorem{assumption}[theorem]{Assumption}
            


\jmlrheading{21}{2020}{1-\pageref{LastPage}}{6/18; Revised 12/19}{3/20}{18-357}{Carson Eisenach, Florentina Bunea, Yang Ning and Claudiu Dinicu}


\ShortHeadings{High-Dimensional Inference for Cluster-Based Graphical Models}{Eisenach, Bunea, Ning and Dinicu}
\firstpageno{1}

\begin{document}

\title{High-Dimensional Inference for Cluster-Based Graphical Models}

\author{\name Carson Eisenach \email eisenach@princeton.edu \\
       \addr Department of Operations Research and Financial Engineering\\
       Princeton University\\
       Princeton, NJ 08544, USA
       \AND
       \name Florentina Bunea \email fb238@cornell.edu \\
       \name Yang Ning \email yn265@cornell.edu\\
       \name Claudiu Dinicu \email cd535@cornell.edu\\
       \addr Department of Statistics and Data Science\\
       Cornell University\\
       Ithaca, NY 14850, USA       }

\editor{Nicolas Vayatis}

\maketitle

\begin{abstract}
Motivated by modern applications in which one constructs graphical models based on a very large number of features, this paper introduces a new class of cluster-based graphical models, in which   variable clustering is applied as an initial step for reducing the dimension of the feature space. We employ model assisted clustering, in which the clusters contain features that are similar to the same unobserved latent variable. Two different cluster-based Gaussian graphical models are considered: the latent variable graph, corresponding to the graphical model associated with the unobserved latent variables, and the cluster-average graph, corresponding to the vector of features averaged over clusters. Our study reveals that likelihood based inference for the latent graph, not analyzed previously,  is analytically intractable. Our main contribution is the development and analysis of  alternative estimation and inference strategies, for  the precision matrix of an unobservable latent vector $Z$. We replace the likelihood of the data by an appropriate class of empirical risk functions, that can be specialized  to the latent graphical model and to the simpler, but under-analyzed, cluster-average graphical model. The estimators thus derived can be used  for inference on the graph structure, for instance  on edge strength or pattern recovery. Inference is based on the asymptotic limits of the entry-wise estimates of the precision matrices associated with the conditional independence  graphs under consideration. While taking the uncertainty induced by the clustering step into account, we establish  Berry-Esseen central limit theorems for the proposed estimators. It is noteworthy that, although the clusters are estimated adaptively from the data, the central limit theorems regarding the entries of the estimated graphs  are proved under the same conditions one would use if the clusters were known in advance.  As an illustration  of the usage of  these newly developed inferential tools, we show that they can be reliably used for recovery of the sparsity pattern of the graphs we study, under FDR control, which is verified via simulation studies and an fMRI data analysis. These experimental results confirm the theoretically established difference between the two graph structures. Furthermore, the data analysis suggests that the  latent variable graph, corresponding to  the unobserved cluster centers, can help provide more insight into the understanding of the brain connectivity networks relative to the simpler, average-based, graph. \end{abstract}

\begin{keywords}
Berry-Esseen bound, Graphical model,  Latent variables, High-dimensional inference, Clustering, False discovery rate
\end{keywords}

 \section{Introduction}
\label{sec:intro}

Over the last several decades, graphical models have become an increasingly popular method for understanding independence and conditional independence relationships between components of random vectors. More recently, the challenges posed by the estimation and statistical analysis of graphical models with many more nodes than the number of observations has led to renewed interest in these models, such as  \cite{Meinshausen06,Yuan07, Friedman08,verzelen2008gaussian,Lam09,rothman2008sparse, peng2009partial, ravikumar2011high,Yuan10, Cai11a, Sun12b, liu2012high,xuezou2012,ning2013high,cai2016estimating,tan2016replicates,fan2017high,yang2018semiparametric,feng2019high}, to give only an incomplete list.

Nonetheless, when the dimension (number of nodes) grows very large and the sample size is small, the dependency among the components of a random vector may become weak, if it exists at all, and difficult to detect without additional information. If the dimension of the random vector is in the thousands, even if the dependency structure can be detected by an estimated graphical model, it can be difficult to interpret the results and extract meaningful scientific insights.

One solution to both of the aforementioned issues is to employ an initial dimension reduction procedure on the high dimensional vector. For example, in neuroscience applications, a typical functional magnetic resonance image (fMRI) consists of blood-oxygen-level-dependent (BOLD) activities measured at 200,000+ voxels of the brain, over a period of time. Instead of analyzing voxel-level data directly, scientists routinely  cluster  voxels into several regions of interest (ROI) with homogeneous functions using domain knowledge, and then carry out the analysis at the ROI-level. In this example, using the language of graphical models, the group structure of variables may boost the dependency signals.  Similar pre-processing steps are used in other application domains, such as genomics, finance and economics.

Motivated by {a rich set of applications, we consider  variable clustering as the initial dimension reduction step applied to the} observed vector $\bX =: (X_1, \ldots, X_d) \in \RR^d$.  To the best of our knowledge,  very little is known about the effect of clustering on down-stream analysis and, consequently, on the induced graphical models.  Our contribution is the provision of a framework that allows for such an analysis. We introduce cluster-based graphical models,  show how they can be estimated and, furthermore, provide the asymptotic distribution of the edge strength estimates. 

These models are built on the assumption that the observed variables $\bX =(X_1, \ldots, X_d) \in \RR^d$ can be partitioned into $K$ unknown clusters $G^* = \{\Gs{1}, \ldots, \Gs{K}\}$ such that variables in the same cluster share the same behavior. Following the intuition behind widely-used $K$-means type procedures, we define a population-level cluster as a group of variables that are noise corrupted versions of a hub-like variable. This hub-like variable is not directly observable, and is treated as a latent factor.

Formally, we assume there exists a latent random vector $\bZ \in \RR^K$, with mean zero and covariance matrix $\Cov(\bZ) = \bCS$, such that
\begin{equation}
\label{eqn:g_latent_model}
\bX=\Ab \bZ+\bE,
\end{equation}
for a zero mean error vector $\bE$ with independent entries. The entries of the $d \times K$ matrix $\Ab$ are given by  $A_{jk}=\II\{j\in G^*_{k}\}$. A  cluster of variables consist in those components of $\Xb$ with indices in the same $\Gs{k}$. We denote $\Cov(\bE)=\bGammaS$, a diagonal matrix with entries  $\Gamma^*_{jj} =\gammaS{j}$ for any $1\leq j\leq d$.  We also assume that the mean-zero noise $\bE$ is independent of $\bZ$. \cite{Bunea2018} show that the clusters are uniquely defined by the model in \eqref{eqn:g_latent_model}, provided that the smallest cluster contains at least two variables and $\bCS$ is strictly positive definite; this result holds irrespective of distributional assumptions.

To keep the presentation focused, in this work we assume that $\bZ\sim \cN(0,\bCS)$ and $\bE \sim \cN(0,\bGammaS)$, which implies $\bX \sim \cN(0,\bSigmaS)$ with $\bSigmaS=\Ab\bCS\Ab^T+\bGammaS$. In this context we consider two related, but different, graphical models:
\begin{itemize}
\item[(i)] The {\it latent variable graph}, associated  with  the sparsity pattern of  the precision matrix
\begin{equation}
\label{teta}
  \bTheta^*:=\Cb^{*-1}
\end{equation}
of the Gaussian vector $\bZ \in \RR^K$. The latent variable graph encodes conditional independencies (CI's) among the unobserved, latent variables $\bZ$.
\item[(ii)] The {\it cluster-average graph}, associated with the sparsity pattern of  the precision matrix
\begin{equation}
\label{omega}
  \bOmega^*:=\bS^{*-1},
\end{equation}
where $\bEssS$  is the covariance matrix of $\bar{\bX} \in \RR^{K}$, and $\bar{\bX} =: (\bar{X}_1, \ldots, \bar{X}_K)$ is the within cluster average given by $\bar{X}_k =: \frac{1}{|\Gs{k}|}\sum_{i \in \Gs{k}} X_i$. The cluster-average graph encodes CI's among averages of observable random variables. In particular, we have
\[
\bEssS = \bCS + \bar\bGammaS,
\]
where $\bar \bGammaS=\textrm{diag}(\bar\gamma^*_{1},...,\bar\gamma^*_K)$ with $\bar\gamma^*_k=\frac{1}{|G_k^*|^2}\sum_{j\in G_k^*}\gamma_j^*$.
\end{itemize}

Although both these graphs correspond to vectors of dimension $K$ and are constructed based on the partition $G^*$, they are in general different as the sparsity patterns of $\bTheta^*$  and $\bOmega^*$ will typically differ, and have different interpretations. Therefore it would be misleading to use one as a proxy for the other when drawing conclusions. For instance, in the neuroscience example, if we interpret each latent variable as the function of a ROI, then the latent variable graph {encodes the CI relationships} between functions, which is one question of scientific interest. The difference between $\bTheta^*$  and $\bOmega^*$ shows that this question will not be typically answered by studying the cluster-average graph, although that  may be tempting to do by practitioners. 

\subsection{Our Contributions}
Since the two cluster-based graphical models introduced above can  both be of interest in a large array of applications, we provide inferential tools for both of them in this work. We assume that we observe $n$ i.i.d. copies $\bX_1, \ldots, \bX_n$ of $\bX$. The focus  of our work is on post-clustering and post-regularization inference for these two sparse precision matrices.  To this end, we derive the asymptotic distribution of  novel estimators of their entries.  These estimators can be used to answer any inferential questions of interest at the edge strength level, or can be combined to provide sparsity pattern recovery under FDR control.  In Section \ref{sec:main_results:fdr_control} we provide an instance of the latter. 

Inference for the entries of a Gaussian precision matrix has received a large amount of attention in the past few years,  most notably post-regularization inference, for instance \cite{Ren2013,Zhang2014,jankova2014confidence,gu2015local,barber2015rocket,jankova2017honest,javanmard2013confidence,van2013asymptotically,ning2017general,cai2017confidence,neykov2015unified,ning2017likelihood,fang2017testing}. These works generalize the classical ideas of one-step estimation \citep{bickel1975one} to the high-dimensional setup by first constructing a sparse estimator of the precision matrix via regularization, and then building de-sparsified updates that are asymptotically normal.  The effect of the initial regularization step is controlled in the second step, and inference after regularization becomes valid.

In this paper we consider a similar estimation strategy, but differs from the existing literature in several important ways.  In our work, we add another layer of data-dependent dimension reduction, via clustering, and provide a framework within which the variability induced by clustering can be controlled.  Even after controlling for the clustering variability, we note that the existing  procedures for estimation and, especially,  post-regularization inference in Gaussian graphical models are not immediately applicable to our problem for the following reasons:
\begin{enumerate}
  \item They are developed for variables that can be observed directly. From this perspective, they could, in principle, be applied to the cluster-average graph, but are not directly extendable to the latent graph;
  \item To the best of our knowledge, all existing methods for precision matrix  inference require the largest eigenvalue of the corresponding  covariance matrix to be upper bounded by a constant.  Such an assumption implies, in turn, that the Euclidean norm of each row of the covariance matrix is bounded, which reduces significantly the parameter space for which inference is valid. The assumption holds, for instance, when the number of variables is bounded, or when the entries of each row are appropriately small.
\end{enumerate}

To overcome these limitations, we take a different approach in this work, that allows us to lift  unpleasant technical conditions associated with other procedures, while maintaining the validity of inference for  both the latent  and the average graph. We summarize our main contributions below. \\



\noindent {\bf 1.} {\bf Methods for  estimation tailored to high dimensional inference in latent, cluster-based,  graphical models.}
We develop a new estimation strategy tailored to our final goal, that of  constructing approximately Gaussian estimators for the entries of the precision matrices $\bTheta^*$ and $\bOmega^*$ given in (\ref{teta}) and (\ref{omega}) above. Although we work under the assumption that the  data is Gaussian, likelihood based estimators may be unsatisfactory, because their analysis can require stringent assumptions, as explained above (see also \cite{jankova2014confidence}), or may become analytically intractable, as argued in Section \ref{sec_LVG}, for the latent graph. We propose a method that mimics very closely the principles underlying the construction of an efficient score function for estimation in the presence of high dimensional nuisance parameters (see for instance \cite{bickel1993efficient}), but we do not base it on the corresponding likelihood-derived quantities.  The underlying  principles are explained in Section \ref{sec:general}. To the best of our knowledge, this is the  first estimation method of the latent precision matrix, that can be analyzed theoretically for inferential purposes.  As an added benefit of our  estimation framework, the same principles can be applied for the estimation, and analysis,  of the cluster-average graph.

\noindent {\bf 2.} {\bf The analysis of estimators of the precision matrix of unobserved cluster centers:  Berry-Esseen-type bounds for Gaussian approximations.} We verify, theoretically, that the estimators constructed with an inferential goal in mind do indeed have the desired  properties. 
To this end, we derive the asymptotic distribution of the proposed entry-wise estimates of the latent graph, and also of the cluster-average graph. Moreover, we quantify the speed with which this limit is obtained,  which we show to be proportional to $1/\sqrt{n}$ in both cases. We do so by establishing  Berry-Esseen type bounds on the difference between the cumulative distribution function of our estimators and that of a standard Gaussian random variable that are valid for each $K, d$ and $n$,  and are presented in Theorems \ref{thm:xi_asymptotic} and  \ref{thm:theta_asymptotic}, respectively.  As immediate applications, we can construct approximate confidence intervals for one or a finite number of entries of the latent or average graph, or known linear functionals of such entries. While the answers we thus provide at the average graph level are similar to existing results, established for the full CI\ graph based on all $d$ nodes, for instance by \citep{Ren2013,Zhang2014,jankova2014confidence,gu2015local,barber2015rocket,jankova2017honest,javanmard2013confidence,van2013asymptotically,ning2017general,cai2017confidence,neykov2015unified}, the results for the latent graph are, to the best of our knowledge, the first such results in the literature.  

We note, furthermore, that the average cluster graph can be viewed as a graph with observable nodes, the cluster averages,  only after the clusters are estimated from the data.  Our theoretical analysis takes this step into account. 
We discuss, in Section \ref{sec:introduction}, clustering methods tailored to model (\ref{eqn:g_latent_model}), where the number of clusters $K$ is unknown and is allowed to grow with $n$.  Using the results of \cite{Bunea2018}, these methods  yield a partition $\widehat G = G^*$, with high probability, provided that $\lambda_{\text{min}}(\Cb^*) > c$, for  a small positive quantity $c$  made precise in  Section \ref{sec:introduction}.  A lower bound on the smallest eigenvalue of the covariance matrix is the minimal condition under which inference in {\it any} graphical model can be performed.  Therefore, consistent clustering via the model \eqref{eqn:g_latent_model} does not  require a  further reduction of the parameter space for which the more standard post-regularized inference can be developed.  Moreover, as Section \ref{sec:main_results} shows, asymptotic inference based on the estimated clusters reduces to asymptotic inference relative to the true clusters, $G^*$, without any need for data splitting. This fact holds true for both the average and the latent variable graph, and  is in sharp contrast with a phenomenon often encountered in post-model selection inference, such as in variable selection in linear regression \citep{lockhart2014significance,lee2013exact,taylor2014post}. In that case, reducing inference to the consistently selected set of variables can only be justified over a reduced part of the parameter space \citep{bunea2004consistent}, and is therefore not a popular practice.  

Another technical contribution is that the asymptotic normality of the estimators is established under relaxed conditions. Unlike the existing literature on de-biased  inference on graphical models \citep{jankova2014confidence,jankova2017honest}, we do not require the bounded operator norm condition for the covariance matrix such as $\lambda_{\max}(\bS^*)\leq C$ for cluster-average graph. As shown by \cite{jankova2014confidence} and explained above, the analysis of the multivariate Gaussian likelihood  may
require stringent assumptions for cluster-average graph and becomes intractable for the latent graph. By using the proposed {pseudo-likelihood} function which has a much simpler form, we can remove the unpleasant assumption on the bounded operator norm. In addition, we re-analyze the CLIME estimator \citep{Cai11a} $\hat\bOmega_{\cdot k}$ (the $k$th column of $\hat\bOmega$) under our Assumption 4.1 and 4.2, which is used as the initial estimator for inference. As explained in Section \ref{sec_average}, we can show that the CLIME estimator satisfies $\|\hat\bOmega_{\cdot k}-\bOmega_{\cdot k}^*\|_1\lesssim s_1\sqrt{\frac{\log (K\vee n)}{n}},$ where $s_1$ is the sparsity of $\bOmega_{\cdot k}^*$. This result does not require the bounded operator norm or matrix $L_1$ norm condition and can be of independent interest.

To illustrate how one can use these newly developed inferential tools,  we focus on the estimation of the sparsity pattern of the graphs, which can be equivalently viewed as a multiple-testing problem.  It is well known that the exact sparsity pattern can be recovered, with high probability, only if the entries of each precision matrix are above the minimax optimal noise level $O(\sqrt{\log d/n})$ \citep{ravikumar2011high,Meinshausen06}.  Since our aim is inference on the sparsity pattern without further restrictions on the parameter space, the next best type of error that we can control is the False Discovery Rate (FDR) \citep{BH95}.   In Section \ref{sec:main_results:fdr_control} we  use these results for pattern recovery under FDR control, and explain the  effect  of the asymptotic approximations on this quantity.

This paper is organized as follows. Section \ref{sec:background} below contains a brief summary of existing results on model-assisted clustering, via model \eqref{eqn:g_latent_model}. Section \ref{sec:inference}  describes the estimation procedures for the latent variable graph and the cluster-average graph, respectively. In Section \ref{sec:main_results} we establish  Berry-Esseen type  central limit theorems for the estimators derived in Section \ref{sec:inference}, and  provide bounds on the FDR associated with each graphical model under study, respectively. Section \ref{sec:numerical} gives numerical results using both simulated and real data sets.

\section{Background}\label{sec:background}
\subsection{Notation}
\label{sec:introduction:notation}
The following notation is adopted throughout this paper. Let $d$ denote the ambient dimension, $n$ the sample size, $K$ the number of clusters  and $m$ the minimum cluster size. The matrix $\bCS$ denotes the population covariance of the latent vector $\bZ$. Likewise, the matrices $\bGammaS$, $\bSigmaS$, $\bThetaS$, $\bEssS$ and $\bOmegaS$ denote population-level quantities.

For $\vb=(v_1,...,v_d)^{T} \in \mathbb{R}^d$, and $1 \leq q \leq \infty$, we define $\|{\vb}\|_q=(\sum_{i=1}^d |v_i|^q)^{1/q}$, $\|{\vb}\|_0=|\textrm{supp}(\vb)|$, where $\textrm{supp}(\vb)=\{j: v_j\neq 0\}$ and $|A|$ is the cardinality of a set $A$. Denote $\|{\vb}\|_{\infty} = \max_{1\leq i \leq d} |v_i|$ and $\vb^{\otimes 2}=\vb\vb^T$. Assume that $\vb$ can be partitioned as $\vb=(\vb_1,\vb_2)$. Let $\nabla f(\vb)$ denote the gradient of the function $f(\vb)$, and $\nabla_1 f(\vb)=\partial f(\vb)/\partial \vb_1$. Similarly,  let $\nabla^2 f(\vb)$ denote the Hessian of the function $f(\vb)$ and $\nabla^2_{12} f(\vb)=\partial^2 f(\vb)/\partial \vb_1\partial \vb_2$.

For a $d\times d$ matrix $\Mb=[M_{jk}]$, let  $\|{\Mb}\|_{\max}=\max_{jk}|M_{jk}|$, $\|{\Mb}\|_1=\sum_{jk}|M_{jk}|$, and $\|{\Mb}\|_{\infty}=\max_{k}\sum_{j}|M_{jk}|$.
If the matrix $\Mb$ is symmetric, then $\lambda_{\min}(\Mb)$ and $\lambda_{\max}(\Mb)$ are the minimal and maximal eigenvalues of $\Mb$. Let $[d]=\{1,2,....,d\}$. For any $j\in [d]$, we denote the $j$th row and $j$th column of $\Mb$ as $\Mb_{j\cdot}$ and $\Mb_{\cdot j}$, respectively. Similarly, let $\Mb_{-j,-k}$ be the sub-matrix of $\Mb$ with the $j^{th}$ row and $k^{th}$ column removed. The notation $\cS^{d\times d}$ refers to the set of all real, symmetric $d \times d$ matrices. Likewise, $\psdc{d} \subset \cS^{d\times d}$ is the positive semi-definite cone. We use $\otimes$ and $\circ$ to denote the Kronecker and Hadamard product of two matrices, respectively; we also may write  $\Mb^{\otimes 2}=\Mb\otimes\Mb$. Let $\eb_{j}$ denote the vector of all zeros except for a one in the $j^{th}$ position. The vector $\bone$ is the vector of all ones. $a\vee b=\max(a,b)$.

\subsection{Model Assisted Variable Clustering}
\label{sec:introduction}

In this section we review existing results on variable clustering. \cite{Bunea2018} showed that if we use model \eqref{eqn:g_latent_model} to define clusters of variables, these clusters are uniquely defined, up to label switching, so long as $m=: \min_{1 \leq k \leq K} |G_k^*| \geq 2$ and the components of the latent vector $\bZ$ are different almost surely, or equivalently
\[
\Delta(\bCS)=: \min_{j < k} \EE(Z_j - Z_k)^2  > 0.
\]
Since
\[
\Delta(\bCS)=  \min_{j<k} (\eb_{j}-\eb_{k})^T\bCS(\eb_{j}-\eb_{k})\geq 2 \lmin{\bCS},
\]
the clusters  are uniquely defined as soon as  $\lmin{\bCS} > 0$, which is the minimal condition under which one can study properties of the corresponding precision matrix.

In addition, \cite{Bunea2018} developed two algorithms, PECOK and COD, that are shown to recover the clusters exactly, from  $n$ i.i.d. copies $\bX_1, \ldots, \bX_n$ of $\bX$, as soon as  \[
\lmin{\bCS} \geq c,
\]
for a positive quantity $c$ that approaches 0 as $n$ grows. For the COD procedure,
\[
c = O\left(\|\bSigma^*\|_{\max}\sqrt{\log (d \vee n)/n}\right).
\]
On the other hand, for the PECOK procedure
$$c = O\left(\|\bGamma^*\|_{\max}\sqrt{K\log (d \vee n)/mn}\right),$$
which can be much smaller when one has a few, balanced, clusters.

These values of $c$ are shown to be minimax or near-minimax optimal for cluster recovery.  We refer to Theorems 3
and 4 in  \cite{Bunea2018} for the precise expressions and details.  Under these minimal conditions on $\lmin{\bCS}$, the exact recovery of the clusters holds  with probability larger than $1 - 1/(d \vee n)$. Because these conditions are sufficiently weak, we are able to show, in Section \ref{sec:main_results}, that inference in cluster-based graphical models is not hampered by the clustering step.

For completeness, we outline the PECOK algorithm below, which consists in a convex relaxation of the $K$-means algorithm, further tailored to estimation of clusters $G^* = \{\Gs{1}, \ldots, \Gs{K}\}$ defined via the interpretable model  (\ref{eqn:g_latent_model}).   The PECOK algorithm consists in  the following three steps:
\begin{enumerate}
\item Compute an estimator $\tilde\bGamma$ of the matrix $\bGamma^*$.
\item Solve the semi-definite program (SDP)
\begin{equation}\label{eqn:pecok_sdp}
\widehat \Bb =\argmax _{\Bb \in \cD}\langle \widehat{\bSigma} - \widetilde{\bGamma}, \Bb \rangle, \end{equation}
where  $\widehat \bSigma$ is the sample covariance matrix and
\begin{equation} \label{eq:domain}
\cD:=\left\{ \Bb \in R^{d\times d}:
                \begin{array}{l}
                  \bullet\ \bB  \succcurlyeq 0 \  \ \text{(symmetric and positive semidefinite)} \\
                  \bullet\  \sum_a B_{ab} = 1,\ \forall b\\
        \bullet\ B_{ab}\geq 0,\ \forall a,b\\
        \bullet\ \tr(\Bb) = K
                \end{array}
              \right\}.
  \end{equation}
\item Compute $\widehat G$ by applying a clustering algorithm on the rows (or equivalently columns) of $\widehat \Bb$.
\end{enumerate}
The construction of an  accurate estimator $\widetilde \bGamma$ of $\bGamma^*$, before the cluster structure is known,  is a  crucial step for guaranteeing the statistical optimality of  the PECOK estimator.  Its construction is given in   \cite{Bunea2018}, and included in Appendix \ref{pregamma}, for the convenience of the reader.

We will employ an efficient algorithm for solving  \eqref{eqn:pecok_sdp}. Standard black-box SDP solvers, for a fixed precision, exhibit $\cO(d^7)$ running time on \eqref{eqn:pecok_sdp}, which is prohibitively expensive. \cite{Eisenach2019b} recently introduced the FORCE algorithm, which requires worst case $\cO(d^{6}K^{-2})$ time to solve the SDP, and in practice often performs the clustering rapidly.

The key idea behind the FORCE algorithm is that an optimal solution to \eqref{eqn:pecok_sdp} can be attained by first transforming \eqref{eqn:pecok_sdp} into an eigenvalue problem, and then using a first-order method. Iterations of the first-order method are interleaved with a dual step to round the current iterate to an integer solution of the clustering problem, and then searches for an optimality certificate. By using knowledge of both the primal and the dual SDPs, FORCE is able to find the solution much faster than a standard SDP solver. We refer to \cite{Eisenach2019b} for the detailed algorithm.

\section{Estimation of Cluster-based Graphical Models}\label{sec:inference}

In this section, we propose a unified estimation approach, that utilizes similar  loss functions for estimation and inference in the cluster-average and the latent variable graphs. We first describe our general principle, and then demonstrate its application to the two graphical models.

\subsection{One-step Estimators for High-Dimensional Inference}\label{sec:general}
\label{sec:inference:z_est}
Assume that we observe $n$ i.i.d. realizations $\bX_1,...,\bX_n$ of $\bX \in \RR^d$. Let $Q(\bbeta, \bX)$ denote a known mapping of $\bbeta$ and $\bX$ to $\RR$, where $\bbeta$ is a $q$-dimensional unknown parameter of the distribution of $\bX$. Often this $Q$ is referred to as the loss function. We define the target parameter $\bbeta^*$ as
\[
\bbeta^*=\argmin \EE(Q(\bbeta, \bX)).
\]

Next, let us partition $\bbeta$ as $\bbeta=(\theta,\bgamma)$, where $\theta \in \RR$ is the univariate parameter of interest, and $\bgamma \in \RR^{q-1}$ is a nuisance parameter. Our goal is to construct a $n^{1/2}$-consistent and asymptotically normal estimator for $\theta$ in high-dimensional models with $q=\textrm{dim}(\bbeta)\gg n$. In this case, the dimension of the nuisance parameter $\gamma$ is large, which makes the inference on $\theta$ challenging. We start from the empirical risk function over $n$  observations defined as
\begin{equation}\label{eqQn}
Q_n(\bbeta)=\frac{1}{n} \sum_{i=1}^nQ(\bbeta,  \bX_i).
\end{equation}
One standard choice for $Q_n$ is the negative log-likelihood function of the data. In this work, we conduct inference based on an alternative loss function, as the analysis of the log-likelihood can require unpleasant  technical conditions that we would like to avoid, as discussed in Sections \ref{sec:main_results:assumption}. That said, we mimic likelihood principles as much as possible, in order to make intuitive the construction below. For these reasons we will refer to $Q_n(\bbeta)$ as the negative {\it pseudo-likelihood} function.

For now we leave $Q_n(\bbeta)$ unspecified --  a detailed discussion of its selection for inference in the latent variable graph and the cluster-average graph will be given in the following two subsections. In terms of $Q_n$, we define the pseudo-information matrix for one observation as $\Ib(\bbeta)=\EE(\nabla^2 Q(\bbeta^*, \bX_i))$. We can partition this matrix as
\begin{equation}\label{eqpartition}
\Ib(\bbeta) =
\begin{bmatrix}
\Ib_{11} & \Ib_{12}  \\
\Ib_{21} & \Ib_{22}  \\
\end{bmatrix},
\end{equation}
with the partitions corresponding to those of $\bbeta=(\theta,\bgamma)$.

When $Q_n$ is the negative log-likelihood function, and the dimension of the parameter is independent of $n$, then $h(\btheta; \bgamma)$ given by \eqref{eqeff} is called the {\it efficient score function} for $\theta$, and classical theory shows that it admits solutions that are consistent, asymptotically normal and attain the information bound given by the reciprocal of \eqref{eqinfor} \citep{van,bickel1993efficient}.

With these goals in mind, we similarly define the corresponding pseudo-score function for estimating $\theta$ in the presence of the nuisance parameter $\bgamma$ as
\begin{align*}
h(\theta; \bgamma) &= \nabla_1Q_n(\bbeta)-\Ib_{12}\Ib_{22}^{-1}\nabla_2Q_n(\bbeta)\\
&= \frac{1}{n}\sum_{i=1}^n \Big(\nabla_1Q(\bbeta, \bX_i)-\Ib_{12}\Ib_{22}^{-1}\nabla_2Q(\bbeta, \bX_i)\Big) \numberthis \label{eqeff}
\end{align*}
and  define the pseudo information of $\theta$, in the presence of the nuisance parameter $\bgamma$, as
\begin{equation}\label{eqinfor}
\Ib_{1\mid 2}=\Ib_{11}-\Ib_{12}\Ib_{22}^{-1}\Ib_{21}.
\end{equation}
When the dimension of $\bgamma$ is fixed, one can easily estimate $\Ib_{12}$ and $\Ib_{22}$ in (\ref{eqeff}) by their sample versions $\hat \Ib_{12}$ and $\hat \Ib_{22}$. However, such simple procedure fails when the dimension of $\bgamma$ is greater than the sample size, as $\hat \Ib_{22}$ is rank deficient. To overcome this difficulty, rather than estimating $\Ib_{12}$ and $\Ib_{22}^{-1}$ separately, we directly estimate \begin{equation}\label{w}  \wb^T=\Ib_{12}\Ib_{22}^{-1} \end{equation} by
\begin{equation}\label{eqwd}
        \hat\wb=\argmin \|{\wb}\|_1, ~~~~\textrm{s.t.}~~~~ \|{\nabla^2_{12} Q_n(\hat\bbeta)- \wb^T\nabla^2_{22} Q_n(\hat\bbeta)}\|_\infty\leq\lambda',
\end{equation}
where $\lambda'$ is a non-negative tuning parameter, and $\hat\bbeta=(\hat\theta,\hat\bgamma)$ is an initial estimator,  which is usually defined case by case, for a given model. Then, we can plug $\hat\wb$ and $\hat\bgamma$ into the pseudo-score function, which gives
\begin{equation}\label{eqeffest}
\hat h(\theta, \hat\bgamma)=\nabla_1Q_n(\theta, \hat\bgamma)-\hat\wb^T\nabla_2Q_n(\theta, \hat\bgamma).
\end{equation}
Following the Z-estimation principle \citep{van,bickel1993efficient}, one could define the final  estimator of $\theta$ as the solution of the pseudo-score function $\hat h(\theta, \hat\bgamma)$. However, in many examples, the  pseudo-score function $\hat h(\theta, \hat\bgamma)$ may have multiple solutions and it becomes unclear which root serves as a consistent estimator; see \cite{small2000eliminating} for further discussion of the general estimating function context. To bypass this issue, we consider the following simple one-step estimation approach. Given the initial estimator $\hat\theta$ from the partition of $\hat\bbeta$, we perform a Newton-Raphson update based on the pseudo-score function $\hat h(\theta, \hat\bgamma)$,
to obtain  $\tilde{\theta}$, which is traditionally referred to as a one-step estimator by \cite{bickel1975one}. Specifically, we construct
\begin{equation}\label{eqest}
\tilde\theta=\hat\theta-\hat \Ib_{1|2}^{-1}\hat h(\hat\theta,\hat\bgamma),
\end{equation}
where $\hat \Ib_{1|2}$ is an estimator of the partial information matrix $\Ib_{1\mid 2}$.  In  Sections \ref{sec_average} and \ref{sec_LVG} below we show that, under appropriate conditions,  the one-step estimator $\tilde\theta$  constructed using the empirical risk functions $Q_n$ -- defined in \eqref{eqloss} and \eqref{eqloss2}, respectively --  satisfies
\begin{equation}
  \label{eq2}
n^{1/2}(\tilde{\theta}-\theta^*)=-\Ib^{-1}_{1\mid 2}n^{1/2} h(\bbeta^*)+o_p(1).
\end{equation}
By applying the central limit theorem to $h(\bbeta^*)$, we establish the asymptotic normality of  $\tilde{\theta}$ in Theorems \ref{thm:xi_asymptotic} and \ref{thm:theta_asymptotic}.

When $Q_n(\bbeta)$ is the negative log-likelihood of the data, \cite{ning2017general} successfully used this approach and the resulting estimator $\tilde\theta$ is asymptotically equivalent to the de-biased estimator in \cite{Zhang2014,van2013asymptotically}. As we explain in the following subsections, analysis based on the log-likelihood becomes intractable for the latent graphical model and requires stringent technical conditions for the cluster-average graphical model. To overcome this difficulty, we employ pseudo-score functions derived from \eqref{eqloss} and \eqref{eqloss2}. The resulting one-step estimator still attains the information bound established in the literature, and more importantly requires weaker technical assumptions than the existing methods.  In addition to \eqref{eq2}, we derive explicitly the speed at which the normal approximation is attained.

\subsection{Estimation of the Cluster-Average Graph}\label{sec_average}

Recall that we assume $\bZ\sim \cN(0,\bCS)$ and $\bE \sim \cN(0,\bGammaS)$, which implies $\bX \sim \cN(0,\bSigmaS)$ with $\bSigmaS=\Ab\bCS\Ab^T+\bGammaS$. The within-cluster average $\bar{\bX} =: (\bar{X}_1, \ldots, \bar{X}_K)\in\RR^K$ is given by $\bar{X}_k =: \frac{1}{|\Gs{k}|}\sum_{i \in \Gs{k}} X_i$, corresponding to the population level clusters. Because $\bX \sim \cN(0,\bSigmaS)$, we can verify that $\bar \bX \sim \cN(0,\bEssS)$, where
\begin{equation}
\label{eqn:s_star_definition}
\bEssS = \bCS + \bar\bGammaS,
\end{equation}
and $\bar \bGammaS=\textrm{diag}(\bar\gamma^*_{1},...,\bar\gamma^*_K)$ with $\bar\gamma^*_k=\frac{1}{|G_k^*|^2}\sum_{j\in G_k^*}\gamma_j^*$.
Recall that the precision matrix of $\bar\bX$ is
\[
\bOmega^*={\bS^{*}}^{-1} =  (\bCS + \bar\bGammaS)^{-1}.
\]

In this section we give the construction of the estimators  of  the cluster-average graph corresponding to $\bar{\bX}$. Specifically,  we use the generic strategy outlined in the previous section in order to construct  $n^{1/2}$-consistent and asymptotically normal estimators for each  component $\bOmega^{*}_{t,k}$ of the precision matrix $\bOmega^{*}$, for $1\leq t<k\leq K$. For the estimation of each entry, the remaining $K(K+1)/2-1$ parameters in $\bOmega^{*}$ are treated as nuisance parameters.

Since  we observe $n$ i.i.d. samples of $\bX \in \RR^p$,  if the clusters and their number were known, then we would implicitly observe $n$ i.i.d. samples of $\bar \bX \in \RR^K$. A priori, the clusters are not known else this problem would simply reduce to the standard setting. However to explain our method, we first assume that clustering is given, and then show how to lift this assumption.

Following our general principle, we would naturally  tend to choose the negative log-likelihood function of the cluster-averages $(\bar \bX_1,...,\bar\bX_n)$ as the empirical risk function $Q_n(\bbeta)$ in (\ref{eqQn}). Along this line, \cite{jankova2014confidence} proposed the de-biased estimator for Gaussian graphical models.  However, the inference requires the irrepresentable condition \citep{ravikumar2011high} on $\bS^*$, which can be restrictive. The alternative methods proposed by \cite{Ren2013,jankova2017honest} imposed the condition that the largest eigenvalue of $\bS^*$ is bounded. These technical conditions on $\bS^*$ are difficult to justify and can be avoided by using our approach. We propose to estimate each sparse row of $\bOmega^*$ as explained below.

Let $\bar\bS=n^{-1}\sum_{i=1}^n\bar\bX_i\bar\bX_i^T$ denote the sample covariance matrix of $\bar\bX_i$.  When $K$ is small, the maximum likelihood estimator of $\bOmega^*$ is $\bar\bS^{-1}$, which can be viewed as the solution of the following equation $\bar\bS\bOmega-\Ib_K=0$. Thus, in the low dimensional setting, this equation  defines the maximum likelihood estimator. Since we are only interested in $\bOmega^*_{t,k}$, we can extract the $k$th column from the left hand side of the above equation, and use it as the pseudo-score function $\bU_n(\bOmega_{\cdot k})=\bar\bS\bOmega_{\cdot k}-\eb_k$. To apply the inference strategy in Section \ref{sec:general}, we need to  construct a valid empirical  risk function $Q_n(\bOmega_{\cdot k})$ such that   $\nabla Q_n(\bOmega_{\cdot k})=\bU_n(\bOmega_{\cdot k})$.

Simple algebra shows that a possible choice is
\begin{equation}
Q_n(\bOmega_{\cdot k})=\frac{1}{2}\bOmega_{\cdot k}^T\bar\bS\bOmega_{\cdot k}-\eb_k^T\bOmega_{\cdot k}=\frac{1}{n}\sum_{i=1}^n (\frac{1}{2}\bOmega_{\cdot k}^T \bar\bX_i\bar\bX_i^T\bOmega_{\cdot k}-\eb_k^T\bOmega_{\cdot k}),\label{eqloss}
\end{equation}
which we view in the sequel as the empirical risk  corresponding to the population level risk
\begin{equation} \label{risk-ave} \EE Q(\bOmega_{\cdot k},\bar\bX) = \frac{1}{2} \bOmega_{\cdot k}^T \bS^*\bOmega_{\cdot k}  -\eb_k^T\bOmega_{\cdot k}, \end{equation}
based on the loss function
\begin{equation} \label{loss1}
Q(\bOmega_{\cdot k},\bar\bX)=:   \frac{1}{2} \bOmega_{\cdot k}^T \bar\bX\bar\bX^T\bOmega_{\cdot k}  -\eb_k^T\bOmega_{\cdot k}.
\end{equation}

Since \begin{equation}\label{qlike}
\nabla \EE Q(\bOmega^*_{\cdot k},\bar\bX) =\bS^*\bOmega^{*}_{\cdot k}-\eb_k=0
\end{equation}
and
 \begin{equation}\label{qlike2}
\nabla^2 \EE Q(\bOmega_{\cdot k}^*,\bar\bX)=\bS^*,
\end{equation}
then  the population risk $ \EE Q(\bOmega^*_{\cdot k},\bar\bX)$  has the rows $\bOmega^{*}_{\cdot k}$ of the  target precision  matrix  $\bOmega^*$ as the unique minimizers, as desired, provided that $\bS^*$ is positive definite, an assumption we make in Section \ref{sec:main_results:assumption}.

 We note that the choice of the  empirical risk    $Q_n(\cdot)$ and that of the corresponding  pseudo-score $\bU_n(\cdot)$ is not unique. We chose the particular form (\ref{eqloss})  because it is quadratic in $\bOmega_{\cdot k}$, which greatly simplifies the theoretical analysis and leads to weaker technical assumptions. Moreover, the property (\ref{qlike}) is the same as that  of the score function  corresponding to the negative log-likelihood function, supporting our  terminology.

We use the general strategy presented in Section \ref{sec:inference:z_est} to construct estimators that employ the empirical risk  $Q_n(\cdot)$ defined by (\ref{eqloss}) above.
We first recall that $Q_n(\cdot)$ depends on the unknown cluster structure $G^*$ via $\bar\bX_i$. We note that in general the estimated group $\hat G_k$ may differ from $G_k^*$ by a label permutation. For notational simplicity, we ignore this label permutation issue and treat $\hat G_k$ as an estimate of $G_k^*$ (rather than $G_j^*$ for some $j\neq k$).
To define our estimator  of $\bOmega^{*}_{t,k}$, we first replace  $\bar \bX_i$ by $\hat \bX_i$ and denote $\hat\bS=n^{-1}\sum_{i=1}^n\hat\bX_i\hat\bX_i^T$, where $\hat X_{ik}= \frac{1}{|\hat G_k|}\sum_{j \in \hat G_k} X_{ij}$.

Let $(t,k)$ be arbitrary, fixed. Replacing $\bar\bS$ by $\hat\bS$ in $Q_n(\cdot)$, we follow  Section \ref{sec:general} to define the pseudo-score function
\begin{equation}
h(\bOmega_{\cdot k}) = \vb_t^{*T}(\hat\bS\bOmega_{\cdot k} - \eb_k),
\end{equation}
where ${\vb}^*_{t}$ is a $K$-dimensional vector with $(\vb_{t}^*)_t=1$ and $(\vb_{t}^*)_{-t}=-\wb^*_{t}$ with $\wb^*_t=(\bS^*_{-t,-t})^{-1}\bS^*_{-t,t}$, which is consistent with the definition in (\ref{w}) above.
To make inferences based on $h(\bOmega_{\cdot k})$, we further need to estimate $\wb^*_t$ and $\bOmega^*_{\cdot k}$. Following (\ref{eqwd}), an estimate of $\wb^*_{t}$ is given by
\begin{equation}\label{eqw}
\hat \wb_{t}=\argmin \|\wb\|_1, ~~\textrm{s.t}~~\|\hat\bS_{t,-t}-\wb^T\hat\bS_{-t,-t}\|_\infty\leq\lambda',
\end{equation}
where $\lambda'$ is a tuning parameter. Then we can define $\hat \vb_{t}$ accordingly, and
\begin{equation}\label{eqscore}
\hat h(\bOmega_{\cdot k})=\hat\vb_{t}^T(\hat{\bS}\bOmega_{\cdot k}-\eb_k).
\end{equation}
Recall that the construction of the one-step estimator (\ref{eqest}) requires an initial estimator of $\bOmega^*_{\cdot k}$.

To be concrete, we consider the following initial estimator of $\bOmega^*_{\cdot k}$,
\begin{equation}\label{eqclime1}
\hat \bOmega_{\cdot k}=\argmin \|\bbeta\|_1, ~~\textrm{s.t}~~\|\hat\bS\bbeta-\eb_k\|_{\max}\leq\lambda,
\end{equation}
where $\lambda$ is a tuning parameter. This estimator has the same form as the CLIME estimator for the $k$-th column of $\bOmega$ \citep{Cai11a}. However, unlike the CLIME estimator which requires $\lambda \asymp \|\bOmega^*_{\cdot k}\|_1\sqrt{\log K/n}$, in our Theorem \ref{thm:xi_asymptotic} we assume $\lambda =C \sqrt{\log (K\vee n)/n}$, where $C$ only depends on the minimum eigenvalue of $\Cb^*$ and the largest diagonal entries of $\Cb^*$ and $\bGamma^*$ which are assumed bounded by constants in Assumptions \ref{asmp:bounded_latent_covariance} and \ref{asmp:bounded_errors}. With this choice of $\lambda$, we show in Lemma \ref{lem:group_averages_consistency} in Appendix \ref{sec:main_proofs_ca} that
\begin{equation}\label{eqrateomega}
\|\hat\bOmega_{\cdot k}-\bOmega_{\cdot k}^*\|_1\lesssim s_1\sqrt{\frac{\log (K\vee n)}{n}},
\end{equation}
with high probability, where $s_1$ is the sparsity level of $\bOmega_{\cdot k}^*$. The sharp concentration of the gradient and Hessian of the empirical risk function provided in Lemma \ref{lem:group_averages_gradient_hessian} is the key for this result.  As a comparison, Theorem 6 in \cite{Cai11a} only implies $\|\hat\bOmega_{\cdot k}-\bOmega_{\cdot k}^*\|_1\lesssim s_1\|\bOmega_{\cdot k}^*\|_1^2\sqrt{\frac{\log K}{n}}$.  For many sparse matrices, the $\ell_1$ norm of a column, $\|\bOmega_{\cdot k}^*\|_1$, can grow to infinity with $K$ or $s_1$, and thus \eqref{eqrateomega} gives a faster rate. 

In this case, and when $\lambda_{\min}(C^*) > c$,  Lemma \ref{lem:group_averages_gradient_hessian} is instrumental in showing that the extra $\|\bOmega_{\cdot k}^*\|_1^2$ factor in  the rate of the original CLIME estimator  can be avoided, whereas if only the marginal components of $\bX$ are assumed to be  sub-Gaussian, as in  \cite{Cai11a},  it may be unavoidable,  without further conditions on $\bOmega^{*}$. We direct the reader to Appendix \ref{sec:indepth_comparison} for a more detailed discussion of the distinction between our results and those in \citet{Cai11a,cai2016estimating}.

Leveraging the block matrix inverse formula, we can show that the partial pseudo information matrix reduces to $\Ib_{1|2}=1/\Omega_{t,t}^{*}$. Finally, the one-step estimator is defined as
\begin{equation}\label{eqOmega}
\tilde\Omega_{t,k}= \hat\Omega_{t,k}-\hat h(\hat\bOmega_{\cdot k}) \hat\Omega_{t,t},
\end{equation}
in accordance with \eqref{eqest}. In Section \ref{sec:main_results}, we show that under mild regularity conditions $n^{1/2}(\tilde\Omega_{t,k}-\Omega_{t,k}^*)\leadsto N(0,s_{tk}^2)$, where $s_{tk}^2=\Omega_{t,k}^{*2} + \Omega^*_{t,t}\Omega^*_{k,k}$. If $\hat s_{tk}^2=\hat\Omega_{t,k}^2 + \hat\Omega_{t,t}\hat\Omega_{k,k}$ is a consistent  estimator of the asymptotic variance, then a $(1-\alpha)\times 100\%$ confidence interval for $\Omega_{t,k}$ is
\[
[\tilde\Omega_{t,k}-z_{1-\alpha/2}\hat s_{tk}/n^{1/2}, \tilde\Omega_{t,k}+z_{1-\alpha/2}\hat s_{tk}/n^{1/2}],
\]
where $z_{\alpha}$ is the $\alpha$-quantile of a standard normal distribution.

Equivalently, we can use the scaled test statistics $\tilde{\Omega}_{t,k}$ to construct a test for $H_0: \Omega^*_{t,k}=0$ versus $H_1: \Omega^*_{t,k} \neq 0$ with $\alpha$ significance level. Namely, the null hypothesis is rejected if and only if the above $(1-\alpha)\times 100\%$ confidence interval does not contain $0$. We will employ such  tests in Section  \ref{sec:main_results}.

\subsection{Latent Variable Graph}\label{sec_LVG}

Recall that the structure of the latent variable graph is encoded by the sparsity pattern of $\bThetaS ={\bCS}^{-1}$, which is generally different from the cluster-average group as ${\bCS}^{-1}$ and $\bOmega^*=(\bCS + \bar\bGamma^*)^{-1}$ may have different sparsity patterns.
In this section,  we focus on the inference on the component $\ThetaS{t}{k}$, for some $1\leq t<k\leq K$. Similar to the cluster-average graph, we first discuss the likelihood approach. The negative log-likelihood corresponding to model \eqref{eqn:g_latent_model} indexed by the parameter $(\bTheta, \bGamma)$  is, up to some additive and multiplicative constants,
$$
\ell(\bTheta, \bGamma)= \log|(\Ab\bTheta^{-1}\Ab^T+\bGamma)| + \tr(\hat\bSigma(\Ab\bTheta^{-1}\Ab^T+\bGamma)^{-1}),
$$
where  $\hat{\bSigma}=n^{-1}\sum_{i=1}^n \bX_i\bX^T_i$.
It is straightforward to show that the Fisher information matrix for $(\bTheta, \bGamma)$ is given by
\begin{align}
\Ib(\bTheta, \bGamma) &= \begin{bmatrix}(\Mb^*\Ab^T\bGamma^{*-1}\bSigma^*\bGamma^{*-1}\Ab\Mb^*)^{\otimes 2} & (\Mb^*\Ab^T\bGamma^{*-1}\bSigma^{-1}\Fb^{*T})^{\otimes 2}\Db_d \\ \Db_d^T(\Fb^*\bSigma^{*-1}\bGamma^{*-1}\Ab\Mb^*)^{\otimes 2} & \Db_d^T(\Fb^*\bSigma^{*-1}\Fb^{*T})^{\otimes 2}\Db_d\end{bmatrix}, \label{eqinforlatent}
\end{align}
where $\Db_d = (\Ib_d \otimes 1_d^T)\circ(1_d^T \otimes \Ib_d)$, $\Mb^*=(\bTheta^*+\Ab^T\bGamma^{*-1}\Ab)^{-1}$ and $\Fb^*=\Ib_d -\Ab\Mb^*\Ab^T\bGamma^{*-1}$.
As seen in Section \ref{sec:general}, the inference based on the likelihood or equivalently efficient score function (\ref{eqeff}) requires the estimation of $\Ib_{12}\Ib_{22}^{-1}$ which,  given the complicated structure of the information matrix (\ref{eqinforlatent}), becomes analytically intractable.

A solution to this  problem is inference  based on an empirical risk function similar to (\ref{eqloss}), but tailored  to  the latent variable graph. With a slight abuse of notation, and reasoning as in (\ref{qlike}) and (\ref{qlike2}),  we notice that, for each $k$,
\begin{equation} \label{risk-latent} \EE Q(\bTheta_{\cdot k},\bX) =  \frac{1}{2}\bTheta_{\cdot k}^T \bCS \bTheta_{\cdot k}  -\eb_k^T\bTheta_{\cdot k}, \end{equation}
has  the target $\bTheta^{*}_{\cdot k}$ as a unique minimizer, where the loss function  $Q(\bTheta_{\cdot k},\bX)$ is defined as
\begin{equation} \label{loss-latent}
Q(\bTheta_{\cdot k},\bX)=\frac{1}{2}\bTheta_{\cdot k}^T \bar\Cb \bTheta_{\cdot k}  -\eb_k^T\bTheta_{\cdot k},
\end{equation}
and the matrix $\bar \Cb :=(\bar C_{jk})_{j,k}$ has entries
\begin{equation} \label{eqCi}
\bar C_{jk}=\frac{1}{|G^*_j||G^*_k|}\sum_{a\in  G^*_j, b\in  G^*_k} (X_a X_b-\bar{\Gamma}_{ab}),
\end{equation}
and $\bar{\Gamma}_{ab}=0$ if $a\neq b$ and $\bar{\Gamma}_{aa}=X_aX_a-\frac{1}{|G^*_k|-1}\sum_{a\in G^*_k, a\neq j}X_aX_j$. Since $\EE(\bar \Cb)=\Cb^*$,   the risk  relative to the  loss function in (\ref{loss-latent}) is indeed (\ref{risk-latent}), and   the empirical risk  is
\begin{equation}\label{eqloss2}
Q_n(\bTheta_{\cdot k})=\frac{1}{n}\sum_{i=1}^n (\frac{1}{2}\bTheta_{\cdot k}^T \bar\Cb^{(i)} \bTheta_{\cdot k}  -\eb_k^T\bTheta_{\cdot k}),
\end{equation}
where $\bar\Cb^{(i)}$ is obtained by replacing $\bX$ in $\bar C_{jk}$ by $\bX_i$. Similar to the cluster-average graph, $Q_n(\bTheta_{\cdot k})$ also depends on the unknown cluster structure. We estimate  $G^*_k$ by $\hat G_k$, and define $\hat\bGamma=(\hat\Gamma_{ab})$, where $\hat{\Gamma}_{ab}=0$ if $a\neq b$ and $\hat{\Gamma}_{aa}=\frac{1}{n}\sum_{i=1}^n(X_{ia}X_{ia}-\frac{1}{|\hat G_k|-1}\sum_{a\in \hat G_k, a\neq j}X_{ia}X_{ij})$, and $\hat\Cb=(\hat C_{jk})$ where $\hat C_{jk}=\frac{1}{n}\sum_{i=1}^n(\frac{1}{|\hat G_j||\hat G_k|}\sum_{a\in  \hat G_j, b\in \hat G_k} (X_{ia} X_{ib}-\hat{\Gamma}_{ab}))$.   We replace $\bar \Cb$ by $\hat\Cb$ in (\ref{eqloss2}) above and  follow exactly the strategy of Section \ref{sec_average}, with $\hat \bS$ replaced by $\hat \Cb$,  to construct the corresponding pseudo-score function $\hat h(\bTheta_{\cdot k})$, similarly to (\ref{eqscore}),
 and the initial estimator $\hat \bTheta_{\cdot k}$, similarly to (\ref{eqclime1}). We combine these quantities, following  the general strategy (\ref{eqest}), as above,  to obtain the final one-step estimator of $\Theta_{t,k}^*$, defined as
\begin{equation}\label{tteta}
\tilde\Theta_{t,k}= \hat\Theta_{t,k}-\hat h(\hat\bTheta_{\cdot k}) \hat\Theta_{t,t},
\end{equation}
after observing that, in this case, $\Ib_{1|2}=1/ {\bTheta}^{*}_{t,t}$.

Although the form of this estimator is similar to (\ref{eqOmega}), derived for the cluster-average graph,  the study of  the asymptotic normality of $\tilde\Theta_{t,k}$  reveals that its asymptotic variance  is much more involved, as will be discussed in detail in Section \ref{hard}.

\section{Main Theoretical Results}
\label{sec:main_results}

\subsection{Assumptions}
\label{sec:main_results:assumption}
In this section we state the two  assumptions under which all our results are proved.

\begin{assumption}\label{asmp:bounded_latent_covariance}
The covariance matrix $\bCS$ of $\bZ$ satisfies: $c_1 \leq \lmin{\bCS}$ and $\max_{t}\CS{t}{t} \leq c_2$, for some absolute constants $c_1, c_2 > 0$.
\end{assumption}
\begin{assumption}\label{asmp:bounded_errors}
The matrix $\bGammaS$ satisfies: $\max_{ 1 \leq i \leq d} \gamma^*_i \leq c_3$ for some absolute constant $c_3 > 0$, where $\gammaS{i}$ are the entries of the diagonal matrix $\bGammaS$.
\end{assumption}

Assumptions \ref{asmp:bounded_latent_covariance}  and \ref{asmp:bounded_errors} are minimal conditions for inference on precision matrices. Furthermore, they imply the conditions needed for clustering consistency derived in  \cite{Bunea2018} and discussed in Section \ref{sec:introduction}, for $n$ sufficiently large.  Their work only requires that $\lmin{\bCS}$ is bounded from below by a  sequence that converges to zero, as soon as $\|\bSigma^*\|_{\max}$  and $\|\bGamma^*\|_{\max}$ are bounded. This is strengthened by our assumptions. In general, a constant lower bound on $\bCS$ is standard in any inference on graphical models and is needed to show  the asymptotic normality of the estimator introduced above \citep{Ren2013,jankova2014confidence,jankova2017honest}.

\subsection{Asymptotic Normality via Berry-Esseen-type  Bounds}\label{sec_theorem}

\subsubsection{Results for the  Cluster-Average Graph}

In the section, we show that  the estimators $\tilde\Omega_{t,k}$ given by (\ref{eqOmega})  are  asymptotically normal,  for all $t<k$ . We define the sparsity of the cluster-average graph as $s_1 \in \NN$ such that
\[
\max_{1\leq j\leq K}\sum_{k=1}^K \II(\Omega^*_{j,k} \neq 0)\leq s_1.
\]
Recall that the estimators (\ref{eqw}) and (\ref{eqclime1}) depend on the tuning parameters $\lambda$ and $\lambda'$. In the following theorem, we choose $\lambda \asymp \lambda' \asymp \sqrt{\frac{\log (K\vee n)}{n}}$.  For notational simplicity, we use $C$ to denote a generic constant, the  value of which may change from line to line.

\begin{theorem}
\label{thm:xi_asymptotic}
If Assumptions \ref{asmp:bounded_latent_covariance} and \ref{asmp:bounded_errors} hold, we have
\begin{equation}\label{eqxi_asymptotic2}
\max_{1\leq t< k\leq K}\sup_{x \in \RR}\Big|\PP(\hat T_{t,k} \leq x )-\Phi(x)\Big|\leq \frac{C}{(d \vee n)^3} +  \frac{Cs_1\log (K\vee n)}{n^{1/2}}+  \frac{C}{(K\vee n)^3}
\end{equation}
where $\hat T_{t,k} = \frac{n^{1/2}(\tilde\Omega_{t,k}-\Omega^*_{t,k})}{\hat s_{tk}}$, $\hat s_{tk}^2=\hat\Omega_{t,k}^2 + \hat\Omega_{t,t}\hat\Omega_{k,k}$ and $C$ is a positive constant.
\end{theorem}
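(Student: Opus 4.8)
The plan is to establish \eqref{eqxi_asymptotic2} by reducing, via the one-step construction \eqref{eqOmega}, to a central limit theorem for a sum of i.i.d.\ (or nearly i.i.d.) terms, controlled quantitatively by a Berry--Esseen bound, and then absorbing all approximation errors—those from replacing $\bar\bS$ by $\hat\bS$ (clustering), from the CLIME-type pilot $\hat\bOmega_{\cdot k}$, and from the Dantzig-type estimate $\hat\wb_t$—into the remainder terms $s_1\log(K\vee n)/n^{1/2}$ and $(d\vee n)^{-3}$, $(K\vee n)^{-3}$. First I would decompose $n^{1/2}(\tilde\Omega_{t,k}-\Omega^*_{t,k})$ using \eqref{eqOmega}, \eqref{eqscore}, and the facts $\Ib_{1\mid2}=1/\Omega^*_{t,t}$, $\nabla^2\EE Q=\bS^*$, $\bS^*\bOmega^*_{\cdot k}=\eb_k$. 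Writing $\hat\bS=\bar\bS+(\hat\bS-\bar\bS)$ and Taylor-expanding around $\bOmega^*_{\cdot k}$, the leading term is $-\Omega^*_{t,t}\,\veetST(\bar\bS-\bS^*)\bOmega^*_{\cdot k}\cdot n^{1/2} = n^{-1/2}\sum_i \veetST(\bS^*-\bar\bX_i\bar\bX_i^T)\bOmega^*_{\cdot k}$, an i.i.d.\ sum of mean-zero terms whose variance is $\Omega^*_{t,t}\Omega^*_{k,k}+\Omega^{*2}_{t,k}=\Omega^{*2}_{t,t}s_{tk}^2/\Omega^{*2}_{t,t}$; a direct Gaussian moment computation (using $\bar\bX\sim\cN(0,\bS^*)$ and $\veetST\bS^*=\Omega^*_{t,t}\eb_t^T/\Omega^*_{t,t}$, i.e.\ $\veetST\bS^*\bOmega^*_{\cdot k}=\Omega^*_{t,k}/\Omega^*_{t,t}$) identifies the variance and shows the third absolute moment is bounded by a constant under Assumptions~\ref{asmp:bounded_latent_covariance}--\ref{asmp:bounded_errors}. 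The classical Berry--Esseen theorem for i.i.d.\ sums then yields the $\Phi(x)$ approximation with error $O(n^{-1/2})$, which is dominated by $s_1\log(K\vee n)/n^{1/2}$.

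Next I would bound the remainder terms of the expansion. These are of three types: (i) the quadratic Taylor remainder $\propto (\hat\bOmega_{\cdot k}-\bOmega^*_{\cdot k})^T\hat\bS(\hat\bOmega_{\cdot k}-\bOmega^*_{\cdot k})$-type quantities, controlled by $\|\hat\bOmega_{\cdot k}-\bOmega^*_{\cdot k}\|_1^2\|\hat\bS\|_{\max}\lesssim s_1^2\log(K\vee n)/n$ using the rate \eqref{eqrateomega} from Lemma~\ref{lem:group_averages_consistency} and a $\|\hat\bS-\bS^*\|_{\max}\lesssim\sqrt{\log(K\vee n)/n}$ concentration bound (Lemma~\ref{lem:group_averages_gradient_hessian}); after multiplying by $n^{1/2}$ this is $O(s_1^2\log(K\vee n)/n^{1/2})$—here I should check whether the stated bound needs $s_1^2$ sharpened to $s_1$, or whether the remainder genuinely gives $s_1$ by exploiting the KKT conditions of \eqref{eqclime1}, which is the route taken in comparable proofs (e.g.\ a cross term $\hat h$-times-error identity rather than a crude Cauchy--Schwarz); (ii) the term $(\hat\wb_t-\wb^*_t)^T(\hat\bS_{-t,-t}\hat\bOmega_{\cdot k}^{(-t)}-\ldots)$, handled by Hölder: $\|\hat\wb_t-\wb^*_t\|_1$ is $O(s_w\sqrt{\log(K\vee n)/n})$ and the companion factor is the Dantzig feasibility slack $\le\lambda'$, giving another $O(s_1\log(K\vee n)/n^{1/2})$-type term; (iii) the clustering error $\hat\bS-\bar\bS$, which I would show is zero on the event $\widehat G=G^*$ of probability $\ge1-1/(d\vee n)$ guaranteed by the results of \cite{Bunea2016a} reviewed in Section~\ref{sec:introduction:glatent} (Assumptions~\ref{asmp:bounded_latent_covariance}--\ref{asmp:bounded_errors} imply the required $\lmin{\bCS}\ge c$ for $n$ large), so its contribution is the probability of the complement, i.e.\ the $(d\vee n)^{-3}$ term after a union bound over the $\binom{K}{2}$ pairs is itself absorbed (the cubic power comes from choosing the constant in the clustering/concentration tail appropriately, and the extra $(K\vee n)^{-3}$ term collects the failure probabilities of the Lemma~\ref{lem:group_averages_consistency}--type events).

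Finally I would pass from $s_{tk}^2$ to $\hat s_{tk}^2=\hat\Omega_{t,k}^2+\hat\Omega_{t,t}\hat\Omega_{k,k}$ by showing $|\hat s_{tk}^2/s_{tk}^2-1|\lesssim s_1\sqrt{\log(K\vee n)/n}$ with high probability (again via \eqref{eqrateomega} and the boundedness of $\bOmega^*$ entries coming from $\lmin{\bCS}\ge c_1$ and $\max_t C^*_{t,t}\le c_2$, $\max_i\gamma^*_i\le c_3$, which bound $\bS^*$ and hence $\lmin{\bS^*}$ from below and $\lmax{\bS^*}$ from above), and then using the elementary inequality $\sup_x|\Phi(ax)-\Phi(x)|\lesssim|a-1|$ to convert studentization into a further additive error of the same order. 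Taking a maximum over all $t<k$ via a union bound, with the tail probabilities chosen at level $(d\vee n)^{-3}$ and $(K\vee n)^{-3}$ so that $K^2$ times them is still of that order, assembles the three stated terms. \textbf{The main obstacle} I anticipate is step (i): getting the pilot-estimator remainder down to the claimed $s_1\log(K\vee n)/n^{1/2}$ rather than a crude $s_1^2\log(K\vee n)/n^{1/2}$. This requires not bounding the Taylor remainder by brute force but instead using the Dantzig/KKT optimality of both $\hat\bOmega_{\cdot k}$ and $\hat\wb_t$ to cancel the worst cross-terms—essentially the argument that makes the de-biasing work—together with the sharpened CLIME rate \eqref{eqrateomega}, whose proof (Lemma~\ref{lem:group_averages_consistency}) already exploits the Gaussian structure to kill the spurious $\|\bOmega^*_{\cdot k}\|_1^2$ factor; the quadratic-in-$\bOmega$ form of the loss \eqref{eqloss} is what makes this clean, since then $\nabla^2Q_n=\hat\bS$ is constant and there is no higher-order Taylor remainder at all.
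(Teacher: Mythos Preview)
Your proposal is correct and follows essentially the same route as the paper: reduce to the event $\hat G=G^*$ (Lemma~\ref{lemcluster}, giving the $(d\vee n)^{-3}$ term), decompose $n^{1/2}(\tilde\Omega_{t,k}-\Omega^*_{t,k})+\Omega^*_{t,t}n^{1/2}h(\bOmega^*_{\cdot k})$ into cross terms bounded via H\"older by $\|\hat\bOmega_{\cdot k}-\bOmega^*_{\cdot k}\|_1$ or $\|\hat\vb_t-\vb^*_t\|_1$ (Lemma~\ref{lem:group_averages_consistency}) times an $\ell_\infty$ factor of size $\sqrt{\log(K\vee n)/n}$ coming either from concentration (Lemma~\ref{lem:group_averages_gradient_hessian}) or from the CLIME feasibility constraint, apply Berry--Esseen to the leading i.i.d.\ sum (Lemma~\ref{lem:group_averages_clt}), and finally studentize via Proposition~\ref{prop:convergence_group_averages_variance} and the bound $\sup_x|\Phi(x(1+r))-\Phi(x)|\lesssim r$. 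Your self-identified ``obstacle'' about $s_1^2$ versus $s_1$ is indeed a non-issue for exactly the reason you give at the end: because the loss \eqref{eqloss} is quadratic, $h(\cdot)$ is linear in $\bOmega_{\cdot k}$, so there is no genuine second-order Taylor remainder and every surviving term is a product (pilot $\ell_1$-error)$\times$($\ell_\infty$ slack), yielding $s_1\log(K\vee n)/n^{1/2}$ directly.
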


Theorem \ref{thm:xi_asymptotic}, proved in Appendix \ref{sec:main_proofs_ca},  gives the rate of the normal approximation of the distribution of the scaled and centered  entries $\tilde\Omega_{t,k}$.  The right hand side in (\ref{eqxi_asymptotic2}) is non-asymptotic and is valid for each $K$, $n$ and $d$.   Its first, small, term is the price to pay for having first used the data for clustering, and it is dominated by the other two terms. From this perspective, the clustering step is the least taxing, as long as we can ensure its consistency, which in turn can be guaranteed under the minimal assumptions \ref{asmp:bounded_latent_covariance} and \ref{asmp:bounded_errors} already needed  for the remaining steps.

The second, and dominant, term regards  the normal approximation of  the distribution of \begin{equation}\label{unscaled} {n^{1/2}(\tilde\Omega_{t,k}-\Omega^*_{t,k})}.\end{equation} Specifically,  as an intermediate step, Proposition \ref{prop:asymptotic_normality_group_averages} in Appendix \ref{sec:main_proofs_ca}  shows that the  difference between the c.d.f. of (\ref{unscaled}), scaled by  $s_{tk} = \sqrt{\Omega_{t,k}^{*2} + \Omega^*_{t,t}\Omega^*_{k,k}}$, and that of a standard Gaussian random variable is bounded by  $\frac{s_1\log (K\vee n)}{n^{1/2}}$. Therefore, asymptotic normality holds as soon as this quantity converges to zero, which agrees with the weakest sparsity conditions for Gaussian graphical model inference in the literature \citep{Ren2013,jankova2017honest}. In addition, the asymptotic variance $s_{tk}^2$ agrees with the minimum variance bound in Gaussian graphical models \citep{jankova2017honest}. Thus, inference based on  the empirical risk function (\ref{eqloss}) does not lead to any asymptotic efficiency loss. Unlike the previous works, we do not require the bounded operator norm condition, $\lambda_{\max}(\bS^*)\leq C$. This condition is avoided in our analysis by using a more convenient empirical risk function (\ref{eqloss}), as opposed to the log-likelihood in \cite{jankova2014confidence}, and a CLIME-type initial estimator (\ref{eqclime1}) satisfying (\ref{eqrateomega}), as opposed to the node-wise Lasso estimator in \cite{jankova2017honest}.

The last term in the normal approximation is $O((K\vee n)^{-3})$ which is dominated by the second one, and is associated with the replacement of the true variance $s_{tk}^{2}$ by the estimate $\hat s_{tk}^2$. Finally, we note that the powers of the first and the third term in the right hand side of (\ref{eqxi_asymptotic2}) can be replaced by $2 +\delta$, for any $\delta > 0$, and a change in this power also changes the associated constant $C$ in the term $\frac{Cs_1\log (K\vee n)}{n^{1/2}}$. As shown in Theorem \ref{thm:fdr_bound_av}, to obtain valid FDR control, we need $K^2/(K\vee n)^{2+\delta}=o(1)$, which holds for any $\delta>0$. For simplicity, we choose $\delta=1$ which gives the power 3.

\subsubsection{Results for the Latent Variable Graph}\label{hard}
In this section we show that the estimators $\tilde{\Theta}_{t,k}$ given by (\ref{tteta}) are asymptotically normal, for all $t<k$. We define the sparsity of the latent graph as $s_0 \in \NN$ such that
\[
\max_{1\leq j\leq K}\sum_{k=1}^K \II(\ThetaS{j}{k} \neq 0)\leq s_0.
\]
Inference for  the estimator $\tilde\Theta_{tk}$  follows the  general approach outlined in Section  \ref{sec:inference:z_est}.  We prove in Proposition \ref{prop:asymptotic_normality_latent} in Appendix \ref{sec:main_proofs_lvg} that
\begin{equation}
\label{key}
n^{1/2}(\tilde\Theta_{t,k}-\Theta_{t,k}^*)=\frac{1}{n^{1/2}}\sum_{i=1}^n \Theta_{t,t}^*\vb_t^{*T}(\bar \Cb^{(i)}\bTheta^*_{\cdot k}-\eb_k)+o_p(1),
\end{equation}
where ${\vb}^*_{t}$ is a $K$-dimensional vector with $(\vb_{t}^*)_t=1$ and $(\vb_{t}^*)_{-t}=-\wb^*_{t}$ with $\wb^*_t=(\Cb^*_{-t,-t})^{-1}\Cb^*_{-t,t}$. and $\bar \Cb^{(i)}$ is defined in (\ref{eqCi}).  The terms of  the sum in  display (\ref{key}) are mean zero random variables,  and their  variance is
$$
\sigma^2_{tk}=\EE(\Theta_{t,t}^*\vb_t^{*T}(\bar \Cb^{(i)}\bTheta^*_{\cdot k}-\eb_k))^2,
$$
which does not have an explicit  closed form, unlike  the asymptotic variance of  the estimates of the entries of $\bOmega^*$. However,
we show in Proposition \ref{lem:latent_variable_variance} in Appendix \ref{sec:main_proofs_lvg} that $\sigma^2_{tk}$ admits an approximation that is easy to estimate:
$$
\Big|\sigma^2_{tk}-[(\ThetaS{t}{k})^2 + \ThetaS{k}{k}\ThetaS{t}{t}]\Big|\lesssim \frac{s_0}{m},
$$
where $m=\min_{1 \leq k \leq K} | \Gs{k} |$.
Guided by this approximation,  we  estimate  $\sigma^2_{tk}$ by
$$
\hat\sigma^2_{tk}=\hat\Theta_{t,k}^2 + \hat\Theta_{k,k}\hat\Theta_{t,t}.
$$
When all clusters have the equal size, we obtain $K=d/m$.  Thus the $O(\frac{s_0}{m})$ terms can be ignored asymptotically in the sense that $\frac{s_0}{m}=\frac{s_0K}{d}\leq \frac{K^2}{d}=o(1)$, when the clusters are approximately balanced, and their number  satisfies $K^2=o(d)$. This is a reasonable assumption in most applied clustering problems.  We note that the estimator $\hat\sigma^2_{tk}$ may be inconsistent when the size of some clusters is  too small.  However, we recall that our ultimate goal is to use these estimators for recovering the sparsity pattern of $\bThetaS$ under FDR control. To evaluate the sensitivity of our overall procedure  to the size of the smallest cluster, we conduct simulation studies in Section \ref{sec:numerical}. The results shows that the proposed method works well as soon as  $m>4$.

The following theorem gives  the Berry--Esseen normal approximation bound for the estimators of the entries of the precision matrix corresponding to the latent variable graph.

\begin{theorem}
\label{thm:theta_asymptotic}
If Assumptions \ref{asmp:bounded_latent_covariance} and \ref{asmp:bounded_errors} hold, then
\begin{equation}
\label{eqtheta_asymptotic2}
\max_{1\leq t< k\leq K}\sup_{x \in \RR}\Big|\PP(\hat T_{t,k} \leq x)-\Phi(x)\Big|\leq \frac{C}{(d \vee n)^3} + \frac{C}{(K\vee n)^3}+\frac{Cs_0\log (K\vee n)}{n^{1/2}}+\frac{Cs_0}{m},
\end{equation}
where $\hat T_{t,k} = \frac{n^{1/2}(\tilde\Theta_{t,k}-\Theta^*_{t,k})}{\hat\sigma_{tk}}$ and $C$ is a positive constant.
\end{theorem}

Compared to the average graph, the Berry-Esseen bound in (\ref{eqtheta_asymptotic2}) contains an additional $\cO(\frac{s_0}{m})$ term, stemming from the approximation of the analytically intractable asymptotic variance by an estimable quantity. The proof is deferred to Appendix \ref{sec:main_proofs_lvg}.

\subsection{Application to Post-clustering FDR Control }
\label{sec:main_results:fdr_control}
Given the edge-wise inferential results for the cluster-average  and latent variable graphs established above, we explain in this section  how to combine them to control graph-wise inferential uncertainty. Specifically, we view the task of recovering the sparsity pattern as a multiple testing problem by selecting:
\begin{align}\label{eq:multtest1}
    \Hb_{0;tk}: \Omega^*_{t,k} =0 \quad {\rm vs. } \quad \Hb_{1;tk}: \Omega^*_{t,k} \neq 0 \quad \text{for all } 1\leq t<k\leq K,
\end{align}
for the cluster-average graph, and
\begin{align}\label{eq:multtest2}
    \Hb_{0;tk}^{'}: \ThetaS{t}{k} =0 \quad {\rm vs. } \quad \Hb_{1;tk}^{'}: \ThetaS{t}{k} \neq 0 \quad \text{for all } 1\leq t<k\leq K.
\end{align}
for the latent variable graph. In the following, we apply the B-Y procedure by  \cite{Benjamini2001}  for FDR control in the cluster-average graph. The procedure for latent variable graph is identical. In this section, we are not claiming to develop a different FDR procedure; rather we make the simple point that if you can obtain asymptotic $p$-values of $d$ dependent statistics we can combine them in standard ways to control the FDR, which is a direct consequence of the Berry-Esseen bounds derived in Section \ref{sec_theorem}.

Define the set of true null hypotheses, $\cH_{0}\coloneqq \{(t,k):\, 1\leq t<k\leq K,\textrm{ such that } \Omega^*_{t,k} = 0\}$, as the set of indices $(t,k)$  for which there is no edge between the nodes $t$ and $k$. To control the error incurred by  multiple testing, we focus on the false discovery rate (FDR), which is the average number of Type I errors relative to the total number of discoveries  \citep{BH95}. Recall that $\tilde\Omega_{t,k}$ is a consistent and asymptotically normal estimator of $\Omega_{t,k}$.  We consider the natural test statistic $\tilde W_{t,k}=n^{1/2}\tilde\Omega_{t,k}/\hat s_{tk}$ for $\Hb_{0;tk}$, where $\hat s^2_{tk}=\hat\Omega_{t,k}^2 + \hat\Omega_{t,t}\hat\Omega_{k,k}$. Given a cutoff $\tau>0$, the total number of discoveries is
\[
R_{\tau} := \sum_{1\leq t < k \leq K} \II[|\tilde W_{t,k}| > \tau ].
\]
Similarly, the number of false positives or false discoveries is given by
\[
V_{\tau} := \sum_{(t, k) \in\cH_{0}} \II[|\tilde W_{t,k}| > \tau ].
\]
The FDR is formally defined as the expected ratio of $V_\tau$ over $R_{\tau}$,
\[
\FDR(\tau) := \EE\left[\frac{V_\tau}{R_\tau}\II[R_\tau > 0] \right],
\]
where the indicator function is included to remove the trivial case $R_\tau=0$.

Our goal is to find a data-dependent cutoff $\tau$ such that $\FDR(\tau)\leq \alpha+o(1)$ for any given $0 < \alpha < 1$.  This is the best one can hope for when, as in our case, the distribution of the test statistics $\tilde W_{t,k}$ is only available asymptotically.  The Berry-Esseen type bounds, derived in Theorems \ref{thm:xi_asymptotic} and \ref{thm:theta_asymptotic}, allow us to precisely quantify the price we must pay for the asymptotic approximation and are instrumental to understanding asymptotic FDR control.

In addition, the test statistics $\tilde W_{t,k}$ for different hypotheses are dependent. To allow for the dependence, instead of the standard B-H procedure \citep{BH95}, we consider the more flexible B-Y procedure by \cite{Benjamini2001}. The resulting FDR procedure is as follows: reject all hypotheses such that $|\tilde W_{t,k}| \geq \hat\tau$, where
\begin{equation}
\label{eqn:selection_rule_av}
\hat \tau := \min\left\{\tau > 0 : \tau \geq  \Phi^{-1}\left(1-\frac{\alpha R_{\tau}}{2N_{BY}|\cH|} \right) \right\} \text{ and } N_{BY} = \sum_{i=1}^{|\cH|} \frac{1}{i},
\end{equation}
where $|\cH|=K(K-1)/2$ is the total number of hypotheses.

Our next result shows when the FDR  based on our test statistics is guaranteed to be no greater than $\alpha$, asymptotically. The proofs can be found in Appendix \ref{sec:main_proofs_ca}.

\begin{theorem}
\label{thm:fdr_bound_av}~~
\begin{enumerate}
\item Assume that the conditions in Theorem \ref{thm:xi_asymptotic} hold. For any $0 < \alpha < 1$, we have
\begin{equation}
\label{eqfdr}
\FDR(\hat \tau) \leq \alpha +2 |\cH_0|b_n,
\end{equation}
where $b_n=\frac{C}{(d\vee n)^3}+\frac{C}{(K\vee n)^3}+\frac{Cs_1\log (K\vee n)}{n^{1/2}}$, and $|\cH_0|$ is the number of true null hypotheses in (\ref{eq:multtest1}).
\item Assume that the conditions in Theorem \ref{thm:theta_asymptotic} hold. If we define the test statistic as $\tilde W_{t,k}=n^{1/2}\tilde\Theta_{t,k}/\hat \sigma_{tk}$, and $\hat\tau$ as in (\ref{eqn:selection_rule_av}), we have
\begin{equation}
\label{eqfdr_av}
\FDR(\hat \tau) \leq \alpha +2 |\cH'_0|c_n,
\end{equation}
where $c_n=\frac{C}{(d\vee n)^3}+\frac{C}{(K\vee n)^3}+\frac{Cs_0\log (K\vee n)}{n^{1/2}}+\frac{Cs_0}{m}$, and $|\cH'_0|$ is the number of true null hypotheses in (\ref{eq:multtest2}).
\end{enumerate}
\end{theorem}

This theorem implies that our method can control  the FDR asymptotically, in the sense that $\FDR(\hat\tau)\leq \alpha+o_p(1)$, provided  that $s_1|\cH_0|\log (K\vee n)=o(n^{1/2})$, for the average graph, and  $s_0|\cH^{'}_0|\log (K\vee n)=o(n^{1/2})$ for the latent graph.   Gaussian graphical model  estimation under FDR control was   recently studied by \cite{liu2013gaussian}. They showed that the B-H procedure can control FDR asymptotically under certain conditions. Their approach is based on the following Cramer-type moderate deviation result using our terminology,
\begin{equation}
\label{eqdeviation}
\max_{(t,k)\in\cH_0}\sup_{0\leq t\leq 2\sqrt{\log K}} \Big|\frac{\PP(\hat T_{t,k}\geq t)}{2-2\Phi(t)}-1\Big|=o(1),
\end{equation}
where $\hat T_{t,k}$ is test statistic they proposed for estimation  of the  Gaussian graphical model structure. The result \eqref{eqdeviation} controls the relative error of the Gaussian approximation within the moderate deviation regime $[0, 2\sqrt{\log K}]$, whereas our result is based on the control of the absolute error via the Berry-Esseen-type Gaussian approximation.  One of the main advantages of their result  is that the number of clusters is allowed to be  $K=o(n^r)$, where $r$ is a constant that can be greater than 1. However, to prove \eqref{eqdeviation}, they required that the number of strong signals tends to infinity, that is  $|\{(t,k): \Omega_{t,k}^*/\sqrt{\Omega_{k,k}^*\Omega_{t,t}^*}\geq C\sqrt{\log K/n}\}|\rightarrow\infty$, which reduces significantly the parameter space for which inference is valid. In contrast, the  aim of this work is the study of pattern recovery without conditions on the signal strength of the entries of the target precision matrices, as  in practice it is difficult to assess whether these conditions are met. For completeness, we provide the detailed analysis of the B-H procedure including technical conditions, theoretical results and further discussion of the B-H procedure in Appendix \ref{app:FDR}.

The overall message conveyed by Theorem \ref{thm:fdr_bound_av} is that, in the absence of any signal strength assumptions,  cluster-based graphical models can still be  recovered, under FDR control,  provided that  the number of clusters $K$ is not very high relative to $n$, and provided that the clusters are not very small. This further stresses the importance of an initial dimension reduction step in high-dimensional graphical model estimation. For instance, results similar to those of  Theorem \ref{thm:xi_asymptotic}  can be derived along the same lines for the estimation of the sparsity pattern of $\bSigma^{-1}$, for a generic, unstructured, covariance matrix of $\bX$, where one replaces $K$ by $d$ throughout, and $s_1$ is replaced by $s$, the number of non-zero entries in the $d \times d$ matrix  $\bSigma^{-1}$. Then, the analogue of  \eqref{eqfdr_av} of Theorem \ref{thm:fdr_bound_av} shows that FDR control in generic graphical models, based on {\it asymptotic approximations of $p$-values}, cannot generally be guaranteed if $d > n$. Our work shows that extra structural  assumptions, for instance those motivated by clustering, do alleviate this problem.  The simulation study presented in the next section provides further support to our findings.

\section{Numerical Results}
\label{sec:numerical}
This section contains simulations and a real data analysis that  illustrate the finite sample performance of the inferential procedures  developed in the previous sections for the latent variable graph and cluster-average graph, respectively.

\subsection{Synthetic Datasets}
\label{sec:numerical:sim}
In this subsection, we demonstrate the effectiveness of the FDR control procedures on synthetic datasets. We consider two settings  $(n,d)=(800,400)$ and $(500,1000)$, and in each setting we vary the value of $K$ and $m$. The error variable $\bE$ is sampled from the multivariate normal distribution with covariance $\bGamma^*$ whose entries $\gammaS{i}$ are generated from $U[0.25,0.5]$. Recall that the latent variable $\bZ$ follows from $\bZ\sim \cN(0,\bCS)$. We consider three different models to generate the graph structure of $\bZ$. Once the graph structure is determined, the corresponding adjacency matrix $\Wb$ is found, and the precision matrix $\bThetaS=\Cb^{*-1}$ is taken as $\bThetaS = c\Wb + (|\lmin{\Wb}| + 0.2)\Ib$, where $c=0.3$ when $d=400$ and $c=0.5$ when $d=1000$. Finally, we assign the cluster labels for all variables so that all clusters have approximately equal size, which gives us the matrix $\Ab$. Given $\Ab, \bZ$ and $\bE$, we can generate $\bX$ according to the model  (\ref{eqn:g_latent_model}).

We consider the following three generating models  for the graph structure of $\bZ$:
\begin{itemize}
\item \noindent{\it Scale-Free Graph -- }
The Scale-Free model is a generative model for network data, whose degree distribution follows a power law. To be concrete, we generate the graph one node at a time, starting with a 2 node chain. For nodes $3,\dots,K$, node $t$ is added and one edge is added between $t$ and one of the $t-1$ previous nodes. Denoting by $k_i$ the current degree of node $i$ in the graph, the probability that node $t$ and node $i$ are connected is $p_i = k_i / (\sum_i k_i)$. The number of edges in the resulting graph is always $K$.
\item \noindent{\it Hub Graph -- }
The $K$ nodes of the graph are partitioned evenly into groups of size $N$. Within each group, one node is selected to be the group hub, and an edge is added between it and the remainder of its group. $N$ is either 5 or 6 depending upon the choice of $K$. The number of edges in the graph is $K(N-1)/N$, so for $K=100$ with $N=5$, the number of edges in the resulting graph is 80.
\item \noindent{\it Band3 Graph -- }
This model generates a graph with a Toeplitz adjacency matrix. There is an edge between node $i$ and node $j$ if and only if $|i - j| \leq B$, where we set $B=3$ in this scenario. In general, the number of edges in a band graph with $K$ nodes is given by $BK - \frac{3}{2}B^2 + \frac{5}{2}B$. So, for $K=100$ and $B=3$, the number of edges in the graph is 294.
\end{itemize}

Recall that $\bar \bX \sim \cN(0,\bEssS)$, where $\bEssS$ is defined in (\ref{eqn:s_star_definition}). To determine the structure of the average graph, we numerically compute $\bS^{*-1}$ and threshold the matrix at $10^{-8}$.

We examine the empirical FDR of our procedures on some synthetic datasets. The following protocol is followed in all the experiments:
\begin{enumerate}
\item Generate the graph structure of $\bZ$ as specified above.
\item Simulate $n$ observations from our model (\ref{eqn:g_latent_model}).
\item Estimate the cluster partition $\hat G$. For computational convenience, we apply the FORCE algorithm \citep{Eisenach2019b} with known $K$.
\item Construct the test statistic $\tilde W_{t,k}$ defined in Section \ref{sec:main_results:fdr_control}. The regularization parameters $\lambda$ and $\lambda'$ are chosen by 5-fold cross validation.
\item Find the FDR cutoff (\ref{eqn:selection_rule_av}) at level $\alpha$; we consider three cases $\alpha=0.05, 0.1, 0.2$.
\end{enumerate}

The simulation is repeated 100 times. To compare with our Benjamini-Yekutieli based FDR procedure, we also report the empirical FDR based on the more classical Benjamini-Hochberg procedure. That is, we apply the same procedures 1-4, but in step 5 we replace the FDR cutoff in (\ref{eqn:selection_rule_av}) with the Benjamini-Hochberg (B-H) cutoff, i.e.,
$$
\hat \tau_{BH} := \min\left\{\tau > 0 : \tau \geq  \Phi^{-1}\left(1-\frac{\alpha R_{\tau}}{2|\cH|} \right) \right\}.
$$
Table \ref{fig:synth_fdr} compares the empirical FDR based on our method with the B-H procedure under different $m, K$ settings when $d=400$. When $m$ is relatively large (e.g., $m=20$), both methods can control FDR on average, although our method is relatively more conservative. As expected, the FDR control problem becomes more challenging for large $K$ and small $m$. In this case the graph contains more nodes and each cluster contains fewer variables. We observe that when $m=5$ our method can still control FDR reasonably well but the B-H method produces empirical FDR far beyond the nominal level, especially for hub graphs. The inferior performance of the B-H procedure is due to the fact that the dependence among the test statistics is not accounted for, demonstrating that the B-Y procedure is indeed necessary at least in the current simulation settings. Finally, we examine the empirical power of the FDR procedure under each scenario, which is defined as
\[
\textrm{Average}\left[\sum_{(t,k) \in \cH_1}\frac{\II[\tilde W_{t,k} \geq \hat \tau]}{|\cH_1|} \right],
\]
where $\cH_1$ is the set of alternative hypothesis. Table \ref{fig:synth_fdr_pow} gives the empirical power of our FDR procedure, and the B-H procedure when $d=400$. It shows that our procedure and the B-H procedure have very high power in all scenarios. The same phenomenon is observed when $d$ is large, i.e., $d=1000$; see Tables \ref{fig:synth_fdr_highd} and \ref{fig:synth_fdr_pow_highd}. In summary, our proposed procedure can identify most of the signals in the graph while keeping FDR well controlled.

\begin{remark}[Inexact Recovery]
  While our FORCE algorithm guarantees the estimated number of clusters is always $K$ since we use the true value $K$ as the input, the recovered partition $\hat G$ may still differ from the true partition $G^*$. In order to compare our results to the ground-truth, we need to find an ``aligned`` version of $\bThetaS$ (or $\bOmegaS$). Specifically, we calculate the mapping $f:[K]\rightarrow[K]$ defined by
  \[
  f(k) = \argmax_{l\in[K]} |\Gh{k} \cap \Gs{l}|,
  \]
  and then construct the matrix $\bTheta^*_A$, defined entry-wise by
  \[
  (\Theta^*_A)_{s,t} := (\Theta^*)_{f(s),f(t)}.
  \]
Once we have obtained the ``aligned" ground-truth matrix $\bTheta_A^*$, we can proceed with computing the metric of interest (FDR and power). Although the preceding discussion is in terms of $\bThetaS$, the same procedure applies to $\bOmega^*$.
    
\end{remark}

\begin{remark}[Discussion of the B-H procedure]
  Based on the results in Tables \ref{fig:synth_fdr_highd} and \ref{fig:synth_fdr_pow_highd}, it is clear that the power of both our procedure and the B-H procedure are satisfactory. However, for both setups ($d=400,1000)$, the B-H procedure leads to the inflated FDR relative to the nominal level. While Appendix \ref{app:FDR} shows that the B-H procedure can control FDR asymptotically under certain conditions, it seems in numerical examples the dependence among the test statistics leads to substantial errors in FDR control that are indeed not ignorable. 
\end{remark}

\begin{table}[h]
\centering
{\footnotesize %
\begin{tabular}{c c c c l l l l l l l l l}
\toprule
& & & & \multicolumn{3}{c}{$\mathbf{\balpha = 0.05}$} & \multicolumn{3}{c}{$\mathbf{\balpha = 0.1}$} & \multicolumn{3}{c}{$\mathbf{\balpha = 0.2}$}\\
\cline{5-13}
& & $K$ & $m$ & Scalefree & Band3 & Hub & Scalefree & Band3 & Hub & Scalefree & Band3 & Hub \\
\multirow{8}{*}{\begin{sideways}{\it B-Y Based Procedure}\end{sideways}} & \multirow{4}{*}{\begin{sideways}Latent\end{sideways}}
  & 80 & 5 & 1.16\% & 1.42\% & 5.99\% & 2.01\% & 2.64\% & 7.60\% & 4.01\% & 4.73\% & 10.33\% \\
& & 66 & 6 & 0.93\% & 1.03\% & 1.08\% & 1.99\% & 1.98\% & 1.96\% & 3.73\% & 3.68\% & 4.00\% \\
& & 50 & 8 & 1.16\% & 0.99\% & 1.09\% & 2.20\% & 1.77\% & 1.98\% & 3.90\% & 3.49\% & 3.73\% \\
& & 20 & 20 & 0.99\% & 0.88\% & 1.30\% & 1.81\% & 1.66\% & 2.26\% & 3.75\% & 3.38\% & 4.71\% \\
\cline{2-13}
 & \multirow{4}{*}{\begin{sideways}Grp. Av.\end{sideways}}
  & 80 & 5 & 1.15\% & 1.40\% & 6.29\% & 2.14\% & 2.67\% & 7.89\% & 4.00\% & 4.70\% & 10.43\% \\
& & 66 & 6 & 0.91\% & 1.04\% & 1.04\% & 1.85\% & 2.00\% & 2.14\% & 3.70\% & 3.71\% & 3.83\% \\
& & 50 & 8 & 1.16\% & 1.00\% & 1.11\% & 2.22\% & 1.80\% & 1.94\% & 3.98\% & 3.49\% & 3.61\% \\
& & 20 & 20 & 0.94\% & 0.88\% & 1.23\% & 1.81\% & 1.68\% & 2.26\% & 3.65\% & 3.38\% & 4.71\% \\
\hline \hline
\multirow{8}{*}{\begin{sideways}{\it  B-H Based Procedure}\end{sideways}} & \multirow{4}{*}{\begin{sideways}Latent\end{sideways}}
  & 80 & 5 & 8.23\% & 9.24\% & 15.60\% & 15.01\% & 16.39\% & 24.00\% & 28.16\% & 28.97\% & 38.58\% \\
& & 66 & 6 & 7.18\% & 7.38\% & 7.31\% & 14.12\% & 14.01\% & 13.78\% & 26.56\% & 25.84\% & 27.28\% \\
& & 50 & 8 & 6.94\% & 6.75\% & 6.69\% & 13.23\% & 12.73\% & 12.74\% & 25.20\% & 23.62\% & 26.05\% \\
& & 20 & 20 & 5.43\% & 4.54\% & 6.38\% & 11.09\% & 8.47\% & 10.71\% & 21.46\% & 17.15\% & 20.75\% \\
\cline{2-13}
 & \multirow{4}{*}{\begin{sideways}Grp. Av.\end{sideways}}
  & 80 & 5 & 8.42\% & 9.26\% & 15.77\% & 15.25\% & 16.43\% & 24.15\% & 28.03\% & 29.02\% & 38.57\% \\
& & 66 & 6 & 7.21\% & 7.37\% & 7.45\% & 13.99\% & 13.92\% & 13.81\% & 26.19\% & 25.77\% & 27.17\% \\
& & 50 & 8 & 6.82\% & 6.78\% & 6.67\% & 13.23\% & 12.67\% & 12.89\% & 25.10\% & 23.61\% & 25.91\% \\
& & 20 & 20 & 5.38\% & 4.51\% & 6.49\% & 11.01\% & 8.52\% & 10.66\% & 21.58\% & 17.09\% & 20.71\% \\
\bottomrule
\end{tabular}

}%
\caption{Averaged empirical FDR for synthetic data experiments with $d=400$ and $n=800$.  \label{fig:synth_fdr}}
\end{table}

\begin{table}[h]
\centering
{\footnotesize %
\begin{tabular}{c c c c l l l l l l l l l}
\toprule
& & & & \multicolumn{3}{c}{$\mathbf{\balpha = 0.05}$} & \multicolumn{3}{c}{$\mathbf{\balpha = 0.1}$} & \multicolumn{3}{c}{$\mathbf{\balpha = 0.2}$}\\
\cline{5-13}
& & $K$ & $m$ & Scalefree & Band3 & Hub & Scalefree & Band3 & Hub & Scalefree & Band3 & Hub \\
\multirow{8}{*}{\begin{sideways}{\it B-Y Based Procedure}\end{sideways}} & \multirow{4}{*}{\begin{sideways}Latent\end{sideways}}
  & 80 & 5 & 88.22\% & 99.99\% & 99.00\% & 91.10\% & 100.00\% & 99.04\% & 93.38\% & 100.00\% & 99.11\% \\
& & 66 & 6 & 93.62\% & 100\% & 100\% & 95.42\% & 100\% & 100\% & 96.86\% & 100\% & 100\% \\
& & 50 & 8 & 97.18\% & 100\% & 100\% & 97.90\% & 100\% & 100\% & 98.47\% & 100\% & 100\% \\
& & 20 & 20 & 100\% & 100\% & 100\% & 100\% & 100\% & 100\% & 100\% & 100\% & 100\% \\
\cline{2-13}
 & \multirow{4}{*}{\begin{sideways}Grp. Av.\end{sideways}}
  & 80 & 5 & 88.27\% & 99.99\% & 98.93\% & 91.06\% & 100.00\% & 98.98\% & 93.37\% & 100.00\% & 99.07\% \\
& & 66 & 6 & 93.62\% & 100\% & 100\% & 95.40\% & 100\% & 100\% & 96.91\% & 100\% & 100\% \\
& & 50 & 8 & 97.18\% & 100\% & 100\% & 97.86\% & 100\% & 100\% & 98.47\% & 100\% & 100\% \\
& & 20 & 20 & 100\% & 100\% & 100\% & 100\% & 100\% & 100\% & 100\% & 100\% & 100\% \\
\hline \hline
\multirow{8}{*}{\begin{sideways}{\it  B-H Based Procedure}\end{sideways}} & \multirow{4}{*}{\begin{sideways}Latent\end{sideways}}
  & 80 & 5 & 95.75\% & 100.00\% & 99.18\% & 97.15\% & 100.00\% & 99.23\% & 98.48\% & 100.00\% & 99.26\% \\
& & 66 & 6 & 97.85\% & 100\% & 100\% & 98.78\% & 100\% & 100\% & 99.34\% & 100\% & 100\% \\
& & 50 & 8 & 99.27\% & 100\% & 100\% & 99.57\% & 100\% & 100\% & 99.80\% & 100\% & 100\% \\
& & 20 & 20 & 100\% & 100\% & 100\% & 100\% & 100\% & 100\% & 100\% & 100\% & 100\% \\
\cline{2-13}
 & \multirow{4}{*}{\begin{sideways}Grp. Av.\end{sideways}}
  & 80 & 5 & 95.82\% & 100.00\% & 99.14\% & 97.14\% & 100.00\% & 99.18\% & 98.53\% & 100.00\% & 99.23\% \\
& & 66 & 6 & 97.88\% & 100\% & 100\% & 98.78\% & 100\% & 100\% & 99.34\% & 100\% & 100\% \\
& & 50 & 8 & 99.24\% & 100\% & 100\% & 99.59\% & 100\% & 100\% & 99.78\% & 100\% & 100\% \\
& & 20 & 20 & 100\% & 100\% & 100\% & 100\% & 100\% & 100\% & 100\% & 100\% & 100\% \\
\bottomrule
\end{tabular}

}%
\caption{Averaged FDR power for synthetic data experiments with $d=400$ and $n=800$. \label{fig:synth_fdr_pow}}
\end{table}

\begin{table}[h]
\centering
{\footnotesize %
\begin{tabular}{c c c c l l l l l l l l l}
\toprule
& & & & \multicolumn{3}{c}{$\mathbf{\balpha = 0.05}$} & \multicolumn{3}{c}{$\mathbf{\balpha = 0.1}$} & \multicolumn{3}{c}{$\mathbf{\balpha = 0.2}$}\\
\cmidrule{5-13}
& & $K$ & $m$ & Scalefree & Band3 & Hub & Scalefree & Band3 & Hub & Scalefree & Band3 & Hub \\
\multirow{8}{*}{\begin{sideways}{\it B-Y Based Procedure}\end{sideways}} & \multirow{4}{*}{\begin{sideways}Latent\end{sideways}}
  & 80 & 12 & 1.40\% & 1.52\% & 1.78\% & 2.78\% & 2.81\% & 3.29\% & 5.71\% & 5.09\% & 6.30\% \\
& & 66 & 15 & 1.23\% & 1.29\% & 1.31\% & 2.62\% & 2.39\% & 2.69\% & 5.08\% & 4.33\% & 4.84\% \\
& & 50 & 20 & 1.05\% & 0.94\% & 1.26\% & 2.27\% & 1.85\% & 2.79\% & 4.28\% & 3.65\% & 4.58\% \\
& & 20 & 50 & 1.04\% & 1.03\% & 1.05\% & 1.81\% & 1.96\% & 2.20\% & 3.46\% & 3.26\% & 4.19\% \\
\cmidrule{2-13}
 & \multirow{4}{*}{\begin{sideways}Grp. Av.\end{sideways}}
  & 80 & 12 & 1.55\% & 1.57\% & 1.78\% & 2.78\% & 2.80\% & 3.26\% & 5.57\% & 5.13\% & 6.13\% \\
& & 66 & 15 & 1.30\% & 1.30\% & 1.24\% & 2.64\% & 2.35\% & 2.64\% & 5.07\% & 4.34\% & 4.89\% \\
& & 50 & 20 & 1.09\% & 0.93\% & 1.26\% & 2.32\% & 1.89\% & 2.75\% & 4.29\% & 3.64\% & 4.51\% \\
& & 20 & 50 & 1.09\% & 1.01\% & 1.05\% & 1.91\% & 1.96\% & 2.14\% & 3.50\% & 3.28\% & 4.25\% \\
\hline \hline
\multirow{8}{*}{\begin{sideways}{\it  B-H Based Procedure}\end{sideways}} & \multirow{4}{*}{\begin{sideways}Latent\end{sideways}}
  & 80 & 12 & 10.99\% & 10.04\% & 12.00\% & 20.04\% & 18.18\% & 21.47\% & 35.50\% & 31.46\% & 37.24\% \\
& & 66 & 15 & 9.28\% & 8.20\% & 9.14\% & 16.88\% & 14.94\% & 16.82\% & 31.53\% & 27.75\% & 31.81\% \\
& & 50 & 20 & 7.73\% & 6.91\% & 7.96\% & 14.74\% & 13.13\% & 15.33\% & 27.03\% & 23.92\% & 28.72\% \\
& & 20 & 50 & 5.00\% & 4.85\% & 6.38\% & 11.01\% & 9.38\% & 11.60\% & 21.49\% & 17.00\% & 23.15\% \\
\cmidrule{2-13}
 & \multirow{4}{*}{\begin{sideways}Grp. Av.\end{sideways}}
  & 80 & 12 & 10.99\% & 10.00\% & 12.00\% & 20.29\% & 18.20\% & 21.38\% & 35.56\% & 31.48\% & 37.10\% \\
& & 66 & 15 & 9.28\% & 8.24\% & 8.99\% & 17.03\% & 14.87\% & 16.87\% & 31.49\% & 27.82\% & 31.63\% \\
& & 50 & 20 & 7.60\% & 6.93\% & 7.94\% & 14.61\% & 13.13\% & 15.40\% & 27.12\% & 23.94\% & 28.71\% \\
& & 20 & 50 & 4.95\% & 4.91\% & 6.38\% & 10.97\% & 9.35\% & 11.60\% & 21.36\% & 16.94\% & 23.41\% \\
\bottomrule
\end{tabular}

}%
\caption{Averaged empirical FDR for synthetic data experiments with $d=1000$ and $n=500$. \label{fig:synth_fdr_highd}}
\end{table}

\begin{table}[h]
\centering
{\footnotesize %
\begin{tabular}{c c c c l l l l l l l l l}
\toprule
& & & & \multicolumn{3}{c}{$\mathbf{\balpha = 0.05}$} & \multicolumn{3}{c}{$\mathbf{\balpha = 0.1}$} & \multicolumn{3}{c}{$\mathbf{\balpha = 0.2}$}\\
\cmidrule{5-13}
& & $K$ & $m$ & Scalefree & Band3 & Hub & Scalefree & Band3 & Hub & Scalefree & Band3 & Hub \\
\multirow{8}{*}{\begin{sideways}{\it B-Y Based Procedure}\end{sideways}} & \multirow{4}{*}{\begin{sideways}Latent\end{sideways}}
  & 80 & 12 & 71.42\% & 99.95\% & 100.00\% & 77.14\% & 99.97\% & 100.00\% & 82.29\% & 99.98\% & 100.00\% \\
& & 66 & 15 & 82.98\% & 99.98\% & 100.00\% & 86.32\% & 99.99\% & 100.00\% & 90.02\% & 100.00\% & 100.00\% \\
& & 50 & 20 & 90.45\% & 99.99\% & 100.00\% & 93.04\% & 99.99\% & 100.00\% & 94.86\% & 100.00\% & 100.00\% \\
& & 20 & 50 & 99.95\% & 100.00\% & 100.00\% & 100.00\% & 100.00\% & 100.00\% & 100.00\% & 100.00\% & 100.00\% \\
\cmidrule{2-13}
  & \multirow{4}{*}{\begin{sideways}Grp. Av.\end{sideways}}
  & 80 & 12 & 71.54\% & 99.95\% & 100.00\% & 77.13\% & 99.97\% & 100.00\% & 82.25\% & 99.98\% & 100.00\% \\
& & 66 & 15 & 82.74\% & 99.98\% & 100.00\% & 86.26\% & 99.99\% & 100.00\% & 90.08\% & 100.00\% & 100.00\% \\
& & 50 & 20 & 90.45\% & 99.99\% & 100.00\% & 93.00\% & 99.99\% & 100.00\% & 94.76\% & 100.00\% & 100.00\% \\
& & 20 & 50 & 99.95\% & 100.00\% & 100.00\% & 100.00\% & 100.00\% & 100.00\% & 100.00\% & 100.00\% & 100.00\% \\
\hline \hline
\multirow{8}{*}{\begin{sideways}{\it  B-H Based Procedure}\end{sideways}} & \multirow{4}{*}{\begin{sideways}Latent\end{sideways}}
  & 80 & 12 & 87.15\% & 99.99\% & 100.00\% & 91.33\% & 100.00\% & 100.00\% & 94.66\% & 100.00\% & 100.00\% \\
& & 66 & 15 & 93.57\% & 100.00\% & 100.00\% & 95.63\% & 100.00\% & 100.00\% & 97.38\% & 100.00\% & 100.00\% \\
& & 50 & 20 & 96.27\% & 100.00\% & 100.00\% & 97.78\% & 100.00\% & 100.00\% & 98.94\% & 100.00\% & 100.00\% \\
& & 20 & 50 & 100.00\% & 100.00\% & 100.00\% & 100.00\% & 100.00\% & 100.00\% & 100.00\% & 100.00\% & 100.00\% \\
\cmidrule{2-13}
  & \multirow{4}{*}{\begin{sideways}Grp. Av.\end{sideways}}
  & 80 & 12 & 87.22\% & 99.99\% & 100.00\% & 91.33\% & 100.00\% & 100.00\% & 94.58\% & 100.00\% & 100.00\% \\
& & 66 & 15 & 93.52\% & 100.00\% & 100.00\% & 95.65\% & 100.00\% & 100.00\% & 97.35\% & 100.00\% & 100.00\% \\
& & 50 & 20 & 96.33\% & 100.00\% & 100.00\% & 97.78\% & 100.00\% & 100.00\% & 98.94\% & 100.00\% & 100.00\% \\
& & 20 & 50 & 100.00\% & 100.00\% & 100.00\% & 100.00\% & 100.00\% & 100.00\% & 100.00\% & 100.00\% & 100.00\% \\
\bottomrule
\end{tabular}

}%
\caption{Averaged FDR power for synthetic data experiments with $d=1000$ and $n=500$. \label{fig:synth_fdr_pow_highd}}
\end{table}

\subsection*{Group Average Procedures Do Not Recover Latent Graphs}

In this section we demonstrate through simulation studies that procedures specifically tailored to recovering the latent variable graph are necessary. We do this by using both the methodology for the latent variable graph and the group-average graph to recover the latent graph structure. Because the differences between the latent and group-average graph can be small, we use larger sample sizes than in the previous studies.

The experimental procedure is almost exactly the same as before, but now we hold most of the parameters of the generating distribution fixed and examine the effects of the error variance $\bGammaS$ and sample size $n$ on the efficacy of our methodologies for recovering the latent variable graph. In all the experiments, we impose a Band3 structure on the latent graph, set $\bGammaS = \gamma\Ib$, and use $(d,K,m) = (400,20,20)$.

First we note that if $\bGammaS = \bzero$, the two graph structures are actually the same -- it is only as we introduce error into the observed variables that the latent variable and group-average graphs begin to differ.  In Tables \ref{fig:synth_t1_wrongtest} and \ref{fig:synth_fdr_wrongtest} we see that as we increase the error variance, the the group averages methodology applied to the latent graph is unable to control the Type I error rate and the FDR at the desired level. Further, by increasing the sample size $n$, we observe that the performance of the group average procedures applied to recovering the latent graph gets worse -- this is as expected because with increased $n$, the tests become accurate, but are by construction estimating the wrong graph!

\begin{table}[h]
\centering
{\footnotesize %
\begin{tabular}{c c l l l l l l l l}
\toprule
 & & \multicolumn{4}{c}{\bf L.V. Test Procedure} & \multicolumn{4}{c}{\bf G.A. Test Procedure} \\
\cline{3-10}
$n$ & $\gamma$ & $\alpha=0.01$ & $\alpha=0.05$ & $\alpha=0.1$ & $\alpha=0.2$ & $\alpha=0.01$ & $\alpha=0.05$ & $\alpha=0.1$ & $\alpha=0.2$ \\
\midrule
1600 & 0.5 & 0.56\% & 2.67\% & 5.83\% & 10.22\% & 0.61\% & 3.00\% & 6.39\% & 10.39\%  \\
& 1.0 & 0.78\% & 3.11\% & 5.22\% & 10.61\% & 0.78\% & 3.83\% & 6.89\% & 13.39\%  \\
& 1.5 & 0.61\% & 2.89\% & 5.11\% & 8.72\% & 0.94\% & 3.33\% & 6.22\% & 10.50\%  \\
& 2.0 & 1.00\% & 3.39\% & 6.39\% & 11.06\% & 1.39\% & 4.78\% & 7.61\% & 12.44\%  \\
& 2.5 & 0.56\% & 3.50\% & 6.17\% & 11.22\% & 1.89\% & 5.33\% & 9.17\% & 15.28\%  \\
& 3.0 & 1.72\% & 5.11\% & 8.61\% & 14.28\% & 3.00\% & 8.56\% & 11.89\% & 17.61\%  \\
& 3.5 & 1.06\% & 3.83\% & 6.78\% & 11.72\% & 3.39\% & 8.44\% & 12.94\% & 18.44\%  \\
& 4.0 & 1.50\% & 4.33\% & 8.33\% & 12.83\% & 5.22\% & 10.94\% & 14.22\% & 20.78\%  \\
\midrule
3200 & 0.5 & 0.28\% & 1.89\% & 3.22\% & 7.50\% & 0.44\% & 2.17\% & 4.00\% & 8.11\%  \\
& 1.0 & 0.44\% & 3.00\% & 5.11\% & 9.44\% & 1.11\% & 4.22\% & 7.83\% & 14.17\%  \\
& 1.5 & 0.44\% & 2.72\% & 4.50\% & 9.56\% & 1.78\% & 5.61\% & 9.00\% & 13.61\%  \\
& 2.0 & 0.89\% & 3.56\% & 6.78\% & 11.61\% & 2.94\% & 7.50\% & 11.33\% & 16.78\%  \\
& 2.5 & 1.06\% & 3.61\% & 6.06\% & 10.67\% & 2.78\% & 9.67\% & 13.44\% & 19.00\%  \\
& 3.0 & 1.50\% & 5.44\% & 8.39\% & 14.33\% & 7.72\% & 13.78\% & 18.00\% & 22.78\%  \\
& 3.5 & 1.67\% & 5.44\% & 9.72\% & 15.72\% & 7.61\% & 13.83\% & 18.22\% & 22.89\%  \\
& 4.0 & 2.78\% & 5.83\% & 10.06\% & 16.28\% & 11.28\% & 17.11\% & 22.22\% & 27.06\%  \\
\bottomrule
\end{tabular}

}%
\caption{Averaged Type I for the latent variable graph using both the latent variable and group averages methodology. \label{fig:synth_t1_wrongtest}}
\end{table}

\begin{table}[h]
\centering
{\footnotesize %
\begin{tabular}{c c l l l l l l l l}
\toprule
 & & \multicolumn{4}{c}{\bf L.V. Test Procedure} & \multicolumn{4}{c}{\bf G.A. Test Procedure} \\
\cline{3-10}
$n$ & $\gamma$ & $\alpha=0.01$ & $\alpha=0.05$ & $\alpha=0.1$ & $\alpha=0.2$ & $\alpha=0.01$ & $\alpha=0.05$ & $\alpha=0.1$ & $\alpha=0.2$ \\
\midrule
1600 & 0.5 & 0.83\% & 3.81\% & 4.95\% & 11.44\% & 0.41\% & 3.23\% & 5.51\% & 12.57\%  \\
& 1.0 & 0.62\% & 3.23\% & 6.07\% & 10.11\% & 1.03\% & 4.00\% & 7.51\% & 14.13\%  \\
& 1.5 & 0.21\% & 3.61\% & 5.88\% & 10.61\% & 1.23\% & 4.76\% & 6.25\% & 12.25\%  \\
& 2.0 & 1.23\% & 3.81\% & 6.43\% & 13.51\% & 1.64\% & 5.88\% & 9.26\% & 14.74\%  \\
& 2.5 & 0.83\% & 4.00\% & 7.16\% & 12.25\% & 3.23\% & 5.88\% & 10.95\% & 16.52\%  \\
& 3.0 & 2.24\% & 6.61\% & 10.11\% & 16.81\% & 3.23\% & 9.94\% & 15.19\% & 20.66\%  \\
& 3.5 & 1.23\% & 4.95\% & 6.98\% & 13.36\% & 3.81\% & 10.78\% & 15.79\% & 22.08\%  \\
& 4.0 & 1.64\% & 4.95\% & 8.40\% & 15.19\% & 6.25\% & 14.29\% & 17.95\% & 24.88\%  \\
\midrule
3200 & 0.5 & 0.41\% & 2.44\% & 4.00\% & 7.51\% & 0.41\% & 2.64\% & 4.19\% & 8.75\%  \\
& 1.0 & 0.83\% & 3.42\% & 6.80\% & 10.61\% & 1.03\% & 4.57\% & 8.05\% & 16.38\%  \\
& 1.5 & 0.41\% & 3.23\% & 5.70\% & 9.09\% & 2.64\% & 7.34\% & 10.45\% & 16.81\%  \\
& 2.0 & 0.62\% & 4.38\% & 6.98\% & 12.25\% & 4.00\% & 9.77\% & 13.36\% & 20.27\%  \\
& 2.5 & 1.03\% & 4.57\% & 7.51\% & 11.60\% & 4.00\% & 12.09\% & 16.81\% & 23.44\%  \\
& 3.0 & 1.84\% & 7.51\% & 10.45\% & 16.38\% & 9.94\% & 17.81\% & 22.95\% & 28.14\%  \\
& 3.5 & 1.64\% & 6.98\% & 11.28\% & 18.92\% & 10.95\% & 18.23\% & 23.08\% & 27.93\%  \\
& 4.0 & 4.38\% & 8.57\% & 11.60\% & 18.92\% & 15.94\% & 22.95\% & 27.16\% & 31.82\%  \\
\bottomrule
\end{tabular}

}%
\caption{Averaged empirical FDR error rates for the latent variable graph using both the latent variable and group averages methodology. \label{fig:synth_fdr_wrongtest}}
\end{table}

\subsection{fMRI Dataset}
The study of brain relationships in humans via modern neuroimaging techniques has attracted an enormous amount of scientific interest in recent years. One fundamental goal in these studies is to understand the functional communication between brain regions, which plays a key role in
complex cognitive processes. To study the functional connectivity network, \cite{Power2011} partitioned the human brain into regions of interest (ROIs) and represented each ROI by a node in a graph. They identified several ``subgraphs" of highly related nodes (i.e, clusters), which can represent the major functional systems of the brain. One of the main goals in their work is to understand the relationship between different subgraphs or clusters. To this end, they estimated the relationship between clusters by thresholding the correlation matrix of all nodes in the same clusters in an adhoc way. The analysis of ``network of clusters" can be conducted under a rigorous statistical framework by using the proposed cluster-based graphical models. In the section, we will apply the proposed method to study the dependence structure among functional systems of the brain. 




As an illustration, we focus on the publicly available resting-state fMRI data from the Neuro-bureau pre-processed repository \citep{Bellec2015}. Specifically, we use the data from patient 1018959, session 1 in the KKI dataset. This fMRI dataset was pre-processed using the Athena pipeline and mapped to T1 MNI152 coordinate space. We choose this dataset to make our experiments easily reproducible, as the data are available pre-processed using standard alignment and denoising procedures. In a recent work, \cite{luo2014hierarchical} estimated the fMRI network in a similar clustering model by an iterative procedure that maximizes the likelihood function with respect to the unknown cluster assignment and the precision matrix. However, their optimization problem is non-convex due to estimating the unknown cluster assignment and the algorithm is computationally intensive in high dimension. 

Following \cite{Power2011}, we directly extract the 264 ROIs, which gives us $d=264$ mean activities across $n=148$ time periods. Since the FORCE algorithm requires to know the number of clusters in advance, we first apply the CORD algorithm \citep{Bunea2018} to estimate the number of clusters, which gives us $K=53$. Then we reapply the FORCE algorithm with $K=53$ to obtain the corresponding clusters. 
Recall that the goal is to analyze the dependence structure among functional systems of the brain. Under the latent variable model (\ref{eqn:g_latent_model}), we treat the measurement of ROIs as the observed variable $\bX$ and the underlying brain function as the latent variable $\bZ$. Thus, the relationship between brain functional systems can be modeled by the latent variable graph. Using the FDR control procedures with $\alpha=0.01$, the estimated latent variable graph is shown in Figure \ref{fig:graph_fmri_latent}. As a comparison, we also show the cluster average graph  in Figure \ref{fig:graph_fmri_averages}. However, in this example the biological meaning of the cluster-average variables $\bar \bX$ is unclear so that the interpretation of the cluster average graph seems difficult. 


For purposes of clarity, we only display the nodes and connections corresponding to the 10 largest groups in these two figures. The groups are colored according to the functional network the majority of their nodes belong to as given in \cite{Power2011}. It is known that the default mode may comprise multiple interacting subsystems \citep{andrews2010functional}. Thus our graph contains multiple nodes for the default mode, where each node may represent different subsystems. To demonstrate the difference between the  latent variable graph and the cluster average graph, we focus on three regions, default mode (pink), dorsal attention (brown) and salience (yellow). In the latent variable graph, dorsal attention and salience are connected and both of them are strongly connected with many subsystems of the default mode. This dependence structure is supported by the neuroscience theory that ``the default mode, engaged during rest and internally directed tasks, should exhibit anticorrelation with networks engaged during externally directed tasks, such as the dorsal attention network and the salience network" \citep{zhou2017hierarchical}. Moreover, this finding is also consistent with many existing empirical results such as  \cite{fox2005human,fransson2005spontaneous,smith2009correspondence,raichle2015brain}. However, in the cluster-averages graph, dorsal attention is conditionally independent of any subsystems of the default mode and salience is only loosely connected with the default mode. In summary, the latent variable graph seems to be more biologically meaningful than the cluster-average graph in order to interpret the dependence structure of the functional systems of the brain.



\begin{figure}
\centering
\includegraphics[width=0.9\textwidth]{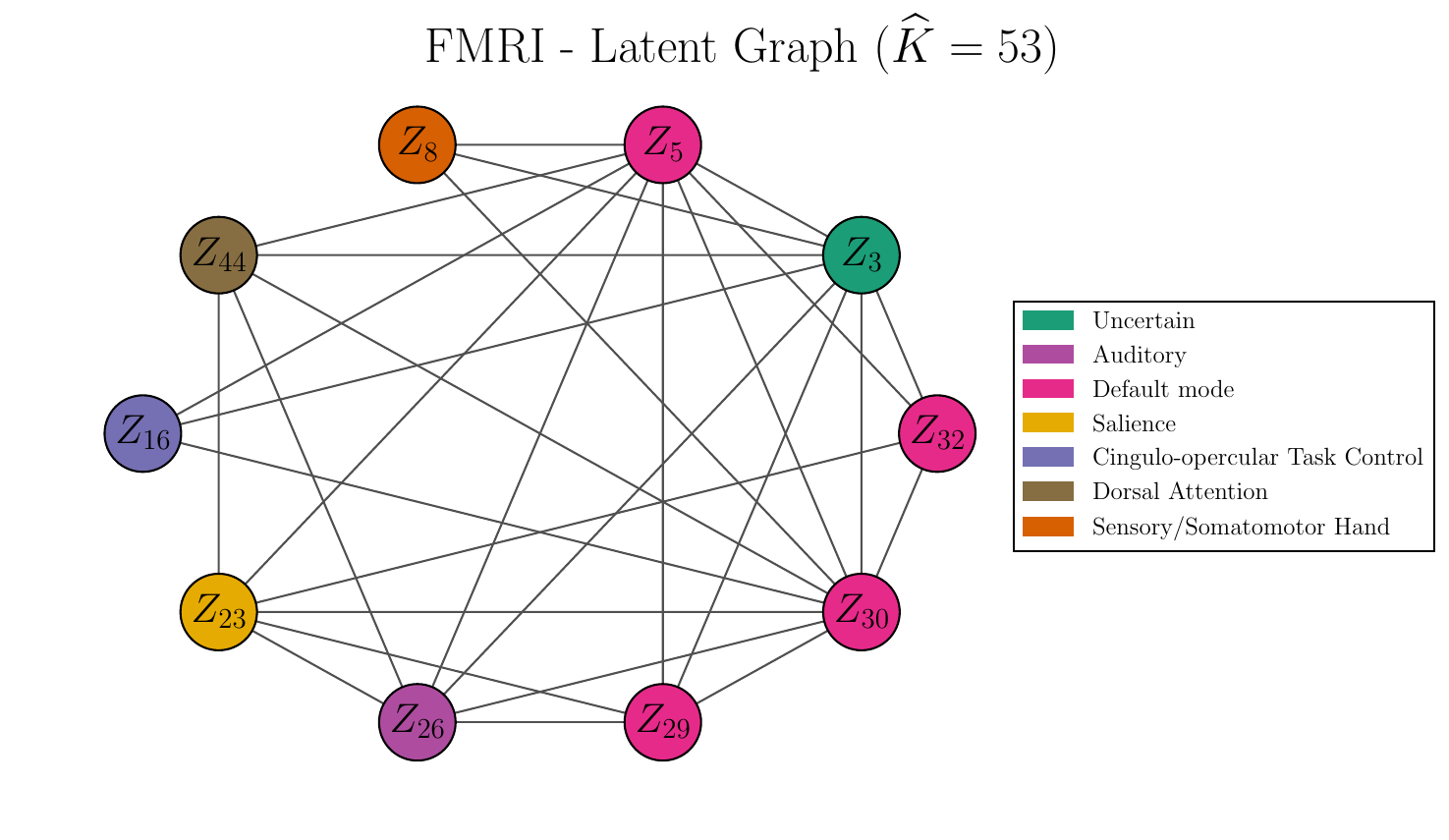}
\caption{Recovered latent graph structure between 10 largest clusters in fMRI data with FDR level $\alpha=0.01$ colored according to their functions.}
\label{fig:graph_fmri_latent}
\end{figure}

\begin{figure}
\centering
\includegraphics[width=0.9\textwidth]{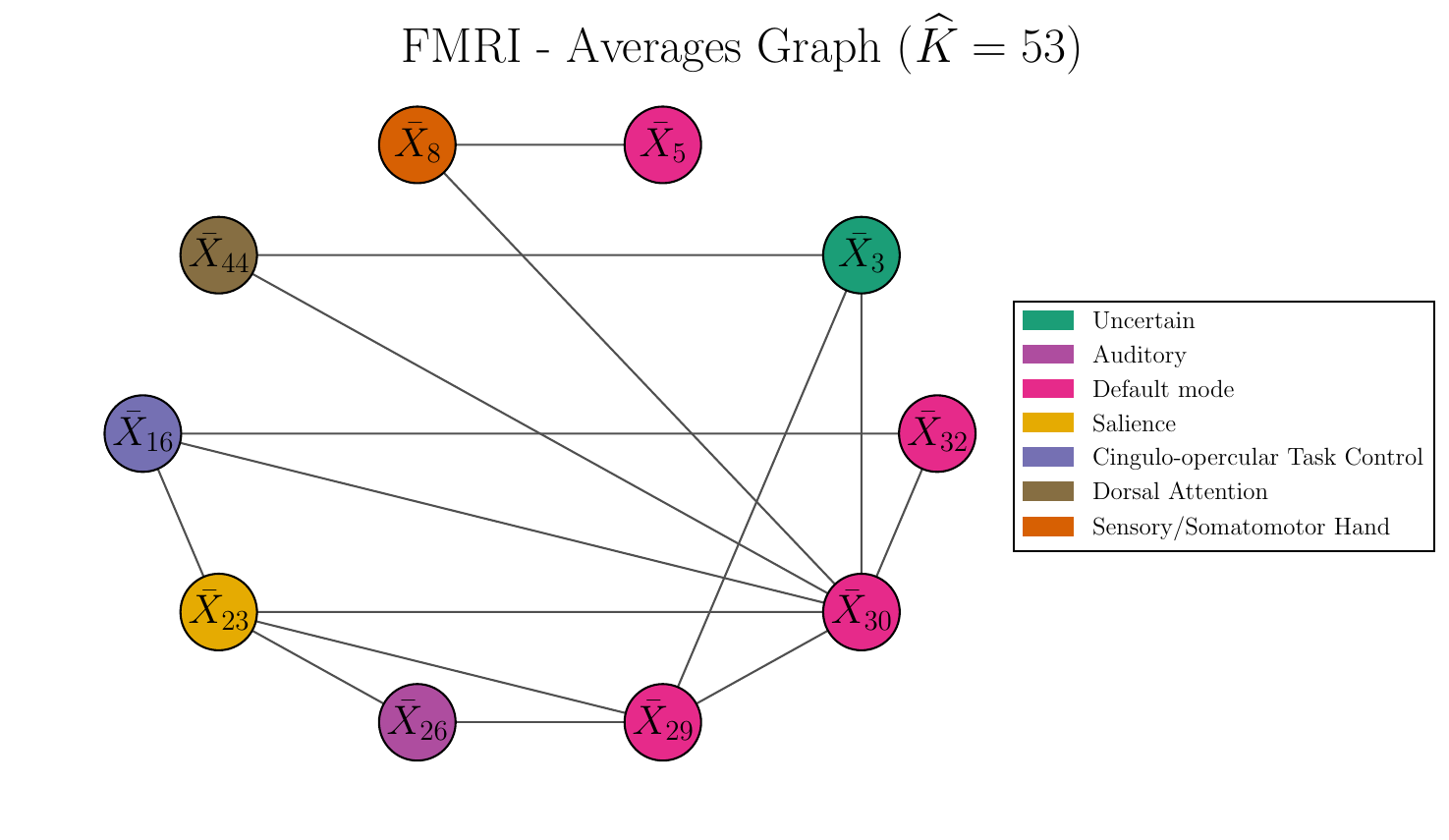}
\caption{Recovered cluster-averages graph structure between 10 largest clusters in fMRI data with FDR level $\alpha=0.01$.}
\label{fig:graph_fmri_averages}
\end{figure}


\acks{We would like to thank the reviewers and editor for their suggestions to improve this paper. Florentina Bunea was partially supported by NSF-DMS 712709. Yang Ning was partially supported by NSF-DMS 1854637. We are grateful to Xi Luo for help with the interpretation of our data analysis results. }


\newpage

\appendix


\section{Estimation in the Cluster-Average Graph}
\label{sec:main_proofs_ca}

In this section we provide the proofs of  the results needed for establishing the asymptotic  normality of the estimators of the  entries of $\bOmega^*$.
These results make use of the fact that consistent clustering is possible, under our assumptions, as stated below.
\begin{lemma}
\label{lem:pecok}
Let $\cE=: \{\hat G=G^*\}$, for $\hat{G}$ estimated by either the COD or the PECOK algorithm of \cite{Bunea2018}.
Then, under Assumptions \ref{asmp:bounded_latent_covariance} and \ref{asmp:bounded_errors}, we have
$$
\PP(\cE)\geq 1-\frac{C}{(d\vee n)^3}.
$$
\end{lemma}
The  conclusion of this Lemma is proved in Theorem 3, for the COD algorithm, and Theorem 4, for the PECOK algorithm, of \cite{Bunea2018}. Lemma \ref{lem:pecok}  allows us to replace $\hat{G}$ by $G^*$ in all the results below,  while incurring a small error, of order  $O\left(\frac{C}{(d\vee n)^3}\right)$, which will be shown to be dominated by other error bounds.

\begin{remark}
\label{rem:averages_assumptions}
While Assumptions \ref{asmp:bounded_latent_covariance} and \ref{asmp:bounded_errors} are made for $\Cb^*$, they do imply that $c_1 \leq \lmin{\bEssS}$ and $\max_{t}\EssS{t}{t} \leq c_2 + c_3$ holds for $\bS^*$. Furthermore Lemma \ref{lem:latent_re_condition} implies the same restricted eigenvalue condition on $\lmin{\bEssS}$ as on $\bCS$.
\end{remark}

\subsection{Main Proofs for the Cluster-Average Graph Estimators}
Before proving Theorem \ref{thm:xi_asymptotic} and Claim 1 of Theorem \ref{thm:fdr_bound_av}, we state two propositions -- one regarding the asymptotic normality of $\OmegaT{t}{k}$ and the other regarding the convergence rate of the variance estimator. The proofs are deferred until after that of the main result

\begin{proposition}[Asymptotic Normality of $\OmegaT{t}{k}$]
\label{prop:asymptotic_normality_group_averages}
Under the same conditions as in Theorem \ref{thm:xi_asymptotic}, we have
\[
\max_{1\leq t< k\leq K}\sup_{x \in \RR}\Big|\PP\Big(\frac{n^{1/2}(\tilde\Omega_{t,k}-\Omega^*_{t,k})}{s_{tk}}\leq x, \cE\Big)-\Phi(x)\Big|\leq \frac{C}{(K\vee n)^3}+\frac{Cs_1\log (K\vee n)}{n^{1/2}}.
\]
\end{proposition}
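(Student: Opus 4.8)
The plan is to reduce the c.d.f.\ of the scaled estimator to that of a single i.i.d.\ average via an asymptotic linearization, bound the linearization error by a term of order $s_1\log(K\vee n)/n^{1/2}$, apply the classical Berry--Esseen theorem to the average, and then transfer the bound from the average to the estimator using the Lipschitz continuity of $\Phi$. Throughout I would work on the clustering event $\cE=\{\hat G=G^*\}$, on which $\hat\bS=\bar\bS$ and $\hat X_{ik}=\bar X_{ik}$, so that the one-step estimator $\tilde\Omega_{t,k}$ of $(\ref{eqOmega})$ coincides with the oracle version built directly from $\bar\bX_i\sim\cN(0,\bS^*)$; by Lemma~\ref{lemcluster}, $\PP(\cE^c)\le C(d\vee n)^{-3}\le C(K\vee n)^{-3}$ since $d\ge 2K$, so this restriction costs only a term already present in the target bound.

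\textbf{Step 1 (linearization with explicit remainder).} Writing $\bar\bS\hat\bOmega_{\cdot k}-\eb_k=\bar\bS(\hat\bOmega_{\cdot k}-\bOmega^*_{\cdot k})+(\bar\bS-\bS^*)\bOmega^*_{\cdot k}$ and $\hat\vb_t=\vb_t^*+(\hat\vb_t-\vb_t^*)$ in the definition of $\tilde\Omega_{t,k}$, I would expand $n^{1/2}(\tilde\Omega_{t,k}-\Omega^*_{t,k})$ into the leading term $-\Omega^*_{t,t}\,n^{1/2}\vb_t^{*T}(\bar\bS-\bS^*)\bOmega^*_{\cdot k}$ plus a collection of cross terms, which are controlled using: the constraint in $(\ref{eqw})$, which forces $\|\hat\vb_t^T\bar\bS-(\hat\vb_t^T\bar\bS)_t\eb_t^T\|_\infty\le\lambda'$; the $\ell_1$ rate $\|\hat\bOmega_{\cdot k}-\bOmega^*_{\cdot k}\|_1\lesssim s_1\sqrt{\log(K\vee n)/n}$ from $(\ref{eqrateomega})$ (Lemma~\ref{lem:group_averages_consistency}); the analogous rate $\|\hat\vb_t-\vb_t^*\|_1\lesssim s_1\sqrt{\log(K\vee n)/n}$, which follows from the same CLIME-type argument applied to $(\ref{eqw})$ once one observes that $\vb_t^*=\bOmega^*_{\cdot t}/\Omega^*_{t,t}$ is $s_1$-sparse; the ``$\|\bOmega^*_{\cdot k}\|_1$-free'' gradient and Hessian bounds $\|(\bar\bS-\bS^*)\bOmega^*_{\cdot k}\|_\infty=\|\bar\bS\bOmega^*_{\cdot k}-\eb_k\|_\infty\lesssim\sqrt{\log(K\vee n)/n}$ and $\|\bar\bS-\bS^*\|_{\max}\lesssim\sqrt{\log(K\vee n)/n}$ supplied by Lemma~\ref{lem:group_averages_gradient_hessian}; and consistency of $\hat\Ib_{1|2}^{-1}=\hat\Omega_{t,t}$ for $\Omega^*_{t,t}$. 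Each cross term is a product of two of these quantities, hence $O(s_1\log(K\vee n)/n)$ before the $n^{1/2}$ scaling, yielding
\[
\frac{n^{1/2}(\tilde\Omega_{t,k}-\Omega^*_{t,k})}{s_{tk}}=\frac{1}{n^{1/2}}\sum_{i=1}^n\frac{\eta_i}{s_{tk}}+\frac{R_{tk}}{s_{tk}},\qquad \max_{t<k}\Big|\frac{R_{tk}}{s_{tk}}\Big|\le\delta_n:=\frac{Cs_1\log(K\vee n)}{n^{1/2}},
\]
on an event of probability at least $1-C(K\vee n)^{-3}$ (union bound over the $O(K^2)$ pairs), where $\eta_i=-\Omega^*_{t,t}\,\vb_t^{*T}(\bar\bX_i\bar\bX_i^T-\bS^*)\bOmega^*_{\cdot k}$ are i.i.d.\ and mean zero; this is the quantitative form of $(\ref{eq2})$.

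\textbf{Step 2 (Berry--Esseen for the average).} Using $\vb_t^*=\bOmega^*_{\cdot t}/\Omega^*_{t,t}$ and $\bOmega^*\bS^*=\Ib_K$, one has $\vb_t^{*T}\bS^*=(\Omega^*_{t,t})^{-1}\eb_t^T$, so the jointly Gaussian pair $U_i=\vb_t^{*T}\bar\bX_i$, $W_i=\bar\bX_i^T\bOmega^*_{\cdot k}$ satisfies $\textrm{Var}(U_i)=(\Omega^*_{t,t})^{-1}$, $\textrm{Var}(W_i)=\Omega^*_{k,k}$, $\textrm{Cov}(U_i,W_i)=\Omega^*_{t,k}/\Omega^*_{t,t}$; by the Gaussian product-moment identity this gives $\textrm{Var}(\eta_i)=(\Omega^*_{t,t})^2\big[\Omega^*_{k,k}/\Omega^*_{t,t}+(\Omega^*_{t,k})^2/(\Omega^*_{t,t})^2\big]=\Omega^*_{t,t}\Omega^*_{k,k}+(\Omega^*_{t,k})^2=s_{tk}^2$, confirming the claimed variance. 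Assumptions~\ref{asmp:bounded_latent_covariance} and~\ref{asmp:bounded_errors} give $c_1\le\lambda_{\min}(\bS^*)$ and $\bS^*_{jj}\le c_2+c_3$, so $\Omega^*_{t,t},\Omega^*_{k,k}\in[(c_2+c_3)^{-1},c_1^{-1}]$, $s_{tk}^2$ is bounded above and away from zero uniformly in $(t,k)$, and $\EE|\eta_i/s_{tk}|^3\le C$ uniformly. The classical Berry--Esseen theorem then yields $\sup_x\big|\PP(n^{-1/2}\sum_i\eta_i/s_{tk}\le x)-\Phi(x)\big|\le Cn^{-1/2}$, uniformly over $t<k$.

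\textbf{Step 3 (combining) and the main obstacle.} On the good event of Step~1, for any $x$ one bounds $\PP\big(n^{1/2}(\tilde\Omega_{t,k}-\Omega^*_{t,k})/s_{tk}\le x,\,\cE\big)$ above by $\PP\big(n^{-1/2}\sum_i\eta_i/s_{tk}\le x+\delta_n\big)+C(K\vee n)^{-3}\le\Phi(x+\delta_n)+Cn^{-1/2}+C(K\vee n)^{-3}$, and then by $\Phi(x)+\delta_n+Cn^{-1/2}+C(K\vee n)^{-3}$ via $\Phi(x+\delta_n)-\Phi(x)\le\delta_n/\sqrt{2\pi}$; a symmetric argument gives the matching lower bound. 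Taking the maximum over $t<k$ and the supremum over $x$, and noting that $n^{-1/2}$ is dominated by $\delta_n=Cs_1\log(K\vee n)/n^{1/2}$, gives the stated bound $\frac{C}{(K\vee n)^3}+\frac{Cs_1\log(K\vee n)}{n^{1/2}}$. The principal difficulty is Step~1: showing that the linearization remainder stays genuinely linear in $s_1$, which hinges on the sharpened CLIME-type rate $(\ref{eqrateomega})$ and, crucially, on the $\|\bOmega^*_{\cdot k}\|_1$-free gradient bound of Lemma~\ref{lem:group_averages_gradient_hessian} — exactly the ingredient that allows the analysis to dispense with any upper bound on $\lambda_{\max}(\bS^*)$.
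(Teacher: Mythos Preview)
Your proposal is correct and follows essentially the same approach as the paper: the paper's Step~1 decomposes $n^{1/2}(\tilde\Omega_{t,k}-\Omega^*_{t,k})+\Omega^*_{t,t}h(\bOmega^*_{\cdot k})$ into three pieces I.1--I.3 bounded via Lemmas~\ref{lem:group_av_S_consistency}--\ref{lem:group_averages_consistency} (your linearization with cross-terms), and Step~2 is exactly the Berry--Esseen bound of Lemma~\ref{lem:group_averages_clt} (your i.i.d.\ average), combined as in your Step~3. Your explicit verification of $\textrm{Var}(\eta_i)=s_{tk}^2$ via the identity $\vb_t^*=\bOmega^*_{\cdot t}/\Omega^*_{t,t}$ and the Gaussian product-moment formula matches what the paper encapsulates in Lemma~\ref{lem:group_averages_clt}.
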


\begin{proposition}[Convergence Rate of the Variance Estimator]
\label{prop:convergence_group_averages_variance}
Under the same conditions as in Theorem \ref{thm:xi_asymptotic}, we have
\[
\max_{1\leq t< k\leq K}|\hat s_{t,k}^2 - s_{t,k}^2| \leq C \sqrt{\frac{s_1\log(K \vee n)}{n}},~~
\max_{1\leq t< k\leq K}\Big|\frac{\hat s_{t,k}}{s_{t,k}}-1\Big| \leq C \sqrt{\frac{s_1\log(K \vee n)}{n}},
\]
with probability at least $1-(K\vee n)^{-3}$.
\end{proposition}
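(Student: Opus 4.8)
The plan is to reduce the statement to two ingredients — boundedness of the entries of $\bOmega^*$ (from above and away from $0$ on the diagonal), and entrywise consistency of the initial estimator $\hat\bOmega$ at rate $\sqrt{s_1\log(K\vee n)/n}$ — and then finish with a one-line algebraic identity and a union bound. First I would record the deterministic facts. Since $\bS^*=\bCS+\bar\bGammaS\succcurlyeq\bCS$, with $\bar\bGammaS$ diagonal and nonnegative, Assumption~\ref{asmp:bounded_latent_covariance} gives $\lmin{\bS^*}\geq\lmin{\bCS}\geq c_1$, hence $\lmax{\bOmega^*}=1/\lmin{\bS^*}\leq 1/c_1$ and in particular $\max_{j,k}|\Omega^*_{j,k}|\leq 1/c_1$. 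Conversely $S^*_{t,t}=\CS{t}{t}+\bar\gamma^*_t\leq c_2+c_3$ (using $\bar\gamma^*_t\leq c_3/|G^*_t|\leq c_3$), so by the block-matrix inverse (Schur complement) formula $\Omega^*_{t,t}\geq 1/S^*_{t,t}\geq 1/(c_2+c_3)$, cf.\ Remark~\ref{rem:averages_assumptions}; therefore $s_{t,k}^2=\Omega_{t,k}^{*2}+\Omega^*_{t,t}\Omega^*_{k,k}\geq 1/(c_2+c_3)^2=:c_0>0$ uniformly in $t<k$.

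Next, working on the clustering event $\cE=\{\hat G=G^*\}$ of Lemma~\ref{lemcluster}, I would invoke Lemma~\ref{lem:group_averages_consistency} (and the concentration bounds of Lemmas~\ref{lem:group_av_S_consistency} and~\ref{lem:group_averages_gradient_hessian} that feed into it): under $\lambda\asymp\sqrt{\log(K\vee n)/n}$ it yields the uniform entrywise bound $\max_{1\leq j,k\leq K}|\hat\Omega_{j,k}-\Omega^*_{j,k}|\leq\delta_n$ with $\delta_n:=C\sqrt{s_1\log(K\vee n)/n}$, on an event of probability at least $1-(K\vee n)^{-c}$ for a constant $c$ that can be taken as large as needed. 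Since $\delta_n=o(1)$, this and the bound above also give $\max_{j,k}|\hat\Omega_{j,k}|\leq 2/c_1$ on the same event.

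It then remains to put the pieces together. On the intersection of these events I would write $\hat s_{t,k}^2-s_{t,k}^2=(\hat\Omega_{t,k}^2-\Omega_{t,k}^{*2})+(\hat\Omega_{t,t}\hat\Omega_{k,k}-\Omega^*_{t,t}\Omega^*_{k,k})$, bound the first summand by $|\hat\Omega_{t,k}-\Omega^*_{t,k}|\cdot|\hat\Omega_{t,k}+\Omega^*_{t,k}|\lesssim\delta_n$ and the second by $|\hat\Omega_{t,t}-\Omega^*_{t,t}|\,|\hat\Omega_{k,k}|+|\Omega^*_{t,t}|\,|\hat\Omega_{k,k}-\Omega^*_{k,k}|\lesssim\delta_n$, so that $\max_{t<k}|\hat s_{t,k}^2-s_{t,k}^2|\leq C\sqrt{s_1\log(K\vee n)/n}$. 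For the ratio I would use $\hat s_{t,k}/s_{t,k}-1=(\hat s_{t,k}^2-s_{t,k}^2)/(s_{t,k}(\hat s_{t,k}+s_{t,k}))$ together with $s_{t,k}(\hat s_{t,k}+s_{t,k})\geq s_{t,k}^2\geq c_0$. A union bound over the $\binom{K}{2}\leq(K\vee n)^2$ pairs (choosing $c>5$ above) then gives probability at least $1-(K\vee n)^{-3}$, and the clustering-failure probability $C/(d\vee n)^3$ is dominated and can be absorbed. The only step that carries real content is the uniform entrywise control of $\hat\bOmega-\bOmega^*$, i.e.\ Lemma~\ref{lem:group_averages_consistency} — which is precisely where the sharpened CLIME rate (no $\|\bOmega_{\cdot k}^*\|_1^2$ factor) is exploited — so beyond tracking constants and the union bound I do not expect a genuine obstacle here.
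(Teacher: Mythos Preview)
Your proof is correct and follows essentially the same route as the paper's: invoke Lemma~\ref{lem:group_averages_consistency} for the uniform entrywise bound $\max_{j,k}|\hat\Omega_{j,k}-\Omega^*_{j,k}|\lesssim\sqrt{s_1\log(K\vee n)/n}$, use the same algebraic decomposition of $\hat s_{t,k}^2-s_{t,k}^2$, and finish with the identity $\hat s/s-1=(\hat s^2-s^2)/(s(\hat s+s))$ together with the lower bound $s_{t,k}^2\geq 1/(c_2+c_3)^2$. The only superfluous step is your final union bound over the $\binom{K}{2}$ pairs: since Lemma~\ref{lem:group_averages_consistency} already delivers the $\max$ over all columns on a single event of probability $1-C(K\vee n)^{-3}$, the entrywise control is uniform in $(t,k)$ and no further union bound is needed.
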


\subsection*{Proof of Theorem \ref{thm:xi_asymptotic}}
The proof relies crucially on Proposition \ref{prop:asymptotic_normality_group_averages}.  Once that result is established, the proof of Theorem \ref{thm:xi_asymptotic} follows standard steps as explained below.

Denote $\cE'=\{\max_{1\leq t< k\leq K}|\hat s_{tk}/s_{tk}-1|\leq r\}$, where $r=C\sqrt{\frac{s_1\log (K\vee n)}{n}}$, and let $\bar\cE'$ signify the complement of the event $\cE'$. Let $T_{t,k}$ and $\hat T_{t,k}$ denote the statistics $\frac{n^{1/2}(\tilde\Omega_{t,k}-\Omega^*_{t,k})}{s_{tk}}$ and $\frac{n^{1/2}(\tilde\Omega_{t,k}-\Omega^*_{t,k})}{\hat s_{tk}}$, respectively.
We first consider the bound
\begin{align*}
\PP\Big(\hat T_{t,k} \leq x\Big)-\Phi(x)&\leq \PP\Big(\hat T_{t,k} \leq x, \cE', \cE\Big)-\Phi(x)+\PP(\bar\cE')+\PP(\bar\cE)\\
&=\PP\Big(T_{t,k}\leq x\frac{\hat s_{tk}}{s_{tk}}, \cE',\cE\Big)-\Phi(x)+\PP(\bar\cE')+\PP(\bar\cE).
\end{align*}
Proposition \ref{prop:convergence_group_averages_variance} implies $\PP(\bar\cE')\leq C(K\vee n)^{-3}$ and Lemma \ref{lem:pecok}  above implies $\PP(\bar\cE)\leq C(d\vee n)^{-3}$ for some constant $C$. In addition, for $x\geq 0$,
\begin{align}
&\PP\Big(T_{t,k} \leq x\frac{\hat s_{tk}}{s_{tk}}, \cE',\cE \Big)-\Phi(x) \leq \PP\Big( T_{t,k} \leq x(1+r),\cE \Big)-\Phi(x)\nonumber\\
&=\Big\{\PP\Big( T_{t,k} \leq x(1+r), \cE \Big)-\Phi(x(1+r))\Big\}+\Big\{\Phi(x(1+r))-\Phi(x)\Big\}.\label{eqthmxi1}
\end{align}
For the first term, Proposition \ref{prop:asymptotic_normality_group_averages} implies
$$
\max_{1\leq t< k\leq K}\sup_{x \in \RR}\Big|\PP\Big( T_{t,k} \leq x(1+r),\cE\Big)-\Phi(x(1+r))\Big|\lesssim \frac{s_1\log (K\vee n)}{n^{1/2}}.
$$
By the mean value theorem, the second term $\Phi(x(1+r))-\Phi(x)=\phi(x(1+tr))xr$, for some $t\in[0,1]$. It is easily seen that $\sup_{x\in\RR}\sup_{t\in[0,1]}|\phi(x(1+tr))x|\leq C$ for some constant $C$. Plugging it into (\ref{eqthmxi1}), we obtain
\begin{equation}\label{eqthmxi2}
\max_{1\leq t< k\leq K}\sup_{x \in \RR}\Big\{\PP\Big( T_{t,k} \leq x\frac{\hat s_{tk}}{s_{tk}}, \cE',\cE\Big)-\Phi(x)\Big\}\lesssim \frac{s_1\log (K\vee n)}{n^{1/2}}+r\lesssim \frac{s_1\log (K\vee n)}{n^{1/2}}.
\end{equation}
When $x<0$, similar to (\ref{eqthmxi1}), the bound is
\begin{align*}
&\PP\Big( T_{t,k} \leq x\frac{\hat s_{tk}}{s_{tk}}, \cE',\cE\Big)-\Phi(x)\\
&\leq \Big\{\PP\Big( T_{t,k} \leq x(1-r),\cE\Big)-\Phi(x(1-r))\Big\}+\Big\{\Phi(x(1-r))-\Phi(x)\Big\}.\label{eqthmxi1}
\end{align*}
Thus (\ref{eqthmxi2}) holds for $x<0$ as well. Combining these results, we obtain
$$
\max_{1\leq t< k\leq K}\sup_{x \in \RR}\Big\{\PP\Big(\hat T_{t,k} \leq x\Big)-\Phi(x)\Big\}\lesssim \frac{s_1\log (K\vee n)}{n^{1/2}}+\frac{1}{(K\vee n)^3}+\frac{1}{(d\vee n)^3}.
$$
Following the similar argument, we can also derive
$$
\max_{1\leq t< k\leq K}\sup_{x \in \RR}\Big\{\Phi(x)-\PP\Big(\hat T_{t,k} \leq x\Big)\Big\}\lesssim \frac{s_1\log (K\vee n)}{n^{1/2}}+\frac{1}{(K\vee n)^3}+\frac{1}{(d\vee n)^3}.
$$
This completes the proof.

\subsection*{Proof of Theorem \ref{thm:fdr_bound_av}, Claim 1}
The proof follows verbatim that of Theorem 8.5 in \cite{giraud2014introduction}, with the exception of the fact that exact $p$-values are replaced by approximate $p$-values, including the rate of approximation. We include the full proof for the convenience of the reader.  By the definition of the FDR,
\begin{equation}
\FDR(\hat \tau)=\EE\Big[\frac{ \sum_{(t, k) \in\cH_{0}} \II[|\tilde W_{t,k}| > \hat\tau ] \II[R_{\hat\tau} > 0]}{R_{\hat\tau}}\Big]=\sum_{(t, k) \in\cH_{0}}\EE\Big[\frac{\II[|\tilde W_{t,k}| > \hat\tau ] \II[R_{\hat\tau} > 0]}{R_{\hat\tau}}\Big].\label{eqfdr1}
\end{equation}
To handle the $R_{\hat\tau}$ in the denominator, we use the identity $1=\sum_{i=R_{\hat\tau}}^\infty\frac{R_{\hat\tau}}{i(i+1)}$. This implies
$$
1/R_{\hat\tau}=\sum_{i=R_{\hat\tau}}^\infty\frac{1}{i(i+1)}=\sum_{i=1}^\infty\frac{\II[i\geq R_{\hat\tau}]}{i(i+1)}
$$
Plugging this into (\ref{eqfdr1}) and bringing the expectation inside the summation gives that
\begin{align}
\FDR(\hat \tau)&=\sum_{(t, k) \in\cH_{0}}\sum_{i=1}^\infty\frac{1}{i(i+1)}\EE\Big[\II[|\tilde W_{t,k}| > \hat\tau ] \II[R_{\hat\tau} > 0]\II[i\geq R_{\hat\tau}]\Big]\nonumber\\
&\leq \sum_{(t, k) \in\cH_{0}}\sum_{i=1}^\infty\frac{1}{i(i+1)}\EE\Big[\II[|\tilde W_{t,k}| > \Phi^{-1}\left(1-\frac{\alpha R_{\hat\tau}}{2N_{BY}|\cH|} \right) ] \II[R_{\hat\tau} > 0]\II[i\geq R_{\hat\tau}]\Big]\nonumber\\
&\leq \sum_{(t, k) \in\cH_{0}}\sum_{i=1}^\infty\frac{1}{i(i+1)}\EE\Big[\II[|\tilde W_{t,k}| > \Phi^{-1}\left(1-\frac{\alpha (i\wedge |\cH|)}{2N_{BY}|\cH|} \right) ] \Big],\label{eqfdr2}
\end{align}
where the second line follows from the definition of the FDR cutoff and the last inequality holds since $R_{\hat\tau}\leq (i\wedge |\cH|)$. The Berry-Esseen bound in Theorem \ref{thm:xi_asymptotic} implies that
$$
\PP(|\tilde W_{t,k}| > \Phi^{-1}\left(1-\frac{\alpha (i\wedge |\cH|)}{2N_{BY}|\cH|} \right))\leq \frac{\alpha (i\wedge |\cH|)}{N_{BY}|\cH|}+2b_n.
$$
Thus, it follows that
\begin{align*}
\FDR(\hat \tau)&\leq \sum_{(t, k) \in\cH_{0}}\sum_{i=1}^\infty \frac{1}{i(i+1)}\Big(\frac{\alpha (i\wedge |\cH|)}{N_{BY}|\cH|}+2b_n\Big)\\
&=\alpha\frac{|\cH_0|}{|\cH|}\Big(\sum_{i=1}^{|\cH|}\frac{i}{i(i+1)N_{BY}}+\sum_{i=|\cH|+1}^{\infty}\frac{|\cH|}{i(i+1)N_{BY}}\Big)+2|\cH_0| b_n\\
&=\alpha\frac{|\cH_0|}{|\cH|}\Big(\sum_{i=1}^{|\cH|}\frac{1}{i+1}\frac{1}{N_{BY}}+\frac{|\cH|}{|\cH|+1}\frac{1}{N_{BY}}\Big)+2|\cH_0| b_n\\
&\leq \alpha+2|\cH_0| b_n,
\end{align*}
where the last step follows from $\frac{|\cH_0|}{|\cH|}\leq 1$ and the definition of $N_{BY}$. This completes the proof of the Theorem \ref{thm:fdr_bound_av}, Claim 1.

\subsection*{Proof of Proposition \ref{prop:asymptotic_normality_group_averages}}

\begin{proof}
The proof is done in two steps. In Step 1, we show that intersected with the event $\cE$,
\begin{equation}
\label{eqn:asymptotic_normality_ga_1}
n^{1/2}|(\OmegaT{t}{k} - \OmegaS{t}{k}) /s_{t,k} + \OmegaS{t}{t}h(\bOmegaS[\cdot k])/s_{t,k}| \leq \frac{s_1 \log (K\vee n)}{n^{1/2} },
\end{equation}
with probability at least $1-C/(K\vee n)^3$ and then use Lemma \ref{lem:group_averages_clt} to obtain the result.   To prove \eqref{eqn:asymptotic_normality_ga_1}, we decompose it as
\begin{align*}
&n^{1/2}|(\OmegaT{t}{k} - \OmegaS{t}{k})  + \OmegaS{t}{t} h(\bOmega_{\cdot k}^*)| \\
&= n^{1/2}|(\OmegaH{t}{k} - \OmegaS{t}{k}) -\OmegaH{t}{t} \hat h(\hat\bOmega_{\cdot k}) + \OmegaS{t}{t} h(\bOmega_{\cdot k}^*)| \\
&\leq \underlabel{n^{1/2} |(\OmegaH{t}{k} - \OmegaS{t}{k}) - \OmegaS{t}{t} (h(\hat\bOmega_{\cdot k})-h(\bOmega_{\cdot k}^*))|}{I.1} \\
&\quad + \underlabel{n^{1/2} |\OmegaS{t}{t} (\hat h(\hat\bOmega_{\cdot k})-h(\hat\bOmega_{\cdot k}))|}{I.2} + \underlabel{n^{1/2} |(\OmegaH{t}{t}-\OmegaS{t}{t})\hat h(\hat\bOmega_{\cdot k})
|}{I.3}.
\end{align*}
In the following, we study these three terms separately. Recall that $h(\bOmega_{\cdot k}) = \vb_t^{*T}(\hat\bS\bOmega_{\cdot k} - \eb_k),$ and $\hat h(\bOmega_{\cdot k})=\hat\vb_{t}^T(\hat{\bS}\bOmega_{\cdot k}-\eb_k)$, where ${\vb}^*_{t}$ is a $K$-dimensional vector with $(\vb_{t}^*)_t=1$ and $(\vb_{t}^*)_{-t}=-\wb^*_{t}$ with $\wb^*_t=(\bS^*_{-t,-t})^{-1}\bS^*_{-t,t}$.
Term I.1 reduces to
\begin{align}
|I.1|&=n^{1/2} |(\OmegaH{t}{k} - \OmegaS{t}{k}) - \OmegaS{t}{t} \vb_t^{*T}\hat\bS(\hat\bOmega_{\cdot k}-\bOmega_{\cdot k}^*)|\nonumber\\
&\leq n^{1/2}|(\OmegaH{t}{k} - \OmegaS{t}{k})(1-\Omega_{t,t}^*(\hat\bS_{t,t}-\wb^{*T}_t\hat\bS_{-t,t}))|\label{eqasym1}\\
&+n^{1/2}\OmegaS{t}{t} |(\hat \bS_{t,-t}-\wb^{*T}_t\hat \bS_{-t,-t})(\hat\bOmega_{-t,k}-\bOmega^*_{-t,k})|. \label{eqasym2}
\end{align}
Note that $1/\Omega_{t,t}^* = S^*_{t,t}-\wb^{*T}_t\bS^*_{-t,t}$. The term in \eqref{eqasym1} can be bounded by
\begin{align*}
&n^{1/2}|(\OmegaH{t}{k} - \OmegaS{t}{k})\Omega_{t,t}^*(\hat S_{t,t}- S_{t,t}^*)|+n^{1/2}|(\OmegaH{t}{k} - \OmegaS{t}{k})\Omega_{t,t}^*\wb_t^{*T}(\hat\bS_{-t,t}-\bS_{-t,t}^*)|\\
&\leq n^{1/2}|\Omega_{t,t}^*| \|\hat\bOmega_{\cdot k}-\bOmega_{\cdot k}^*\|_1 \max(\|\hat\bS-\bS^*\|_{\max}, \|\wb_t^{*T}(\hat\bS_{-t\cdot}-\bS_{-t\cdot}^*)\|_\infty)\\
&\leq \frac{Cs_1\log(K\vee n)}{n^{1/2}},
\end{align*}
with probability at least $1-(K\vee n)^{-3}$, by $\lambda_{\max}(\bOmega^*)\leq C$ and the concentration and error bound results in Lemmas \ref{lem:group_av_S_consistency}, \ref{lem:group_averages_gradient_hessian}, \ref{lem:group_averages_consistency}. The term in (\ref{eqasym2}) can be bounded by
$$
n^{1/2}\Omega^*_{t,t} \|\hat \bS_{t,-t}-\wb^{*T}_t\hat \bS_{-t,-t}\|_\infty\|\hat\bOmega_{-t,k}-\bOmega^*_{-t,k}\|_1\leq \frac{Cs_1\log(K\vee n)}{n^{1/2}},
$$
with probability at least $1-(K\vee n)^{-3}$ again by Lemmas \ref{lem:group_averages_gradient_hessian}, \ref{lem:group_averages_consistency}. Thus, $|\text{I.1}|\leq \frac{s_1\log(K\vee n)}{n^{1/2}}$ with probability at least $1-(K\vee n)^{-3}$.

For term I.2, we have
\begin{align*}
|I.2|&=n^{1/2}\Omega_{t,t}^* |(\hat \vb_t-\vb^*)^T(\hat\bS\hat\bOmega_{\cdot k} - \eb_k)|\\
&\leq n^{1/2}\Omega_{t,t}^* \|\hat \vb_t-\vb^*\|_1\|\hat\bS\hat\bOmega_{\cdot k} - \eb_k\|_\infty\leq \frac{Cs_1\log(K\vee n)}{n^{1/2}},
\end{align*}
with probability at least $1-(K\vee n)^{-3}$ by Lemma \ref{lem:group_averages_consistency} and the constraint of the CLIME-type estimator.

To control term I.3, first we observe that
\begin{align*}
|\hat h(\hat\bOmega_{\cdot k})|&=|\hat\vb_t^T(\hat\bS\hat\bOmega_{\cdot k}-\eb_k)|\\
&\leq |\vb_t^{*T}(\hat\bS \bOmega^*_{\cdot k}-\eb_k)|+ |\vb_t^{*T}\hat\bS (\hat\bOmega_{\cdot k}-\bOmega^*_{\cdot k})|+|(\hat\vb_t-\vb_t^{*})^T(\hat\bS \hat\bOmega_{\cdot k}-\eb_k)|\\
&\leq \|\vb^*_t\|_1\|\hat\bS \bOmega^*_{\cdot k}-\eb_k\|_\infty+\|\hat\bS_{t,-t}-\wb_t^{*T}\hat\bS_{-t,-t}\|_\infty\|\hat\bOmega_{\cdot k}-\bOmega^*_{\cdot k}\|_1\\
&\quad +\|\hat\vb_t-\vb_t^{*}\|_1\|\hat\bS \hat\bOmega_{\cdot k}-\eb_k\|_\infty.
\end{align*}
As shown in the proof of Lemma \ref{lem:group_averages_clt}, $\|\vb^*_t\|_1\leq s_1^{1/2}\|\vb^*_t\|_2\leq Cs_1^{1/2}$. The rest of the bounds on the above terms follows easily from Lemmas \ref{lem:group_averages_gradient_hessian}, \ref{lem:group_averages_consistency}. Thus, we have $|\hat h(\hat\bOmega_{\cdot k})|\leq C(s_1\log (K\vee n)/n)^{1/2}$ with high probability. Since $$|\hat\Omega_{t,t}-\Omega_{t,t}^*|\leq C(s_1\log (K\vee n)/n)^{1/2},$$ by Lemma {\ref{lem:group_averages_consistency}}, we obtain that $|\text{I.3}|\leq \frac{s_1\log(K\vee n)}{n^{1/2}}$ with probability at least $1-(K\vee n)^{-3}$.
It is easily seen that $\Omega_{t,t}^*\geq \frac{1}{S^*_{t,t}}\geq C>0$, see Remark \ref{rem:averages_assumptions}. This implies that $s^2_{t,k}=\Omega_{t,t}^*\Omega_{k,k}^*+\Omega_{t,k}^{*2}$ is lower bounded by a positive constant.  The proof of (\ref{eqn:asymptotic_normality_ga_1}) is complete.

In step 2, we need to verify that
\[
\max_{1\leq t< k\leq K}\sup_{x \in \RR}\Big|\PP\Big(\frac{n^{1/2}\Omega_{t,t}^*h(\bOmega_{\cdot k}^*)}{s_{tk}}\leq x\Big)-\Phi(x)\Big|\leq \frac{C}{(K\vee n)^3}+\frac{1}{n^{1/2}},
\]
which has been done in Lemma \ref{lem:group_averages_clt}. Thus, combining with result (\ref{eqn:asymptotic_normality_ga_1}), we can use the same simple union bound as in the proof of Theorem \ref{thm:xi_asymptotic} to obtain the desired result.
\end{proof}

\subsection*{Proof of Proposition \ref{prop:convergence_group_averages_variance}}
\begin{proof}
By Lemma \ref{lem:group_averages_consistency}, under the event $\hat G=G^*$, we have
$$
\max_{1\leq t,k\leq K} |\hat\Omega_{t,k}-\Omega^*_{t,k}|\leq \max_{1\leq k\leq K} \|\hat\bOmega_{\cdot k}-\bOmega^*_{\cdot k}\|_2\leq C_1 \sqrt{\frac{s_1\log (K\vee n)}{n}},
$$
with probability at least $1-\frac{C_4}{(K\vee n)^3}$. Under this event,
\begin{align*}
\max_{1\leq t< k\leq K}|\hat s_{t,k}^2 - s_{t,k}^2|&=\max_{1\leq t< k\leq K}|\hat\Omega_{t,k}^2 + \hat\Omega_{t,t}\hat\Omega_{k,k}-(\Omega_{t,k}^{*2} + \Omega_{t,t}^*\Omega_{k,k}^*)|\\
&\leq \max_{1\leq t< k\leq K}|(\hat\Omega_{t,k}-\Omega_{t,k}^*)(\hat\Omega_{t,k}+\Omega_{t,k}^*)|\\
&\quad~~ + \hat\Omega_{t,t}|\hat\Omega_{k,k}-\Omega_{k,k}^*| +\Omega_{k,k}^*|\hat\Omega_{t,t}-\Omega_{t,t}^*|\\
&\leq C_1 \sqrt{\frac{s_1\log (K\vee n)}{n}}(4\|\bOmega^*\|_{\max}+\delta)\\
&\leq C \sqrt{\frac{s_1\log(K \vee n)}{n}},
\end{align*}
for some constant $\delta>0$ since $\|\bOmega^*\|_{\max}\leq \lambda_{\max}(\bOmega^*)\leq C$. It is easily verified that $\Omega_{t,t}^*\geq \frac{1}{S^*_{t,t}}\geq C>0$ (see Remark \ref{rem:averages_assumptions}). This implies that $s^2_{t,k}=\Omega_{t,t}^*\Omega_{k,k}^*+\Omega_{t,k}^{*2}$ is lower bounded by a positive constant. Thus,
$$
\max_{1\leq t< k\leq K}\Big|\frac{\hat s_{t,k}}{s_{t,k}}-1\Big| \leq \max_{1\leq t< k\leq K}\Big|\frac{\hat s^2_{t,k}-s^2_{t,k}}{s_{t,k}(s_{t,k}+\hat s_{t,k})}\Big|\leq C \sqrt{\frac{s_1\log(K \vee n)}{n}}.
$$
\end{proof}

\subsection{Key Lemmas for Estimators of the Cluster-Average Graph}

\begin{remark}
In the following proofs, we always assume the event $\cE=\{\hat G=G^*\}$ holds. Using a similar argument to that used  in the proof of Theorem \ref{thm:xi_asymptotic}, the following bounds will hold with probability at least $1 - \frac{C}{(K\vee n)^3}$.
\end{remark}

\begin{lemma}[Consistency of $\bEssH$]
\label{lem:group_av_S_consistency}
If Assumptions \ref{asmp:bounded_latent_covariance} and \ref{asmp:bounded_errors} hold, then with probability greater than $1 - \frac{C}{(K\vee n)^3}$,
$$
\|\bEssH - \bEssS \|_{\max}\leq C\sqrt{\frac{\log (K\vee n)}{n}}
$$
for some  constant $C$ dependent only on $c_1$, $c_2$, and $c_3$ from Assumptions \ref{asmp:bounded_latent_covariance} and \ref{asmp:bounded_errors}.
\end{lemma}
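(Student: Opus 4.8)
The plan is to reduce the claim to a textbook Gaussian concentration bound for the sample covariance of the \emph{population-level} cluster averages, after discharging the clustering step at negligible cost. First I would invoke Lemma~\ref{lemcluster}: on the event $\cE=\{\hat G=G^*\}$, which has probability at least $1-C/(d\vee n)^3$, every $\hat G_k$ coincides with $G_k^*$ (modulo the label permutation we have agreed to ignore), so $\hat X_{ik}=\bar X_{ik}$ for all $i,k$ and hence $\bEssH=\bar\bS:=n^{-1}\sum_{i=1}^n\bar\bX_i\bar\bX_i^T$ on $\cE$. Consequently $\PP(\|\bEssH-\bEssS\|_{\max}>u)\le\PP(\|\bar\bS-\bEssS\|_{\max}>u)+\PP(\cE^c)$ for every $u>0$, and since $m\ge 2$ forces $d\ge 2K$ we have $(d\vee n)^{-3}\le(K\vee n)^{-3}$, so the $\PP(\cE^c)$ term is already of the advertised order. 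It then suffices to control $\|\bar\bS-\bEssS\|_{\max}$, which no longer involves the estimated partition.

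For that I would use that $\bar\bX\sim\cN(0,\bEssS)$ with $\bEssS=\bCS+\bar\bGammaS$, so that by Assumptions~\ref{asmp:bounded_latent_covariance} and~\ref{asmp:bounded_errors} (equivalently, by Remark~\ref{rem:averages_assumptions}) one has $\max_k\EssS{k}{k}=\max_k(\CS{k}{k}+\bar\gamma_k^*)\le c_2+c_3$. Fixing $t<k$, I would write $\bar S_{t,k}-\EssS{t}{k}=n^{-1}\sum_{i=1}^n(\bar X_{it}\bar X_{ik}-\EE[\bar X_{it}\bar X_{ik}])$ as an average of i.i.d.\ centered variables; each summand is a product of jointly Gaussian coordinates with variances at most $c_2+c_3$, hence sub-exponential with Orlicz $\psi_1$-norm $\lesssim c_2+c_3$ (alternatively, polarizing via $2\bar X_{it}\bar X_{ik}=\tfrac12[(\bar X_{it}+\bar X_{ik})^2-(\bar X_{it}-\bar X_{ik})^2]$ reduces this to a one-dimensional $\chi^2_1$ tail bound). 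Bernstein's inequality then yields, for an absolute constant $c_0$, $\PP(|\bar S_{t,k}-\EssS{t}{k}|>u)\le 2\exp(-c_0 n\min(u^2/(c_2+c_3)^2,\,u/(c_2+c_3)))$.

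Finally I would choose $u=C'\sqrt{\log(K\vee n)/n}$ with $C'$ depending only on $c_1,c_2,c_3$ and large enough that the minimum in the Bernstein exponent is attained at the quadratic term (so the single-entry bound is at most $2(K\vee n)^{-5}$), and take a union bound over the $K(K+1)/2\le(K\vee n)^2$ pairs; combined with the first step this gives $\PP(\|\bEssH-\bEssS\|_{\max}>C'\sqrt{\log(K\vee n)/n})\le C/(K\vee n)^3$, which is exactly the assertion. I do not expect a genuine obstacle here: the entire argument is standard Gaussian concentration, and the only points needing mild care are the initial reduction to $\cE$ (so that one bounds the fixed object $\bar\bS$ rather than the data-dependent $\bEssH$) and the bookkeeping of constants so that the $K^2$ union-bound factor is absorbed into the exponent; the only place the model structure enters is the uniform bound $\max_k\EssS{k}{k}\le c_2+c_3$ furnished by the two assumptions.
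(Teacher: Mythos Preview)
Your proposal is correct and matches the paper's approach: the paper's proof is a one-line citation (``follows from the proof of Theorem~1 in \cite{Cai11a}''), after a blanket remark that all bounds are stated on the event $\cE=\{\hat G=G^*\}$. You have simply written out the standard argument behind that citation---reduce to $\cE$, use $\max_k\EssS{k}{k}\le c_2+c_3$, apply sub-exponential/Bernstein concentration to each entry of $\bar\bS-\bEssS$, and union-bound over the $O(K^2)$ entries---so there is nothing to add.
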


\begin{proof}
The proof follows from the proof of Theorem 1 in \cite{Cai11a}.
\end{proof}

\begin{lemma}[Concentration of Gradient and Hessian]
\label{lem:group_averages_gradient_hessian}
If Assumptions \ref{asmp:bounded_latent_covariance} and \ref{asmp:bounded_errors} hold, then with probability greater than $1-\frac{C}{(K\vee n)^3}$, we have that
\begin{itemize}
\item[(a)] $\max_{1\leq k\leq K}\|\bEssH\bOmegaS[\cdot k]-\eb_k\|_{\infty}\leq C_1\sqrt{\frac{\log (K\vee n)}{n}}$,
\item[(b)] $\max_{1\leq t\leq K}\|\bEssH[t,-t]-\wb_t^{*T}\bEssH[-t,-t]\|_\infty\leq C_2\sqrt{\frac{\log (K\vee n)}{n}}$, and
\item[(c)] $\max_{1\leq t\leq K}\|\wb_t^{*T}(\bEssH[-t,-t]-\bEssS[-t,-t])\|_\infty\leq C_2\sqrt{\frac{\log (K\vee n)}{n}}$,
\end{itemize}
for some constants $C_1$ and $C_2$ dependent only on $c_1$, $c_2$, and $c_3$ from Assumptions \ref{asmp:bounded_latent_covariance} and \ref{asmp:bounded_errors}.
\end{lemma}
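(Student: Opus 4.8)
The plan is to carry out everything on the event $\cE=\{\hat G=G^*\}$, on which $\bEssH=\bar\bS:=n^{-1}\sum_{i=1}^n\bar\bX_i\bar\bX_i^T$ is the sample covariance of $n$ i.i.d. vectors $\bar\bX_i\sim\cN(0,\bEssS)$; by the Remark following Lemma \ref{lemcluster}, any statement that holds with probability at least $1-C/(K\vee n)^3$ on $\cE$ yields the claimed bound after paying the additional $C/(d\vee n)^3$ already absorbed elsewhere. All three estimates then follow from a single device: write the relevant coordinate as a centered empirical average $n^{-1}\sum_{i=1}^n(U_iV_i-\EE U_iV_i)$, where $U_i,V_i$ are two jointly Gaussian scalars whose variances are bounded by an absolute constant. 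Such a product is sub-exponential with $\psi_1$-norm bounded by an absolute constant --- \emph{crucially, with no dependence on $K$ or on $\ell_1$-norms of rows of $\bOmegaS$} --- so Bernstein's inequality for independent sub-exponentials gives a tail of the form $2\exp(-cn\min(t^2,t))$, and taking $t\asymp\sqrt{\log(K\vee n)/n}$ with a large enough constant, together with a union bound over the $O(K^2)$ indices, produces the stated rate.

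The one nontrivial preliminary is to record the boundedness facts, and this is where the operator-norm condition on $\bEssS$ is avoided. By Remark \ref{rem:averages_assumptions}, $c_1\le\lmin{\bEssS}$ and $\max_t\EssS{t}{t}\le c_2+c_3$. From the Schur-complement identity $1/\OmegaS{t}{t}=\EssS{t}{t}-\wb_t^{*T}\bEssS[-t,-t]\wb_t^*$ one gets $\mathrm{Var}(\wb_t^{*T}\bar\bX_{i,-t})=\wb_t^{*T}\bEssS[-t,-t]\wb_t^*\le\EssS{t}{t}\le c_2+c_3$, and hence also $\|\wb_t^*\|_2^2\le\lmin{\bEssS[-t,-t]}^{-1}(c_2+c_3)\le(c_2+c_3)/c_1$. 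Similarly $\mathrm{Var}(\bOmegaS[\cdot k]^T\bar\bX_i)=\bOmegaS[\cdot k]^T\bEssS\bOmegaS[\cdot k]=\OmegaS{k}{k}\le\lmax{\bOmegaS}=1/\lmin{\bEssS}\le 1/c_1$. Together with $\max_j\EssS{j}{j}\le c_2+c_3$, these bound all the variances that will enter the sub-exponential norms below.

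For part (a), since $\bEssS\bOmegaS[\cdot k]=\eb_k$ we have $\bEssH\bOmegaS[\cdot k]-\eb_k=(\bEssH-\bEssS)\bOmegaS[\cdot k]$, whose $j$-th coordinate equals $n^{-1}\sum_{i=1}^n(\bar X_{ij}W_i^{(k)}-\EE \bar X_{ij}W_i^{(k)})$ with $W_i^{(k)}:=\bOmegaS[\cdot k]^T\bar\bX_i$; both factors are Gaussian with variance at most $c_2+c_3$ and $1/c_1$ respectively, so Bernstein plus a union bound over $j,k\in[K]$ gives (a). For part (c), the $j$-th coordinate of $\wb_t^{*T}(\bEssH[-t,-t]-\bEssS[-t,-t])$ equals $n^{-1}\sum_{i=1}^n(Y_{i,t}\bar X_{ij}-\EE Y_{i,t}\bar X_{ij})$ with $Y_{i,t}:=\wb_t^{*T}\bar\bX_{i,-t}$, again a product of two jointly Gaussian scalars with constant-bounded variances, so the same argument applies. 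For part (b), I would use the exact population identity $\bEssS[t,-t]-\wb_t^{*T}\bEssS[-t,-t]=\bS^*_{t,-t}(\bS^*_{-t,-t})^{-1}\bS^*_{-t,-t}-\bS^*_{t,-t}=0$, immediate from $\wb_t^*=(\bEssS[-t,-t])^{-1}\bEssS[-t,t]$, so that $\|\bEssH[t,-t]-\wb_t^{*T}\bEssH[-t,-t]\|_\infty\le\|\bEssH-\bEssS\|_{\max}+\|\wb_t^{*T}(\bEssH[-t,-t]-\bEssS[-t,-t])\|_\infty$, the first term bounded by Lemma \ref{lem:group_av_S_consistency} and the second by part (c).

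The main obstacle --- really the only step requiring care --- is the boundedness claim of the second paragraph: establishing that the variances feeding the sub-exponential norms are absolute constants \emph{without} assuming $\lmax{\bEssS}\le C$. This is handled by the Schur-complement identities, which show that only $\lmin{\bEssS}$ (bounded below) and the diagonal of $\bEssS$ (bounded above) matter. Everything downstream is a routine application of Bernstein's inequality, and the union bound over the $O(K^2)$ entries costs only the logarithmic factor already present in the target rate.
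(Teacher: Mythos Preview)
Your proposal is correct and a bit more streamlined than the paper's own argument, so it is worth recording the difference. The paper does not work directly with $\bar\bX_i\sim\cN(0,\bEssS)$. Instead it expands $\bEssH-\bEssS$ through the latent structure via the decomposition in \eqref{eqn:group_averages_decomp}, yielding four pieces built from $\bZ_i\bZ_i^T-\Cb^*$, the cross terms $\bZ_i\bE_i^T\Ab^*\Bb^{*-1}$, and $\Bb^{*-1}\Ab^{*T}(\bE_i\bE_i^T-\bGamma^*)\Ab^*\Bb^{*-1}$, and then invokes the dedicated concentration Lemmas \ref{lem:conc_sum_ZiZi}--\ref{lem:conc_sum_EiEi} (specifically their parts (c)--(d), which already multiply by a bounded-$\ell_2$ vector). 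To feed those lemmas, the paper bounds $\|\bOmegaS[\cdot k]\|_2\le 1/\lmin{\bEssS}$ and $\|\wb_t^*\|_2\le \lmax{\bOmegaS}\max_t\EssS{t}{t}$ via the block-inverse formula and Lemma \ref{lem:pd_matrix_diag}. Your route bypasses the latent decomposition entirely: you stay at the level of $\bar\bX_i$ and observe that each relevant coordinate is a centered average of products $U_iV_i$ of jointly Gaussian scalars, then control the sub-exponential norm by bounding $\Var(U_i)$ and $\Var(V_i)$ directly --- in particular using the Schur-complement identity $\wb_t^{*T}\bEssS[-t,-t]\wb_t^*=\EssS{t}{t}-1/\OmegaS{t}{t}\le c_2+c_3$ and $\Var(\bOmegaS[\cdot k]^T\bar\bX_i)=\OmegaS{k}{k}\le 1/c_1$, which is a slightly cleaner way of making the same ``only $\lmin{\bEssS}$ and diagonal entries matter'' point. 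The trade-off is that the paper's decomposition is reused verbatim for the latent graph (Lemma \ref{lem:latent_gradient_hessian}), where $\hat\Cb$ is not a Gaussian sample covariance and your shortcut would not apply; for the cluster-average graph alone, your argument is the more economical one. Part (b) is handled identically in both proofs, by splitting into $\|\bEssH-\bEssS\|_{\max}$ plus the quantity in (c).
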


\begin{proof}
We start from the decomposition
\begin{align*}
\bEssH - \bEssS &=(\Ab^{*T}\Ab^*)^{-1}\Ab^{*T}(\hat{\bSigma}-\bSigma^*)\Ab^*(\Ab^{*T}\Ab^*)^{-1}.
\end{align*}
Denoting by $\Bb^*=\Ab^{*T}\Ab^*$, we can write
\begin{align}
&(\Ab^{*T}\Ab^*)^{-1}\Ab^{*T}(\hat{\bSigma}-\bSigma^*)\Ab^*(\Ab^{*T}\Ab^*)^{-1}\nonumber\\
&=\frac{1}{n}\sum_{i=1}^n \Bb^{*-1}\Ab^{*T}\Big\{(\Ab^*\bZ_i+\bE_i)(\Ab^*\bZ_i+\bE_i)^T-\Ab^*\Cb^*\Ab^{*T}-\bGamma^*\Big\}\Ab^*\Bb^{*-1}\nonumber\\
&=\frac{1}{n}\sum_{i=1}^n\bZ_i\bZ_i^T-\Cb^*+\frac{1}{n}\sum_{i=1}^n \bZ_i\bE_i^T\Ab^*\Bb^{*-1}+\frac{1}{n}\sum_{i=1}^n \Bb^{*-1}\Ab^{*T}\bE_i\bZ_i^T\nonumber\\
&~~~+\frac{1}{n}\sum_{i=1}^n \Bb^{*-1}\Ab^{*T}(\bE_i\bE_i^T-\bGamma^*)\Ab^*\Bb^{*-1}.\label{eqn:group_averages_decomp}
\end{align}
First, note that $\bEssH\bOmegaS[\cdot k]-\eb_k=(\bEssH-\bEssS)\bOmegaS[\cdot k]$, and $||\bOmegaS[\cdot k]||_2 \leq \lmax{\bOmegaS} \leq \lmin{\bEssS}^{-1}$. As seen in Remark \ref{rem:averages_assumptions}, we can bound the smallest eigenvalue of $\bEssS$ from below by $\lmin{\bCS}$.
We therefore obtain that $||\bOmegaS[\cdot k]||_2 \leq c_1 + c_3$. Using this bound, we apply Lemma \ref{lem:conc_sum_ZiZi}, part (d) to control the term
\[
\frac{1}{n}\sum_{i=1}^n\left(\bZ_i\bZ_i^T-\Cb^*\right)\bOmegaS[\cdot k].
\]
The remaining terms in \eqref{eqn:group_averages_decomp} can similarly be controlled using Lemma \ref{lem:conc_sum_ZiEi} and Lemma \ref{lem:conc_sum_EiEi}. Applying the triangle inequality therefore gives
\[
\max_{1\leq k\leq K}\|\bEssH\bOmegaS[\cdot k]-\eb_k\|_{\infty}\leq C_1\sqrt{\frac{\log (K\vee n)}{n}}
\]
with probability at least $1 - \frac{C}{(K\vee n)^3}$, concluding the proof of claim (a).


For the remaining two claims, we can rewrite
\[
\wb_t^* = \left(\EssS{t}{t} - \bS^{*T}_{-t,t}(\bEssS[-t,-t])^{-1}\bEssS[-t,t]\right) \bOmegaS[-t,t] = \frac{1}{\OmegaS{t}{t}}\bOmegaS[-t,t]
\]
by the block matrix inverse formula. Using Lemma \ref{lem:pd_matrix_diag}, it follows that $||\wb^*_t||_2 \leq \lmin{\bOmegaS} \max_t \EssS{t}{t}$. Then we can see that
\begin{align*}
\max_{1\leq t\leq K}||\bEssH[t,-t]-\wb_t^{*T}\bEssH[-t,-t]||_{\infty} &= \max_{1\leq t\leq K}||(\bEssH[t,-t]-\bEssS[t,-t])-\wb_t^{*T}(\bEssH[-t,-t]-\bEssS[-t,-t])||_{\infty} \\
&\leq \underlabel{\max_{1\leq t\leq K}||(\bEssH[t,-t]-\bEssS[t,-t])||_{\infty}}{(i)} + \underlabel{\max_{1 \leq t\leq K}||\wb_t^{*T}(\bEssH[-t,-t]-\bEssS[-t,-t])||_\infty}{(ii)}.
\end{align*}
By using Lemma \ref{lem:group_av_S_consistency}, (i) is bounded with high probability. Following the same step as in the proof of (a), we have that  $(ii)\leq C_2\sqrt{\frac{\log (K\vee n)}{n}}$,
with probability at least $1 - \frac{C}{(K\vee n)^3}$. Part (c) is the same as the term (ii), concluding the proof.
\end{proof}

\begin{lemma}[Consistency of Initial Estimator]
\label{lem:group_averages_consistency}
If Assumptions \ref{asmp:bounded_latent_covariance} and \ref{asmp:bounded_errors} hold, then,
\begin{itemize}
\item[(a)] $\max_{1\leq k\leq K}\|\bOmegaH[\cdot k]-\bOmegaS[\cdot k] \|_1\leq C_1 s_1\sqrt{\frac{\log (K\vee n)}{n}}$,
\item[(b)] $\max_{1\leq k\leq K}\|\bOmegaH[\cdot k]-\bOmegaS[\cdot k] \|_2\leq C_1 \sqrt{\frac{s_1\log (K\vee n)}{n}}$,
\item[(c)] $\max_{1\leq t\leq K}\|\hat\vb_{t}-\vb_{t}^*\|_1 \leq C_2 s_1\sqrt{\frac{\log (K\vee n)}{n}}$, and
\item[(d)] $\max_{1\leq k\leq t\leq K}|(\hat\vb_{t}-\vb_{t}^*)^T\bEssH(\bOmegaH[\cdot k]-\bOmegaS[\cdot k])|\leq C_3 \frac{s_1\log (K\vee n)}{n}$,
\end{itemize}
with probability at least $1-\frac{C_4}{(K\vee n)^3}$. $C_1$, $C_2$, $C_3$, $C_4$ and $C_5$ are constants, dependent only upon $c_0$, $c_1$, $c_2$, and $c_3$ from Assumptions \ref{asmp:bounded_latent_covariance} and \ref{asmp:bounded_errors}.
\end{lemma}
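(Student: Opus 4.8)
The plan is to handle all three bounds with a single Dantzig-selector template, the only nonstandard ingredient being the source of the feasibility certificate for the population target. Throughout I would condition on $\cE=\{\hat G=G^*\}$, so that $\bEssH$ and $\bOmegaS$ are the empirical and population covariance/precision of the \emph{true} cluster averages; the term $C_4/(K\vee n)^3$ in the probability bound then simply collects the exceptional sets of Lemmas \ref{lem:group_av_S_consistency} and \ref{lem:group_averages_gradient_hessian}. For each of the three estimators the steps are the same: (i) use the refined gradient/Hessian bounds of Lemma \ref{lem:group_averages_gradient_hessian} to certify that the population target is feasible for the relevant convex program with a tuning parameter of order $\sqrt{\log(K\vee n)/n}$, \emph{without} any $\|\bOmegaS[\cdot k]\|_1$ factor; (ii) extract the $\ell_1$-cone inequality from optimality; (iii) combine a H\"older bound on a quadratic form $\bDelta^T\bEssH\bDelta$ with a restricted eigenvalue lower bound, converting to $\ell_2$ and then back to $\ell_1$.

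For part (a) I would take $\lambda=C_1\sqrt{\log(K\vee n)/n}$ with $C_1$ the constant of Lemma \ref{lem:group_averages_gradient_hessian}(a), which gives $\|\bEssH\bOmegaS[\cdot k]-\eb_k\|_\infty\leq\lambda$; thus $\bOmegaS[\cdot k]$ is feasible for (\ref{eqclime1}) and $\|\bOmegaH[\cdot k]\|_1\leq\|\bOmegaS[\cdot k]\|_1$. Writing $\bDelta=\bOmegaH[\cdot k]-\bOmegaS[\cdot k]$ and $S=\textrm{supp}(\bOmegaS[\cdot k])$ with $|S|\leq s_1$, the basic inequality gives the cone bound $\|\bDelta_{S^c}\|_1\leq\|\bDelta_S\|_1$ (hence $\|\bDelta\|_1\leq 2\sqrt{s_1}\|\bDelta\|_2$), while feasibility of $\bOmegaH[\cdot k]$ and $\bOmegaS[\cdot k]$ gives $\|\bEssH\bDelta\|_\infty\leq 2\lambda$. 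H\"older then yields $\bDelta^T\bEssH\bDelta\leq\|\bDelta\|_1\|\bEssH\bDelta\|_\infty\leq 4\lambda\|\bDelta_S\|_1\leq 4\lambda\sqrt{s_1}\,\|\bDelta\|_2$, and on the other hand $\bEssH=\bEssS+(\bEssH-\bEssS)$, $\lmin{\bEssS}\geq c_1$ (Remark \ref{rem:averages_assumptions}), and Lemma \ref{lem:group_av_S_consistency} give
\[
\bDelta^T\bEssH\bDelta\;\geq\;\Big(c_1-\|\bEssH-\bEssS\|_{\max}\tfrac{\|\bDelta\|_1^2}{\|\bDelta\|_2^2}\Big)\|\bDelta\|_2^2\;\geq\;\Big(c_1-Cs_1\sqrt{\tfrac{\log(K\vee n)}{n}}\Big)\|\bDelta\|_2^2\;\geq\;\tfrac{c_1}{2}\|\bDelta\|_2^2,
\]
the last inequality using $s_1\sqrt{\log(K\vee n)/n}=o(1)$ in the regime of Theorem \ref{thm:xi_asymptotic} (equivalently, one may quote Lemma \ref{lem:latent_re_condition}, cf. Remark \ref{rem:averages_assumptions}). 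Combining the two displays gives $\|\bDelta\|_2\lesssim\sqrt{s_1\log(K\vee n)/n}$ and then $\|\bDelta\|_1\lesssim s_1\sqrt{\log(K\vee n)/n}$, which is (a).

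Part (b) I would reduce to (a): $\hat\vb_t$ and $\vb^*_t$ agree in coordinate $t$ and differ elsewhere by $-(\hat\wb_t-\wb^*_t)$, so $\|\hat\vb_t-\vb^*_t\|_1=\|\hat\wb_t-\wb^*_t\|_1$; the block-inverse identity $\wb^*_t=\bOmegaS[-t,t]/\OmegaS{t}{t}$ shows $\|\wb^*_t\|_0\leq s_1$, and Lemma \ref{lem:group_averages_gradient_hessian}(b) certifies that $\wb^*_t$ is feasible for (\ref{eqw}) with $\lambda'=C_2\sqrt{\log(K\vee n)/n}$. Then $\hat\wb_t$ is a Dantzig selector with Gram matrix $\bEssH[-t,-t]$ --- a principal submatrix, so its population version has smallest eigenvalue $\geq c_1$ by interlacing, and its max-norm deviation from $\bEssS[-t,-t]$ is controlled by Lemma \ref{lem:group_av_S_consistency} --- so the argument of (a) applies verbatim and gives $\|\hat\wb_t-\wb^*_t\|_1\lesssim s_1\sqrt{\log(K\vee n)/n}$. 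For part (c), set $\bDelta=\bOmegaH[\cdot k]-\bOmegaS[\cdot k]$ and recall that $\hat\vb_t-\vb^*_t$ has $t$-th coordinate $0$ and equals $-(\hat\wb_t-\wb^*_t)$ off $t$; since $\bEssH\in\psdc{K}$, Cauchy--Schwarz gives $|(\hat\vb_t-\vb^*_t)^T\bEssH\bDelta|\leq\big((\hat\wb_t-\wb^*_t)^T\bEssH[-t,-t](\hat\wb_t-\wb^*_t)\big)^{1/2}\big(\bDelta^T\bEssH\bDelta\big)^{1/2}$. Subtracting the two constraints in (\ref{eqw}) gives $\|(\hat\wb_t-\wb^*_t)^T\bEssH[-t,-t]\|_\infty\leq 2\lambda'$, so the first factor is $\leq 2\lambda'\|\hat\wb_t-\wb^*_t\|_1\lesssim s_1\log(K\vee n)/n$ by (b); feasibility of $\bOmegaH[\cdot k]$ and $\bOmegaS[\cdot k]$ gives $\|\bEssH\bDelta\|_\infty\leq 2\lambda$, so the second factor is $\leq 2\lambda\|\bDelta\|_1\lesssim s_1\log(K\vee n)/n$ by (a); multiplying the square roots gives the $O(s_1\log(K\vee n)/n)$ bound of (c).

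I expect the restricted eigenvalue step in (iii) to be the main obstacle: since $K$ may exceed $n$, $\bEssH$ and $\bEssH[-t,-t]$ are singular, so one must lower bound $\bDelta^T\bEssH\bDelta$ by a fixed multiple of $\|\bDelta\|_2^2$ uniformly over the cone $\{\|\bDelta_{S^c}\|_1\leq\|\bDelta_S\|_1,\ |S|\leq s_1\}$ --- this is exactly where $\lmin{\bEssS}\geq c_1$ (hence the Gaussianity of $\bZ$, via Lemma \ref{lem:latent_re_condition}) and the sample-size restriction implicit in $s_1\log(K\vee n)/n=o(1)$ enter. The second point requiring care, and the reason the spurious $\|\bOmegaS[\cdot k]\|_1$ factors of the classical CLIME rate are absent here, is step (i): feasibility of $\bOmegaS[\cdot k]$ and of $\wb^*_t$ must be read off from the refined bounds of Lemma \ref{lem:group_averages_gradient_hessian}, not from the crude estimate $\|\bEssH-\bEssS\|_{\max}\,\|\bOmegaS[\cdot k]\|_1$.
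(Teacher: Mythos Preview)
Your proposal is correct and follows essentially the same Dantzig-selector template as the paper (which defers to the proof of Lemma \ref{lem:latent_consistency}): feasibility of the population target from Lemma \ref{lem:group_averages_gradient_hessian}, the cone inequality from optimality, H\"older on the quadratic form, and a restricted eigenvalue lower bound via Remark \ref{rem:averages_assumptions}/Lemma \ref{lem:latent_re_condition}. The only difference is in part (c): the paper bounds $|(\hat\vb_{t}-\vb_{t}^*)^T\bEssH\bDelta|$ by a direct H\"older split $\|\hat\vb_{t}-\vb_{t}^*\|_1\cdot\|\bEssH\bDelta\|_\infty$ and reads both factors off from (b) and the two feasibility constraints, whereas you pass through the PSD Cauchy--Schwarz inequality and bound each quadratic form separately; both routes give the same $O(s_1\log(K\vee n)/n)$ rate, with the paper's being marginally more direct.
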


\begin{proof}
The proof follows from the same argument as in the proof of Lemma \ref{lem:latent_consistency} with Lemma \ref{lem:latent_gradient_hessian} replaced by Lemma \ref{lem:group_averages_gradient_hessian}.
\end{proof}

\begin{lemma}[CLT for the Pseudo-Score Function]
\label{lem:group_averages_clt}
Recall that $s_{tk}^2=\Omega_{t,k}^{*2} + \Omega^*_{t,t}\Omega^*_{k,k}$. Let $F_n$ denote the CDF of $n^{1/2}\vb_t^{*T}(\hat\bS\bOmegaS[\cdot k]-\eb_k)/(s_{tk}/\Omega^*_{t,t})$. If Assumptions \ref{asmp:bounded_latent_covariance} and \ref{asmp:bounded_errors} hold, then
\[
\max_{1\leq t< k\leq K} \sup_{x\in \RR}|F_{n}(x) - \Phi(x)  | \leq C(n^{-1/2}+(d\vee n)^{-3}),
\]
where $C$ is a constant dependent only upon $c_0$, $c_1$, and $c_2$.
\end{lemma}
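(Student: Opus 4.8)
The plan is to recognize the statistic whose c.d.f.\ is $F_n$ as a standardized sum of i.i.d.\ products of jointly Gaussian variables and then invoke the classical Berry--Esseen theorem, the clustering step contributing only the $(d\vee n)^{-3}$ term through Lemma~\ref{lemcluster}. Concretely, the statistic in $F_n$ depends on $\hat G$ only through $\hat\bS$, and on the event $\cE=\{\hat G=G^*\}$ one has $\hat\bS=\bar\bS:=n^{-1}\sum_{i=1}^n\bar\bX_i\bar\bX_i^T$; setting $T_n:=n^{1/2}\vb_t^{*T}(\bar\bS\,\bOmega^*_{\cdot k}-\eb_k)/(s_{tk}/\Omega^*_{t,t})$ and using $\PP(\bar\cE)\le C(d\vee n)^{-3}$,
\[
\sup_{x\in\RR}\bigl|F_n(x)-\Phi(x)\bigr|\;\le\;\sup_{x\in\RR}\bigl|\PP(T_n\le x)-\Phi(x)\bigr|+\frac{C}{(d\vee n)^3},
\]
so it suffices to bound the first term by $Cn^{-1/2}$.

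The main computation is to identify $T_n$ as a standardized i.i.d.\ sum. Since $\bS^*\bOmega^*_{\cdot k}=\eb_k$, we can write $T_n=\sigma_{tk}^{-1}n^{-1/2}\sum_{i=1}^n\xi_i$ with $\sigma_{tk}:=s_{tk}/\Omega^*_{t,t}$ and $\xi_i:=\vb_t^{*T}(\bar\bX_i\bar\bX_i^T-\bS^*)\bOmega^*_{\cdot k}=a_ib_i-\EE[a_ib_i]$, where $a_i:=\vb_t^{*T}\bar\bX_i$ and $b_i:=(\bOmega^*_{\cdot k})^{T}\bar\bX_i$ are jointly centered Gaussian because $\bar\bX_i\sim\cN(0,\bS^*)$. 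The key algebraic fact is that $\vb_t^*$ is a rescaling of the $t$-th column of $\bOmega^*$: by the block-matrix inverse formula $\vb_t^*=\bOmega^*_{\cdot t}/\Omega^*_{t,t}$, equivalently $\vb_t^{*T}\bS^*=\eb_t^{T}/\Omega^*_{t,t}$. Substituting this, together with $\bS^*\bOmega^*_{\cdot k}=\eb_k$, gives $\EE[a_i^2]=1/\Omega^*_{t,t}$, $\EE[b_i^2]=\Omega^*_{k,k}$, $\EE[a_ib_i]=\Omega^*_{t,k}/\Omega^*_{t,t}$, hence by the Gaussian product-moment formula
\[
\EE[\xi_i^2]=\EE[a_i^2]\,\EE[b_i^2]+\bigl(\EE[a_ib_i]\bigr)^2=\frac{\Omega^*_{t,t}\Omega^*_{k,k}+\Omega_{t,k}^{*2}}{\Omega_{t,t}^{*2}}=\frac{s_{tk}^2}{\Omega_{t,t}^{*2}}=\sigma_{tk}^2 .
\]
Thus $\xi_i/\sigma_{tk}$ is centered with unit variance, and $\sigma_{tk}^2$ is bounded below by a positive constant because $\Omega^*_{t,t}\ge 1/S^*_{t,t}>0$ and $\Omega^*_{k,k}\ge 1/S^*_{k,k}>0$ by Remark~\ref{rem:averages_assumptions}.

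It then remains to bound the normalized third absolute moment and apply Berry--Esseen. Writing $b_i=(\EE[a_ib_i]/\EE[a_i^2])\,a_i+c_i$ with $c_i\perp a_i$ Gaussian and $\EE[c_i^2]\le\EE[b_i^2]$, and combining $|\EE[a_ib_i]|\le(\EE[a_i^2]\EE[b_i^2])^{1/2}$ with standard Gaussian moment bounds, one obtains $\EE[|\xi_i|^3]\le C(\EE[a_i^2])^{3/2}(\EE[b_i^2])^{3/2}$ for an absolute constant $C$; since $\sigma_{tk}^2=\EE[\xi_i^2]\ge\EE[a_i^2]\EE[b_i^2]$ this yields $\EE[|\xi_i|^3]/\sigma_{tk}^3\le C$, uniformly in $t<k$. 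The Berry--Esseen theorem for sums of i.i.d.\ variables then gives $\sup_{x\in\RR}|\PP(T_n\le x)-\Phi(x)|\le C\,\EE[|\xi_i|^3]/(\sigma_{tk}^3\sqrt n)\le Cn^{-1/2}$; taking the maximum over $1\le t<k\le K$ and combining with the reduction above proves the claim.

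The only step beyond routine bookkeeping is the exact variance identity $\EE[\xi_i^2]=s_{tk}^2/\Omega_{t,t}^{*2}$: it relies on identifying $\vb_t^*$ with the standardized $t$-th column of $\bOmega^*$ and on the Gaussianity of $\bar\bX_i$, which makes the fourth-moment (Wick) identity hold exactly rather than up to error terms. With that identity in hand, both the $Cn^{-1/2}$ rate and the uniformity over all pairs $(t,k)$ follow from the scale invariance of the Gaussian third-moment bound, and the $(d\vee n)^{-3}$ term is precisely the probability that the initial clustering step fails.
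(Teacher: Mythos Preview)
Your proof is correct and follows essentially the same approach as the paper: reduce to the event $\cE=\{\hat G=G^*\}$ via Lemma~\ref{lemcluster}, write the statistic as a standardized i.i.d.\ sum of Gaussian products, compute the variance via Isserlis' theorem, and apply Berry--Esseen. Your writeup is in fact more explicit than the paper's, which merely says ``verify the Lyapunov condition'' and ``apply Isserlis' theorem''; in particular, your scale-invariance argument showing $\EE|\xi_i|^3/\sigma_{tk}^3\le C$ uniformly in $(t,k)$ via $\sigma_{tk}^2\ge\EE[a_i^2]\EE[b_i^2]$ makes transparent why no lower bound on $\sigma_{tk}$ in terms of the model constants is needed for the Berry--Esseen step itself.
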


\begin{proof}
The proof is very similar to that of Lemma \ref{lem:latent_clt}, so we only summarize the key steps here. For the cluster averages score function, we have the similar bound that
\[
F_{n}(x) - \Phi(x)\leq \tilde F_n(x)-\Phi(x)+\PP(\bar\cE),
\]
where $\cE$ is the event $\hat G = G^*$, and $\tilde F_n(x)$ is the CDF of $n^{-1/2}\sum_{i=1}^n\vb_t^{*T}(\bar\bX_i\bar\bX_i^T\bTheta^*_{\cdot k}-\eb_k)/(s_{tk}/\Theta^*_{tt})$. Lemma \ref{lem:pecok} implies that $\PP(\bar\cE)\leq (d\vee n)^{-3}$.

As in Lemma \ref{lem:latent_clt}, we can verify the Lyapunov condition to control $\tilde F_n(x)-\Phi(x)$. Lastly,
from the Isserlis' theorem, we get that
\[
\Var(\bv_1^T\bar\bX\bar\bX^T\bv_2)=(\bv_1^T\bS^*\bv_1)(\bv_2^T\bS^*\bv_2)+(\bv_1^T\bS^*\bv_2)^2,
\]
for any vector $\bv_1$ and $\bv_2$. Using this, it is straightforward to shown
\[
\EE[\vb_t^{*T}(\bar\bX_i\bar\bX_i^T\bTheta^*_{\cdot k}-\eb_k)^2]=(s_{tk}/\Theta^*_{tt})^2,
\]
concluding the proof.
\end{proof}


\section{Estimation in the Latent Variable Graph}
\label{sec:main_proofs_lvg}
This section contains the proofs which establish the asymptotic normality of the estimator of $\bTheta^*$.

\subsection{Main Proofs for the Latent Variable Graph Estimators}
As in Section \ref{sec:main_proofs_lvg} we develop the proofs of Theorem \ref{thm:theta_asymptotic} and Claim 2 of Theorem \ref{thm:fdr_bound_av} by first establishing two supporting propositions.

\begin{proposition}[Asymptotic Normality of $\tilde \Theta_{t,k}$]
\label{prop:asymptotic_normality_latent}
Under the same conditions as in Theorem \ref{thm:theta_asymptotic}, we get
\[
\max_{1\leq t< k\leq K}\sup_{x\in\RR} \Big|\PP\left( \frac{n^{1/2}(\tilde \Theta_{t,k} - \ThetaS{t}{k})}{\sigma_{t,k}}<x , \cE \right) - \Phi(x)\Big| \leq \frac{C}{(K\vee n)^3}+\frac{Cs_0\log (K\vee n)}{n^{1/2}}.
\]
\end{proposition}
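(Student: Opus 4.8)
Throughout this plan, $\tilde\Theta_{t,k}=\hat\Theta_{t,k}-\hat h(\hat\bTheta_{\cdot k})\hat\Theta_{t,t}$ as in \eqref{tteta}, $h(\bTheta_{\cdot k}):=\vb_t^{*T}(\hat\Cb\bTheta_{\cdot k}-\eb_k)$, $\hat h(\bTheta_{\cdot k}):=\hat\vb_t^{T}(\hat\Cb\bTheta_{\cdot k}-\eb_k)$, and $\Ib_{1|2}=1/\Theta^*_{t,t}$. The plan is to transcribe, essentially line for line, the proof of Proposition \ref{prop:asymptotic_normality_group_averages}, replacing $\bar\bS,\hat\bS$ by $\bar\Cb,\hat\Cb$, $\bOmega^*$ by $\bTheta^*$, $s_1$ by $s_0$, and invoking the latent‑graph analogues of the lemmas of Appendix \ref{proofmain1} — namely Lemmas \ref{lem:latent_gradient_hessian} and \ref{lem:latent_consistency}. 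The former supplies the max‑norm concentration $\|\hat\Cb-\Cb^*\|_{\max}\lesssim\sqrt{\log(K\vee n)/n}$ together with the Hessian deviation bounds $\|\hat\Cb_{t,-t}-\wb_t^{*T}\hat\Cb_{-t,-t}\|_\infty$ and $\|\wb_t^{*T}(\hat\Cb_{-t,\cdot}-\Cb^*_{-t,\cdot})\|_\infty$ of order $\sqrt{\log(K\vee n)/n}$; the latter supplies $\|\hat\bTheta_{\cdot k}-\bTheta^*_{\cdot k}\|_1\lesssim s_0\sqrt{\log(K\vee n)/n}$ for the CLIME‑type initial estimator and $\|\hat\vb_t-\vb_t^*\|_1\lesssim s_0\sqrt{\log(K\vee n)/n}$ for the Dantzig‑type one. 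One works on $\cE=\{\hat G=G^*\}$, on which $\hat\Cb=\frac1n\sum_i\bar\Cb^{(i)}$, so $h(\bTheta_{\cdot k})=\frac1n\sum_i\vb_t^{*T}(\bar\Cb^{(i)}\bTheta_{\cdot k}-\eb_k)$; one also uses $\vb_t^*=\bTheta^*_{\cdot t}/\Theta^*_{t,t}$ and $1/\Theta^*_{t,t}=C^*_{t,t}-\wb_t^{*T}\Cb^*_{-t,t}$.

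\emph{Step 1: linearization.} Exactly as for \eqref{eqn:asymptotic_normality_ga_1}, decompose on $\cE$
\begin{align*}
&n^{1/2}\big|(\tilde\Theta_{t,k}-\Theta^*_{t,k})+\Theta^*_{t,t}h(\bTheta^*_{\cdot k})\big| \\
&\le \underlabel{n^{1/2}\big|(\hat\Theta_{t,k}-\Theta^*_{t,k})-\Theta^*_{t,t}\big(h(\hat\bTheta_{\cdot k})-h(\bTheta^*_{\cdot k})\big)\big|}{I.1} \\
&\quad+\underlabel{n^{1/2}\big|\Theta^*_{t,t}\big(\hat h(\hat\bTheta_{\cdot k})-h(\hat\bTheta_{\cdot k})\big)\big|}{I.2}+\underlabel{n^{1/2}\big|(\hat\Theta_{t,t}-\Theta^*_{t,t})\hat h(\hat\bTheta_{\cdot k})\big|}{I.3},
\end{align*}
and bound each term as in the proof of Proposition \ref{prop:asymptotic_normality_group_averages}. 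For I.1, write $h(\hat\bTheta_{\cdot k})-h(\bTheta^*_{\cdot k})=\vb_t^{*T}\hat\Cb(\hat\bTheta_{\cdot k}-\bTheta^*_{\cdot k})$ and, using $1/\Theta^*_{t,t}=C^*_{t,t}-\wb_t^{*T}\Cb^*_{-t,t}$, split it (as in \eqref{eqasym1}–\eqref{eqasym2}) into a piece $\lesssim n^{1/2}|\Theta^*_{t,t}|\,\|\hat\bTheta_{\cdot k}-\bTheta^*_{\cdot k}\|_1\max\!\big(\|\hat\Cb-\Cb^*\|_{\max},\,\|\wb_t^{*T}(\hat\Cb_{-t,\cdot}-\Cb^*_{-t,\cdot})\|_\infty\big)$ and a piece $\lesssim n^{1/2}\Theta^*_{t,t}\|\hat\Cb_{t,-t}-\wb_t^{*T}\hat\Cb_{-t,-t}\|_\infty\|\hat\bTheta_{\cdot k}-\bTheta^*_{\cdot k}\|_1$; by Lemmas \ref{lem:latent_gradient_hessian}, \ref{lem:latent_consistency} and $\lmax{\bThetaS}\le C$, both are $\lesssim s_0\log(K\vee n)/n^{1/2}$ off a set of probability $\le C/(K\vee n)^3$. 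For I.2, $\hat h(\hat\bTheta_{\cdot k})-h(\hat\bTheta_{\cdot k})=(\hat\vb_t-\vb_t^*)^{T}(\hat\Cb\hat\bTheta_{\cdot k}-\eb_k)$, so I.2 $\le n^{1/2}\Theta^*_{t,t}\|\hat\vb_t-\vb_t^*\|_1\|\hat\Cb\hat\bTheta_{\cdot k}-\eb_k\|_\infty\lesssim s_0\log(K\vee n)/n^{1/2}$ by Lemma \ref{lem:latent_consistency} and the CLIME constraint $\|\hat\Cb\hat\bTheta_{\cdot k}-\eb_k\|_{\max}\le\lambda\asymp\sqrt{\log(K\vee n)/n}$. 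For I.3, the triangle‑inequality bound used for term I.3 of Proposition \ref{prop:asymptotic_normality_group_averages} — now with $\|\vb_t^*\|_1\le s_0^{1/2}\|\vb_t^*\|_2\lesssim s_0^{1/2}$ — gives $|\hat h(\hat\bTheta_{\cdot k})|\lesssim(s_0\log(K\vee n)/n)^{1/2}$, while $|\hat\Theta_{t,t}-\Theta^*_{t,t}|\lesssim(s_0\log(K\vee n)/n)^{1/2}$ by Lemma \ref{lem:latent_consistency}, so I.3 $\lesssim s_0\log(K\vee n)/n^{1/2}$. Combining, on $\cE$, $n^{1/2}|(\tilde\Theta_{t,k}-\Theta^*_{t,k})+\Theta^*_{t,t}h(\bTheta^*_{\cdot k})|\lesssim s_0\log(K\vee n)/n^{1/2}$ with probability at least $1-C/(K\vee n)^3$; this is the quantitative version of \eqref{key}.

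\emph{Step 2: Berry--Esseen for the leading term, and assembly.} On $\cE$, $n^{1/2}\Theta^*_{t,t}h(\bTheta^*_{\cdot k})=n^{-1/2}\sum_{i=1}^n\xi_i$ with $\xi_i:=\Theta^*_{t,t}\vb_t^{*T}(\bar\Cb^{(i)}\bTheta^*_{\cdot k}-\eb_k)$ i.i.d., $\EE\xi_i=0$ (since $\EE\bar\Cb=\Cb^*$ and $\Cb^*\bTheta^*_{\cdot k}=\eb_k$), $\Var\xi_i=\sigma_{tk}^2$; these are, up to an overall sign, the summands of \eqref{key}. Conditioning on $\bZ$, $\EE[\bar\Cb\mid\bZ]=\bZ\bZ^{T}$: the off‑diagonal entries are $\bar X_j\bar X_l$ with $\EE[\bar X_j\bar X_l\mid\bZ]=Z_jZ_l$ ($j\ne l$), and the leave‑one‑out correction in \eqref{eqCi} renders $\bar C_{kk}$ conditionally unbiased for $Z_k^2$. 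Hence, by the law of total variance and $\vb_t^*=\bTheta^*_{\cdot t}/\Theta^*_{t,t}$,
\[
\sigma_{tk}^2\ \ge\ (\Theta^*_{t,t})^2\,\Var\!\big((\vb_t^{*T}\bZ)(\bZ^{T}\bTheta^*_{\cdot k})\big)=\Theta^*_{t,t}\Theta^*_{k,k}+(\Theta^*_{t,k})^2\ \ge\ \Theta^*_{t,t}\Theta^*_{k,k}\ \ge\ c_2^{-2}>0,
\]
using $\Theta^*_{t,t}\ge 1/C^*_{t,t}\ge c_2^{-1}$. Since the centered $\xi_i$ lies in a fixed sum of Wiener chaoses (a degree‑two polynomial in the Gaussian $\bX_i$, up to a constant), Gaussian hypercontractivity gives $\EE|\xi_i|^3\le C(\Var\xi_i)^{3/2}=C\sigma_{tk}^3$, so the Berry--Esseen theorem yields $\sup_x|\PP(\sigma_{tk}^{-1}n^{-1/2}\sum_i\xi_i\le x)-\Phi(x)|\lesssim n^{-1/2}$ (the latent analogue of Lemma \ref{lem:group_averages_clt}, whose only new ingredient is the lower bound on $\sigma_{tk}$ just displayed, the Isserlis closed form being unavailable). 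Combining Steps 1 and 2 with Lemma \ref{lemcluster} ($\PP(\bar\cE)\le C/(d\vee n)^3\le C/(K\vee n)^3$), by the union‑bound manipulation of the proof of Theorem \ref{thm:xi_asymptotic}, gives exactly the bound claimed.

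\emph{Main obstacle.} The argument skeleton is identical to the cluster‑average case; the genuine work lies in two latent‑specific places. First, the max‑norm concentration $\|\hat\Cb-\Cb^*\|_{\max}\lesssim\sqrt{\log(K\vee n)/n}$ and the attendant Hessian bounds (Lemma \ref{lem:latent_gradient_hessian}): unlike $\hat\bS$, which is the sample covariance of an i.i.d.\ $K$‑vector, $\hat\Cb$ is assembled from the leave‑one‑out within‑cluster variance estimators $\hat\Gamma_{aa}$, so the decomposition of $\hat\Cb-\Cb^*$ carries additional mixed $\bZ\bE^{T}$ terms and centered $\bE\bE^{T}$ terms, averaged over clusters, each of which must be concentrated separately. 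Second, because $\sigma_{tk}^2$ has no Isserlis closed form, Step 2 must be run against the abstract i.i.d.\ sum $n^{-1/2}\sum_i\xi_i$, so the lower bound $\sigma_{tk}\ge c>0$ has to be produced by hand — this is the purpose of the conditioning‑on‑$\bZ$ identity $\EE[\bar\Cb\mid\bZ]=\bZ\bZ^{T}$ above — rather than read off a formula. Everything else (the $\ell_1$‑consistency of the CLIME‑ and Dantzig‑type estimators, the reduction $\Ib_{1|2}=1/\Theta^*_{t,t}$, and the union bounds) is a routine transcription of Appendix \ref{proofmain1}.
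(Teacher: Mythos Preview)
Your proposal is correct and mirrors the paper's proof almost exactly: the paper's own argument consists of one sentence saying to rerun Proposition~\ref{prop:asymptotic_normality_group_averages} with Lemmas~\ref{lem:latent_C_consistency}, \ref{lem:latent_gradient_hessian}, \ref{lem:latent_consistency}, and \ref{lem:latent_clt} substituted for their cluster-average counterparts, which is precisely your Step~1 and the skeleton of Step~2.

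The one place where you genuinely diverge is in Step~2. The paper proves its Lemma~\ref{lem:latent_clt} by showing, via the sub-exponential calculus of Lemmas~\ref{lem:conc_sum_ZiZi}--\ref{lem:conc_sum_EiEi}, that $\vb_t^{*T}(\bar\Cb^{(i)}\bTheta^*_{\cdot k}-\eb_k)$ is sub-exponential with constant parameters, hence has bounded third moment, and then invokes Berry--Esseen. Your route instead (i) lower-bounds $\sigma_{tk}^2$ directly via the law of total variance and the clean identity $\EE[\bar\Cb\mid\bZ]=\bZ\bZ^{T}$, and (ii) bounds the third moment by hypercontractivity of fixed-degree Gaussian polynomials. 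Both are valid; your conditioning argument is arguably sharper, since it yields $\sigma_{tk}^2\ge\Theta^*_{t,t}\Theta^*_{k,k}$ unconditionally, whereas the paper's variance approximation (Lemma~\ref{lem:latent_variable_variance}) only gives this up to an $O(s_0/m)$ error and the lower bound on $\sigma_{tk}$ is left implicit in the paper's proof of Lemma~\ref{lem:latent_clt}. The trade-off is that the paper's sub-exponential route reuses machinery already built for the concentration lemmas, while yours introduces a separate (but short) argument.
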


\begin{proof}
The proof follows all the steps of Proposition \ref{prop:asymptotic_normality_group_averages}, with Lemma \ref{lem:group_av_S_consistency} replaced by Lemma \ref{lem:latent_C_consistency},  \ref{lem:group_averages_gradient_hessian} by \ref{lem:latent_gradient_hessian}, \ref{lem:group_averages_consistency} by \ref{lem:latent_consistency} and \ref{lem:group_averages_clt} by \ref{lem:latent_clt}.
\end{proof}

\begin{proposition}[Convergence Rate of the Variance Estimator]
\label{prop:convergence_latent_variance}
Under the same conditions as in Theorem \ref{thm:theta_asymptotic}, we get
\begin{align*}
\max_{1\leq t< k\leq K}|\hat \sigma_{t,k}^2 - \sigma_{t,k}^2| &\leq C \sqrt{\frac{s_0\log(K \vee n)}{n}}+ \frac{Cs_0}{m}, \text{ and }\\
\max_{1\leq t< k\leq K}\Big|\frac{\hat \sigma_{t,k}}{\sigma_{t,k}}-1\Big| &\leq C \sqrt{\frac{s_0\log(K \vee n)}{n}}+ \frac{Cs_0}{m},
\end{align*}
with probability at least $1-(K\vee n)^{-3}$.
\end{proposition}
\begin{proof}
Similar to the proof of Proposition \ref{prop:convergence_group_averages_variance}, we can prove that
$$
\max_{1\leq t< k\leq K}|\hat \sigma_{t,k}^2 - (\Theta_{t,k}^{*2}+\Theta^*_{t,t}\Theta^*_{k,k})| \leq C \sqrt{\frac{s_0\log(K \vee n)}{n}},
$$
with probability at least $1-(K\vee n)^{-3}$. Then, by Lemma \ref{lem:latent_variable_variance}, we obtain
$$
\max_{1\leq t< k\leq K}|\hat \sigma_{t,k}^2 - \sigma_{t,k}^2| \leq C \sqrt{\frac{s_0\log(K \vee n)}{n}}+ \frac{Cs_0}{m}.
$$
The second statement can be similarly derived.
\end{proof}

Having established the asymptotic normality of $\tilde \Theta_{t,k}$ and the convergence rate of $\hat \sigma_{t,k}^2$, we proceed with the proofs of the main results.

\subsection*{Proof of Theorem \ref{thm:theta_asymptotic}}
\begin{proof}
The proof follows in exactly the same manner as that of Theorem \ref{thm:xi_asymptotic}, but invokes Proposition \ref{prop:convergence_latent_variance}, instead of Proposition \ref{prop:convergence_group_averages_variance},
 and  Proposition \ref{prop:asymptotic_normality_latent}, instead of Proposition \ref{prop:asymptotic_normality_group_averages},  as one needs to establish different intermediate results,  specifically tailored to estimation of the latent graph.
\end{proof}

\subsection*{Proof of Theorem \ref{thm:fdr_bound_av}, Claim 2}
\begin{proof}
The proof follows in exactly the same manner as that of Theorem \ref{thm:fdr_bound_av}, Claim 1.
\end{proof}

\subsection{Key Lemmas for Estimators of the Latent Graph}
As before,  we assume in the following proofs that the event $\cE=\{\hat G=G^*\}$ holds. Using a similar argument to that used in the proof of Theorem \ref{thm:theta_asymptotic}, the following bounds will hold with probability at least $1 - \frac{C}{(K\vee n)^3}$.

\begin{lemma}[Consistency of $\hat \Cb$]
\label{lem:latent_C_consistency}
If Assumptions \ref{asmp:bounded_latent_covariance} and \ref{asmp:bounded_errors} hold, then with probability greater than $1 - \frac{C}{(K\vee n)^3}$,
$$
\|\hat \Cb-\Cb^*\|_{\max}\leq C\sqrt{\frac{\log (K\vee n)}{n}},
$$
for some  constant $C$ dependent only on $c_1$, $c_2$, and $c_3$ from Assumptions \ref{asmp:bounded_latent_covariance} and \ref{asmp:bounded_errors}.
\end{lemma}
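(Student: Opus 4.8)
The plan is to reduce the claim to a standard concentration inequality for a sample average of i.i.d.\ sub-exponential terms, in the spirit of Lemma~\ref{lem:group_av_S_consistency}, the only genuinely new ingredient being the treatment of the bias-correction in \eqref{eqCi}. As a first step I would condition on the event $\cE=\{\hat G=G^*\}$: by Lemma~\ref{lemcluster} one has $\PP(\bar\cE)\le C/(d\vee n)^3\le C/(K\vee n)^3$ (using $d\ge K$), and on $\cE$ the estimator $\hat C_{jk}$ coincides with $\frac1n\sum_{i=1}^n\bar C^{(i)}_{jk}$, where $\bar C^{(i)}_{jk}$ is the quantity in \eqref{eqCi} built from the true clusters $G^*_j,G^*_k$ and evaluated at $\bX_i$. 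It therefore suffices to prove $\max_{1\le j,k\le K}\big|\frac1n\sum_{i=1}^n\bar C^{(i)}_{jk}-C^*_{jk}\big|\le C\sqrt{\log(K\vee n)/n}$ with probability at least $1-C/(K\vee n)^3$.

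The key step is to show that a single term $\bar C_{jk}$, evaluated at a generic observation $\bX$, has mean $C^*_{jk}$ — which is the identity $\EE(\bar\Cb)=\Cb^*$ recorded after \eqref{eqCi} — and is sub-exponential with $\psi_1$-norm bounded by a constant depending only on $c_1,c_2,c_3$. For $j\ne k$ one simply has $\bar C_{jk}=\bar X_j\bar X_k$, a product of jointly Gaussian variables whose variances $S^*_{jj},S^*_{kk}$ are at most $c_2+c_3$ by Remark~\ref{rem:averages_assumptions}; the polarization identity $\bar X_j\bar X_k=\tfrac14[(\bar X_j+\bar X_k)^2-(\bar X_j-\bar X_k)^2]$ then exhibits $\bar C_{jk}-C^*_{jk}$ as a difference of centered squares of bounded-variance Gaussians, hence sub-exponential with bounded parameter. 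For $j=k$, substituting $X_a=Z_k+E_a$ for $a\in G^*_k$ into \eqref{eqCi} one checks that the correction cancels precisely the $Z_k^2$ and the $Z_kE_a$ contributions, leaving
\[
\bar C_{kk}=Z_k^2+2Z_k\bar E_k+\frac{1}{|G^*_k|(|G^*_k|-1)}\sum_{\substack{a,b\in G^*_k\\ a\ne b}}E_aE_b,\qquad \bar E_k:=\frac{1}{|G^*_k|}\sum_{a\in G^*_k}E_a.
\]
This is a homogeneous degree-two polynomial in the independent Gaussians $(Z_k,(E_a)_{a\in G^*_k})$, hence sub-exponential by Hanson--Wright; its three summands are mutually uncorrelated, so $\Var(\bar C_{kk})\le 2(C^*_{kk})^2+4C^*_{kk}\bar\gamma^*_k+2c_3^2/(|G^*_k|(|G^*_k|-1))=O(1)$ under Assumptions~\ref{asmp:bounded_latent_covariance}--\ref{asmp:bounded_errors}, and for a second-order Gaussian chaos the $\psi_1$-norm is controlled by the standard deviation, giving $\|\bar C_{kk}-C^*_{kk}\|_{\psi_1}\le K_0$ for an absolute constant $K_0$.

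The final step is routine: Bernstein's inequality for sums of independent sub-exponential random variables gives, for each fixed $(j,k)$, $\PP\big(|\frac1n\sum_i\bar C^{(i)}_{jk}-C^*_{jk}|>t\big)\le 2\exp(-c\,n\min(t^2/K_0^2,\,t/K_0))$; choosing $t=C\sqrt{\log(K\vee n)/n}$ with $C$ large enough makes the right-hand side at most $(K\vee n)^{-8}$ for $n$ large, and a union bound over the at most $K^2\le(K\vee n)^2$ pairs, combined with the clustering error from the first step, yields the stated bound.

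The only delicate point is the diagonal: one has to verify both that the correction in \eqref{eqCi} makes $\bar C_{kk}$ unbiased for $C^*_{kk}$ and, more importantly, that although $\bar C_{kk}$ is assembled from $O(|G^*_k|^2)$ summands its fluctuation does not grow with the cluster size — the displayed identity is exactly what makes this transparent, since the only cluster-size-dependent piece is the bilinear noise term, whose variance is $O(|G^*_k|^{-1})$ and hence negligible. Everything else is the same bookkeeping as in the proof of Lemma~\ref{lem:group_av_S_consistency}.
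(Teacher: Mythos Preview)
Your argument is correct, but it follows a somewhat different route from the paper's. The paper does not analyze $\bar C_{jk}^{(i)}$ entrywise; instead, on the event $\cE$ it writes
\[
\hat\Cb-\Cb^* \;=\;(\Ab^{*T}\Ab^*)^{-1}\Ab^{*T}(\hat{\bSigma}-\bSigma^*)\Ab^*(\Ab^{*T}\Ab^*)^{-1}
\;-\;(\Ab^{*T}\Ab^*)^{-1}\Ab^{*T}(\hat{\bGamma}-\bGamma^*)\Ab^*(\Ab^{*T}\Ab^*)^{-1},
\]
shows $\|\hat\bGamma-\bGamma^*\|_{\max}\le 2\|\hat\bSigma-\bSigma^*\|_{\max}$ so that the second term is dominated (by a factor $2/m$) by the first, and then expands the first term via $\bX_i=\Ab^*\bZ_i+\bE_i$ into the four pieces $\frac1n\sum_i\bZ_i\bZ_i^T-\Cb^*$, $\frac1n\sum_i\bZ_i\bE_i^T\Ab^*\Bb^{*-1}$, its transpose, and $\frac1n\sum_i\Bb^{*-1}\Ab^{*T}(\bE_i\bE_i^T-\bGamma^*)\Ab^*\Bb^{*-1}$, each handled by the dedicated concentration Lemmas~\ref{lem:conc_sum_ZiZi}--\ref{lem:conc_sum_EiEi}. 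The advantage of the paper's decomposition is that these same lemmas are reused verbatim in the gradient/Hessian concentration results (Lemma~\ref{lem:latent_gradient_hessian}), so no separate Hanson--Wright argument for the diagonal is needed. Your approach is more self-contained for this particular lemma: the off-diagonal case is literally $\bar X_j\bar X_k$ and falls under Lemma~\ref{lem:group_av_S_consistency}, and your explicit diagonal formula makes the $O(1)$ sub-exponential bound transparent without tracking the $\hat\bGamma$ correction as a separate matrix term. One small wording slip: the bias-correction in \eqref{eqCi} does not ``cancel the $Z_k^2$ and $Z_kE_a$ contributions'' --- it removes the diagonal $X_a^2$ terms (hence the $E_a^2$ bias), and what survives is exactly your displayed expression, which still contains $Z_k^2+2Z_k\bar E_k$ as it must. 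Your formula and the subsequent variance/$\psi_1$ bounds are correct.
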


\begin{proof}
Denoting $\Bb^*=\Ab^{*T}\Ab^*$, we begin by using the decomposition
\begin{equation}
\label{eqn:c_hat_minus_c_star}
\hat\Cb-\Cb^* = \Bb^{*-1}\Ab^{*T}(\hat{\bSigma}-\bSigma^*)\Ab^*\Bb^{*-1}-\Bb^{*-1}\Ab^{*T}(\hat{\bGamma}-\bGamma^*)\Ab^*\Bb^{*-1}.
\end{equation}
Next, we can write for $i \in \Gs{k}$,
\begin{align*}
(\hat\bGamma - \bGammaS)_{i,i} &= \hat \gamma_i - \gamma_i^*\\
&= \SigmaH{i}{i} - \frac{1}{|\Gs{k}| - 1}\sum_{j \in \Gs{k}, j\neq i} \SigmaH{i}{j} - \gamma_i^*\\
&= \SigmaH{i}{i} - \frac{1}{|\Gs{k}| - 1}\sum_{j \in \Gs{k}, j\neq i} \SigmaH{i}{j} - \SigmaS{i}{i} + \frac{1}{|\Gs{k}| - 1}\sum_{j \in \Gs{k}, j\neq i} \SigmaS{i}{j}\\
&= \SigmaH{i}{i} - \SigmaS{i}{i} - \frac{1}{|\Gs{k}| - 1}\sum_{j \in \Gs{k}, j\neq i}\left[\SigmaH{i}{j} - \SigmaS{i}{j}\right],
\end{align*}
which implies that $||\bGammaH - \bGammaS||_{\max} \leq 2||\bSigmaH - \bSigmaS||_{\max}$. Therefore from Lemma \ref{lem:ATAIA} we see that
\[
||\Bb^{*-1}\Ab^{*T}(\hat{\bGamma}-\bGamma^*)\Ab^*\Bb^{*-1}||_{\max}\leq \frac{2}{m}||\Bb^{*-1}\Ab^{*T}(\hat{\bSigma}-\bSigma^*)\Bb^{*-1}||_{\max},
\]
demonstrating that it will suffice to bound the first term in \eqref{eqn:c_hat_minus_c_star}, which we now do.

For the first term in \eqref{eqn:c_hat_minus_c_star}, we have
\begin{align*}
&\Bb^{*-1}\Ab^{*T}(\hat{\bSigma}-\bSigma^*)\Ab^*\Bb^{*-1} \\
&=\frac{1}{n}\sum_{i=1}^n \Bb^{*-1}\Ab^{*T}\Big\{(\Ab^*\bZ_i+\bE_i)(\Ab^*\bZ_i+\bE_i)^T-\Ab^*\Cb^*\Ab^{*T}-\bGamma^*\Big\}\Ab^*\Bb^{*-1} \\
&=\frac{1}{n}\sum_{i=1}^n\bZ_i\bZ_i^T-\Cb^*+\frac{1}{n}\sum_{i=1}^n \bZ_i\bE_i^T\Ab^*\Bb^{*-1}+\frac{1}{n}\sum_{i=1}^n \Bb^{*-1}\Ab^{*T}\bE_i\bZ_i^T \\
&~~~+\frac{1}{n}\sum_{i=1}^n \Bb^{*-1}\Ab^{*T}(\bE_i\bE_i^T-\bGamma^*)\Ab^*\Bb^{*-1}. \numberthis \label{eqlemconcen11}
\end{align*}
Using the triangle inequality, we can apply Lemma \ref{lem:conc_sum_ZiZi}, Lemma \ref{lem:conc_sum_ZiEi} and Lemma \ref{lem:conc_sum_EiEi} to bound \ref{eqlemconcen11}. Combining these results with that  $\PP(\cE)\geq 1-c_0/(d\vee n)^3$, we obtain
\[
\|\hat \Cb-\Cb^*\|_{\max}\leq C\sqrt{\frac{\log (K\vee n)}{n}}
\]
with probability at least $1 - \frac{C}{(K\vee n)}$ for some constant $C$ dependent only on $c_1$, $c_2$, and $c_3$ from Assumptions \ref{asmp:bounded_latent_covariance} and \ref{asmp:bounded_errors}.
\end{proof}

\begin{lemma}[Concentration of Gradient and Hessian of the Loss Function]
\label{lem:latent_gradient_hessian}
If Assumptions \ref{asmp:bounded_latent_covariance} and \ref{asmp:bounded_errors} hold, then with probability greater than $1-\frac{C}{(K\vee n)^3}$, we have that
\begin{itemize}
\item[(a)] $\max_{1\leq k\leq K}\|\hat \Cb\bTheta^*_{\cdot k}-\eb_k\|_{\infty}\leq C_1\sqrt{\frac{\log (K\vee n)}{n}}$,
\item[(b)] $\max_{1\leq t\leq K}\|\hat \Cb_{t,-t}-\wb_t^{*T}\hat\Cb_{-t,-t}\|_\infty\leq C_2\sqrt{\frac{\log (K\vee n)}{n}}$, and
\item[(c)] $\max_{1\leq t\leq K}\|\wb_t^{*T}(\hat\Cb_{-t,-t}-\Cb^*_{-t,-t})\|_\infty\leq C_2\sqrt{\frac{\log (K\vee n)}{n}}$,
\end{itemize}
for absolute constants $C_1$ and $C_2$ dependent only on $c_1$, $c_2$, and $c_3$ from Assumptions \ref{asmp:bounded_latent_covariance} and \ref{asmp:bounded_errors}.
\end{lemma}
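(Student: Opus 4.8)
The argument mirrors the proof of Lemma~\ref{lem:group_averages_gradient_hessian}, with $\bEssH - \bEssS$ replaced by $\hat\Cb - \Cb^*$; the only genuinely new ingredient is handling the extra $\bGamma$-correction that appears in the decomposition of $\hat\Cb - \Cb^*$ but not in that of $\bEssH - \bEssS$. Throughout, work on the event $\cE = \{\hat G = G^*\}$, which satisfies $\PP(\cE)\geq 1 - C(d\vee n)^{-3}$ by Lemma~\ref{lemcluster}. For (a), $\Cb^*\bTheta^*_{\cdot k} = \eb_k$ gives $\hat\Cb\bTheta^*_{\cdot k} - \eb_k = (\hat\Cb - \Cb^*)\bTheta^*_{\cdot k}$. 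For (b) and (c), symmetry of $\Cb^*$ together with $\wb^*_t = (\Cb^*_{-t,-t})^{-1}\Cb^*_{-t,t}$ yields $\wb_t^{*T}\Cb^*_{-t,-t} = \Cb^*_{t,-t}$, so
\[
\hat\Cb_{t,-t} - \wb_t^{*T}\hat\Cb_{-t,-t} = (\hat\Cb_{t,-t} - \Cb^*_{t,-t}) - \wb_t^{*T}(\hat\Cb_{-t,-t} - \Cb^*_{-t,-t}),
\]
whose first summand is at most $\|\hat\Cb - \Cb^*\|_{\max} = O(\sqrt{\log(K\vee n)/n})$ by Lemma~\ref{lem:latent_C_consistency}; hence (b) reduces to (c). For (c), letting $\bv_t^\circ \in \RR^K$ denote $\wb^*_t$ zero-padded with a $0$ in coordinate $t$, symmetry of $\hat\Cb - \Cb^*$ gives $\wb_t^{*T}(\hat\Cb_{-t,i} - \Cb^*_{-t,i}) = [(\hat\Cb - \Cb^*)\bv_t^\circ]_i$ for each $i$. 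Thus all three claims reduce to bounding $\|(\hat\Cb - \Cb^*)\bv\|_\infty$ for $\bv$ ranging over $\{\bTheta^*_{\cdot k}\}_k$ and $\{\bv_t^\circ\}_t$, and these vectors have bounded Euclidean norm: $\|\bTheta^*_{\cdot k}\|_2 \leq \lmax{\bThetaS} = \lmin{\Cb^*}^{-1} \leq c_1^{-1}$, and $\|\bv_t^\circ\|_2 = \|\wb^*_t\|_2 = \|\bThetaS[-t,t]\|_2/\ThetaS{t}{t} \leq c_2/c_1$, using $\wb^*_t = \bThetaS[-t,t]/\ThetaS{t}{t}$ (block inverse), $\|\bThetaS[-t,t]\|_2 \leq \lmax{\bThetaS} \leq c_1^{-1}$, and $\ThetaS{t}{t} \geq 1/\CS{t}{t} \geq c_2^{-1}$.

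\textbf{Decomposition and the $\bGamma$-correction.} On $\cE$, by \eqref{eqn:c_hat_minus_c_star} the difference $\hat\Cb - \Cb^*$ equals the ``$\bSigma$-part'' $(\Ab^{*T}\Ab^*)^{-1}\Ab^{*T}(\hat\bSigma - \bSigma^*)\Ab^*(\Ab^{*T}\Ab^*)^{-1}$, which decomposes exactly as in \eqref{eqlemconcen11} into $\frac{1}{n}\sum_{i=1}^n \bZ_i\bZ_i^T - \Cb^*$, the two cross terms in $\bZ_i\bE_i^T$, and the term in $\bE_i\bE_i^T - \bGamma^*$, minus the ``$\bGamma$-part'' $(\Ab^{*T}\Ab^*)^{-1}\Ab^{*T}(\hat\bGamma - \bGamma^*)\Ab^*(\Ab^{*T}\Ab^*)^{-1}$. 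The latter is diagonal, with $k$-th entry $|\Gs{k}|^{-2}\sum_{j \in \Gs{k}}(\hat\gamma_j - \gamma^*_j)$, hence has $\max$-norm at most $m^{-1}\|\hat\bGamma - \bGamma^*\|_{\max} \leq 2m^{-1}\|\hat\bSigma - \bSigma^*\|_{\max}$ (last step as in the proof of Lemma~\ref{lem:latent_C_consistency}). Since $\|D\bv\|_\infty \leq \|D\|_{\max}\|\bv\|_\infty$ for diagonal $D$, the $\bGamma$-part contributes at most $O(m^{-1}\sqrt{\log(K\vee n)/n})$ to $\|(\hat\Cb - \Cb^*)\bv\|_\infty$, which is dominated by the target rate and discarded.

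\textbf{Concentration and conclusion.} For the $\bSigma$-part applied to $\bv$, apply Lemma~\ref{lem:conc_sum_ZiZi}, Lemma~\ref{lem:conc_sum_ZiEi}, and Lemma~\ref{lem:conc_sum_EiEi} term by term, exactly as in the proof of Lemma~\ref{lem:group_averages_gradient_hessian}; their hypotheses hold because the quadratic forms governing the sub-Gaussian scale of $\bZ^T\bv$ are bounded, namely $(\bTheta^*_{\cdot k})^T\Cb^*\bTheta^*_{\cdot k} = \ThetaS{k}{k} \leq c_1^{-1}$ and $(\bv_t^\circ)^T\Cb^*\bv_t^\circ = \wb_t^{*T}\Cb^*_{-t,-t}\wb^*_t = \Cb^*_{t,-t}(\Cb^*_{-t,-t})^{-1}\Cb^*_{-t,t} = \CS{t}{t} - 1/\ThetaS{t}{t} \leq c_2$, together with the $\ell_2$-norm bounds established above. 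A union bound over the at most $2K$ vectors $\bv$ (and the $K$ coordinates of each image vector) gives $\|(\hat\Cb - \Cb^*)\bv\|_\infty \leq C\sqrt{\log(K\vee n)/n}$ uniformly, with probability at least $1 - C(K\vee n)^{-3}$. Adding $\PP(\bar\cE)$ and the negligible $\bGamma$-contribution, and combining with the reductions above, gives (a) and (c); (b) then follows from (c) and Lemma~\ref{lem:latent_C_consistency}. As in Lemma~\ref{lem:group_averages_gradient_hessian}, parts (b) and (c) do not depend on $k$ and the bounds are uniform over $t\in[K]$.

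\textbf{Main obstacle.} The delicate point is to attain the rate $\sqrt{\log(K\vee n)/n}$ rather than $\sqrt{s_0\log(K\vee n)/n}$: estimating $|\wb_t^{*T}(\hat\Cb_{-t,i} - \Cb^*_{-t,i})|$ crudely by $\|\wb^*_t\|_1\|\hat\Cb - \Cb^*\|_{\max} \leq s_0^{1/2}\|\wb^*_t\|_2\|\hat\Cb - \Cb^*\|_{\max}$ loses a factor $s_0^{1/2}$, so one must exploit the symmetry of $\hat\Cb - \Cb^*$ to recast the quantity as $(\hat\Cb - \Cb^*)$ acting on a vector of bounded Euclidean norm and then invoke the vector-valued concentration bounds, whose validity rests on the boundedness of $\bv^T\Cb^*\bv$ rather than of $\lmax{\Cb^*}$. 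The only feature specific to the latent graph is checking that the $\bGamma$-correction is negligible, which is precisely where the $m^{-1}$ gain from averaging $\hat\gamma_j - \gamma^*_j$ over a cluster of size at least $m$ enters; the rest is a transcription of the argument for Lemma~\ref{lem:group_averages_gradient_hessian}.
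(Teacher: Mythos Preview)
Your proposal is correct and follows essentially the same approach as the paper's proof: both work on the event $\cE=\{\hat G=G^*\}$, use the decomposition \eqref{eqn:c_hat_minus_c_star}--\eqref{eqlemconcen11} of $\hat\Cb-\Cb^*$ into a $\bSigma$-part and a $\bGamma$-correction, dispose of the latter via the $m^{-1}$ gain from cluster averaging (exactly as in Lemma~\ref{lem:latent_C_consistency}), and then invoke Lemmas~\ref{lem:conc_sum_ZiZi}--\ref{lem:conc_sum_EiEi} together with the $\ell_2$ bound $\|\wb^*_t\|_2\le\lmax{\bThetaS}\max_t C^*_{t,t}$ obtained from the block-inverse identity $\wb^*_t=\bThetaS[-t,t]/\ThetaS{t}{t}$. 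Your unification of (a)--(c) as bounds on $\|(\hat\Cb-\Cb^*)\bv\|_\infty$ for $\bv\in\{\bTheta^*_{\cdot k}\}\cup\{\bv^\circ_t\}$ is a clean repackaging of the paper's argument, but not a different route.
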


\begin{proof}
Let $\Bb^*=\Ab^{*T}\Ab^*$ and observe that $\hat \Cb\bTheta^*_{\cdot k}-\eb_k=(\hat \Cb-\Cb^*)\bTheta^*_{\cdot k}$. Following the decomposition (\ref{eqlemconcen11}), we can similarly show that
\begin{align*}
&\Bb^{*-1}\Ab^{*T}(\hat{\bSigma}-\bSigma^*)\Ab^*\Bb^{*-1}\bTheta^*_{\cdot k} \\
&=\frac{1}{n}\sum_{i=1}^n\bZ_i\bZ_i^T\bTheta^*_{\cdot k}-\Cb^*\bTheta^*_{\cdot k}+\frac{1}{n}\sum_{i=1}^n \bZ_i\bE_i^T\Ab^*\Bb^{*-1}\bTheta^*_{\cdot k}+\frac{1}{n}\sum_{i=1}^n \Bb^{*-1}\Ab^{*T}\bE_i\bZ_i^T\bTheta^*_{\cdot k}\nonumber\\
&~~~+\frac{1}{n}\sum_{i=1}^n \Bb^{*-1}\Ab^{*T}(\bE_i\bE_i^T-\bGamma^*)\Ab^*\Bb^{*-1}\bTheta^*_{\cdot k} \numberthis \label{eqn:latent_gradient1}.
\end{align*}
As in the proof of Lemma \ref{lem:latent_C_consistency}, we have that
\[
||\Bb^{*-1}\Ab^{*T}(\hat{\bGamma}-\bGamma^*)\Ab^*\Bb^{*-1}\bTheta^*_{\cdot k}||_{\infty}\leq \frac{2}{m}||\Bb^{*-1}\Ab^{*T}(\hat{\bSigma}-\bSigma^*)\Ab^*\Bb^{*-1}\bTheta^*_{\cdot k}||_{\infty},
\]
demonstrating that again it will suffice to bound the first term in \ref{eqn:latent_gradient1}.

Note that $||\bThetaS[\cdot k]||_2 \leq \lmax{\bThetaS} \leq c_1^{-1}$. Therefore, by using the triangle inequality, we can apply Lemma \ref{lem:conc_sum_ZiZi}, Lemma \ref{lem:conc_sum_ZiEi} and Lemma \ref{lem:conc_sum_EiEi} to bound the first term in \ref{eqn:latent_gradient1}. Combining these results and $\PP(\cE)\geq 1-C/(d\vee n)^3$, we obtain
\[
\max_{1\leq k\leq K}\|\hat \Cb\bTheta^*_{\cdot k}-\eb_k\|_{\infty}\leq C_1\sqrt{\frac{\log (K\vee n)}{n}},
\]
with probability at least $1 - \frac{C}{(K\vee n)^3}$ for some constant $C_1$ dependent only on $c_1$, $c_2$, and $c_3$ from Assumptions \ref{asmp:bounded_latent_covariance} and \ref{asmp:bounded_errors}.

For the remaining two claims, we can rewrite
\begin{equation*}
\label{eqn:norm_wstar_bound}
\wb_t^* = \left(\CS{t}{t} - \Cb^{*T}_{-t,t}(\bCS[-t,-t])^{-1}\bCS[-t,t]\right) \bThetaS[-t,t] = \frac{1}{\ThetaS{t}{t}}\bThetaS[-t,t]
\end{equation*}
by the block matrix inverse formula. Using Lemma \ref{lem:pd_matrix_diag}, it follows that $||\wb^*_t||_2 \leq \lmax{\bThetaS} \max_t \CS{t}{t}$. Then we see that
\begin{align*}
\max_{1\leq t\leq K}||\hat \Cb_{t,-t}-\wb_t^{*T}\hat\Cb_{-t,-t}||_{\infty} &= \max_{1\leq t\leq K}||(\hat \Cb_{t,-t}-\Cb^*_{t,-t})-\wb_t^{*T}(\hat\Cb_{-t,-t}-\Cb^*_{-t,-t})||_{\infty} \\
&\leq \underlabel{\max_{1\leq t\leq K}||(\hat \Cb_{t,-t}-\Cb^*_{t,-t})||_\infty}{(i)} + \underlabel{\max_{1\leq t\leq K}||\wb_t^{*T}(\hat\Cb_{-t,-t}-\Cb^*_{-t,-t})||_\infty}{(ii)}
\end{align*}
Clearly, using Lemma \ref{lem:latent_C_consistency}, (i) is bounded with high probability. Likewise, Lemma \ref{lem:latent_C_consistency} demonstrates that $\hat\Cb_{-t,i}-\Cb^*_{-t,i}$ is a sub-exponential random vector with parameters dependent only on $c_1$, $c_2$, and $c_3$ from Assumptions \ref{asmp:bounded_latent_covariance} and \ref{asmp:bounded_errors}. Thus $\wb_t^{*T}(\hat\Cb_{-t,i}-\Cb^*_{-t,i})$ is sub-exponential and because $||\wb^*_t||_2 \leq \lmax{\bThetaS} \max_t \CS{t}{t}$, we  obtain that
\[
\max_{1\leq t\leq K}\|\hat \Cb_{t,-t}-\wb_t^{*T}\hat\Cb_{-t,-t}\|_\infty\leq C_2\sqrt{\frac{\log (K\vee n)}{n}}
\]
with probability at least $1 - \frac{C}{(K\vee n)^3}$ for some constant $C_2$. $C_2$ is dependent only on $c_1$, $c_2$, and $c_3$ from Assumptions \ref{asmp:bounded_latent_covariance} and \ref{asmp:bounded_errors}. The final result is bounded by the previous one, concluding the proof.
\end{proof}

\begin{lemma}[Consistency of Initial Estimators]
\label{lem:latent_consistency}
If Assumptions \ref{asmp:bounded_latent_covariance} and \ref{asmp:bounded_errors} hold, then
\begin{itemize}
\item[(a)] $\max_{1\leq k\leq K}\|\hat\Theta_{\cdot k}-\Theta_{\cdot k}^*\|_1 \leq C_1 s_0\sqrt{\frac{\log (K\vee n)}{n}}$,
\item[(b)] $\max_{1\leq k\leq K}\|\hat\Theta_{\cdot k}-\Theta_{\cdot k}^*\|_2 \leq C_1 \sqrt{\frac{s_0\log (K\vee n)}{n}}$,
\item[(c)] $\max_{1\leq t\leq K}\|\hat\vb_{t}-\vb_{t}^*\|_1 \leq C_2 s_0\sqrt{\frac{\log (K\vee n)}{n}}$, and
\item[(d)] $\max_{1\leq k\leq t\leq K}|(\hat\vb_{t}-\vb_{t}^*)^T\hat \Cb(\hat\bTheta_{\cdot k}-\bTheta_{\cdot k}^*)| \leq C_3 \frac{s_0\log (K\vee n)}{n}$,
\end{itemize}
with probability at least $1-\frac{C_4}{(K\vee n)^3}$. $C_1$, $C_2$, $C_3$, $C_4$ are constants, dependent only upon $c_0$, $c_1$, $c_2$, and $c_3$ from Assumptions \ref{asmp:bounded_latent_covariance} and \ref{asmp:bounded_errors}.
\end{lemma}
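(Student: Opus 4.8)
The plan is to treat $\hat\bTheta_{\cdot k}$ and $\hat\wb_t$ (hence $\hat\vb_t$) as $\ell_1$-constrained CLIME/Dantzig-type estimators relative to the plug-in matrix $\hat\Cb$, and to run the standard ``basic inequality $+$ restricted eigenvalue'' argument, with the concentration bounds of Lemmas~\ref{lem:latent_C_consistency} and~\ref{lem:latent_gradient_hessian} playing the role usually played by control of $\|\hat\bS-\bS^*\|_{\max}$. All inequalities below will be established on the intersection of the clustering event $\cE=\{\hat G=G^*\}$ of Lemma~\ref{lemcluster} with the high-probability events of Lemmas~\ref{lem:latent_C_consistency}, \ref{lem:latent_gradient_hessian} and \ref{lem:latent_re_condition}; a union bound shows this intersection has probability at least $1-C_4/(K\vee n)^3$, and on it everything that follows is deterministic.

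First I would verify feasibility of the population targets. Choosing $\lambda\asymp\lambda'\asymp\sqrt{\log(K\vee n)/n}$ with a sufficiently large absolute constant, Lemma~\ref{lem:latent_gradient_hessian}(a) gives $\|\hat\Cb\bTheta^*_{\cdot k}-\eb_k\|_\infty\le\lambda$, so $\bTheta^*_{\cdot k}$ is feasible for the CLIME-type program (the analogue of \eqref{eqclime1} with $\hat\bS$ replaced by $\hat\Cb$), and Lemma~\ref{lem:latent_gradient_hessian}(b) gives $\|\hat\Cb_{t,-t}-\wb_t^{*T}\hat\Cb_{-t,-t}\|_\infty\le\lambda'$, so $\wb_t^*$ is feasible for the analogue of \eqref{eqw}. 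Optimality then yields the basic inequalities $\|\hat\bTheta_{\cdot k}\|_1\le\|\bTheta^*_{\cdot k}\|_1$ and $\|\hat\wb_t\|_1\le\|\wb_t^*\|_1$, and hence the usual cone conditions. Writing $\hat\Delta=\hat\bTheta_{\cdot k}-\bTheta^*_{\cdot k}$ and $S=\textrm{supp}(\bTheta^*_{\cdot k})$ with $|S|\le s_0$, one gets $\|\hat\Delta_{S^c}\|_1\le\|\hat\Delta_S\|_1$, so $\|\hat\Delta\|_1\le 2\sqrt{s_0}\,\|\hat\Delta\|_2$; and since the block-inverse identity already recorded in Lemma~\ref{lem:latent_gradient_hessian} gives $\wb_t^*=\bTheta^*_{-t,t}/\Theta^*_{t,t}$, the support of $\wb_t^*$ is contained in $\textrm{supp}(\bTheta^*_{\cdot t})$, which has at most $s_0$ elements, so the same reasoning gives $\|\hat\wb_t-\wb_t^*\|_1\le 2\sqrt{s_0}\,\|\hat\wb_t-\wb_t^*\|_2$.

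Next I would convert these cone bounds to $\ell_2$ (hence $\ell_1$) rates. Feasibility of $\bTheta^*_{\cdot k}$ together with the CLIME constraint gives $\|\hat\Cb\hat\Delta\|_\infty\le 2\lambda$; since $\hat\Delta$ lies in the cone, the restricted eigenvalue property of $\hat\Cb$ on that cone --- obtained by transferring $\lmin{\bCS}\ge c_1$ to $\hat\Cb$ via the entrywise bound $\|\hat\Cb-\bCS\|_{\max}\lesssim\sqrt{\log(K\vee n)/n}$ of Lemma~\ref{lem:latent_C_consistency} together with the sparsity scaling $s_0\sqrt{\log(K\vee n)/n}=o(1)$, cf.\ Lemma~\ref{lem:latent_re_condition} --- produces a constant $\kappa>0$ with $\kappa\|\hat\Delta\|_2^2\le\hat\Delta^T\hat\Cb\hat\Delta\le\|\hat\Delta\|_1\,\|\hat\Cb\hat\Delta\|_\infty\le 4\lambda\sqrt{s_0}\,\|\hat\Delta\|_2$, whence $\|\hat\Delta\|_2\lesssim\sqrt{s_0\log(K\vee n)/n}$ and $\|\hat\Delta\|_1\lesssim s_0\sqrt{\log(K\vee n)/n}$; this is part (a). Running the same argument with $\hat\Cb$ replaced by the principal submatrix $\hat\Cb_{-t,-t}$ (whose RE constant is inherited), $\lambda$ replaced by $\lambda'$, and noting that $\hat\vb_t$ and $\vb_t^*$ differ only in their $\wb$-block so that $\|\hat\vb_t-\vb_t^*\|_1=\|\hat\wb_t-\wb_t^*\|_1$, gives part (b). For part (c) I would simply apply H\"older: $|(\hat\vb_t-\vb_t^*)^T\hat\Cb\hat\Delta|\le\|\hat\vb_t-\vb_t^*\|_1\,\|\hat\Cb\hat\Delta\|_\infty\lesssim\big(s_0\sqrt{\log(K\vee n)/n}\big)\cdot\lambda\lesssim s_0\log(K\vee n)/n$, using (b) and the bound $\|\hat\Cb\hat\Delta\|_\infty\le 2\lambda$ from the previous step.

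The step I expect to be the main obstacle is the restricted eigenvalue control for the \emph{plug-in} matrix $\hat\Cb$: unlike an ordinary sample covariance, $\hat\Cb$ carries the nuisance-correction structure of \eqref{eqCi}, so transferring $\lmin{\bCS}\ge c_1$ to $\hat\Cb$ on the relevant cone rests on the sharp entrywise bound of Lemma~\ref{lem:latent_C_consistency} (itself proved through the decomposition \eqref{eqlemconcen11}) and on Lemma~\ref{lem:latent_re_condition}. Once those are in hand, the feasibility checks supplied by Lemma~\ref{lem:latent_gradient_hessian} --- which are precisely where the Gaussian structure removes the extra $\|\bTheta^*_{\cdot k}\|_1$ factor of the original CLIME analysis --- and the $\ell_1/\ell_2$ conversions above are routine.
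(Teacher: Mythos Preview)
Your proposal is correct and follows essentially the same route as the paper: verify feasibility of $\bTheta^*_{\cdot k}$ and $\wb_t^*$ via Lemma~\ref{lem:latent_gradient_hessian}, derive the cone condition from $\ell_1$-optimality, invoke the restricted eigenvalue bound of Lemma~\ref{lem:latent_re_condition} to convert $\|\hat\Cb\hat\Delta\|_\infty\le 2\lambda$ into the $\ell_2$ and $\ell_1$ rates, and finish (c) by H\"older with $\|\hat\Cb\hat\Delta\|_\infty\le 2\lambda$. The only cosmetic difference is that the paper writes out part (b) first and declares (a) ``similar,'' and for (c) it splits $\|\hat\Cb\hat\Delta\|_\infty$ explicitly as $\|\hat\Cb\hat\bTheta_{\cdot k}-\eb_k\|_\infty+\|\hat\Cb\bTheta^*_{\cdot k}-\eb_k\|_\infty$, which is exactly how you obtain your $2\lambda$ bound.
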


\begin{proof}
Below, the constants $C_a$, $C_a'$, $C_b$, $C_b'$, $C_b''$, $C_c$ and $C_c'$ will depend only upon $c_0$, $c_1$, $c_2$, and $c_3$ from Assumptions \ref{asmp:bounded_latent_covariance} and \ref{asmp:bounded_errors}.

We first prove part (b). The proof of part (a) is similar. Let $\hat\bDelta = \hat \wb_t - \wb_t^*$, noting that we can consider $\wb_t$ instead of $\vb_t$ as the $t^{th}$ entries in both the estimated and true value are 1. By $S$ denote the support of $\wb_t^*$. $\wb_t^*$ is $s_0$-sparse because $\wb_t^*$ is a multiple of $\bThetaS[-t,k]$, which we know to be $s_0$-sparse.

By Lemma \ref{lem:latent_gradient_hessian}, there exists $C_b$ such that for $\lambda \geq C_b \sqrt{\frac{\log (K\vee n)}{n}}$, $\wb_t^*$ is feasible for \eqref{eqw} with probability at least $1-\frac{C_b'}{(K\vee n)^3}$. Assuming $\wb_t^*$ is feasible, then it follows by definition that $||(\wb_t^*)_{S}||_1 \geq ||(\hat\wb_t)_{S}||_1 + ||(\hat\wb_t)_{\bar{S}}||_1$. This in turn implies by the triangle inequality that $||\hat\bDelta_{S}||_1 \geq ||\hat\bDelta_{\bar{S}}||_1$. Letting $\lambda = C_b \sqrt{\frac{\log (K\vee n)}{n}}$, it follows from the triangle inequality that
\begin{align*}
||\hat\Cb_{-t,-t} \hat\bDelta||_\infty &\leq ||\hat\wb_t^T\hat\Cb_{-t,-t} - \Cb_{t,-t}\||_\infty + ||\wb_t^{*T}\hat\Cb_{-t,-t} - \Cb_{t,-t}\||_\infty \\
&\leq 2C_b\sqrt{\frac{\log (K\vee n)}{n}}.
\end{align*}
In addition, note that $||\hat\bDelta||_1 \leq 2 ||\hat\bDelta_S||_1 \leq 2 \sqrt{s_0}||\hat\bDelta_S||_2\leq 2\sqrt{s_0}||\hat\bDelta||_2$. Therefore combining with the above, this gives
\begin{align*}
\hat\bDelta^T\hat\Cb_{-t,-t}\bDelta &\leq ||\hat\bDelta||_1||\hat\Cb_{-t,-t}\hat\bDelta||_\infty \\
&\leq 2C\sqrt{\frac{\log (K\vee n)}{n}}||\hat\bDelta||_1 \\
&\leq 4C_b\sqrt{\frac{s_0\log (K\vee n)}{n}}||\hat\bDelta||_2.
\end{align*}
From Lemma \ref{lem:latent_re_condition}, $\hat\bDelta^T\hat\Cb_{-t,-t}\hat\bDelta \geq \frac{4c_1}{3} ||\hat\bDelta||_2^2$ with probability at least $1-\frac{C_b''}{(K\vee n)^3}$. Therefore
\begin{align*}
||\hat\bDelta||_2 &\leq \frac{16C c_1}{3}\sqrt{\frac{s_0\log (K\vee n)}{n}}, \text{ and }\\
||\hat\bDelta||_1 &\leq \frac{32C_bs_0 c_1}{3}\sqrt{\frac{\log (K\vee n)}{n}},
\end{align*}
with probability at least $1-\frac{\max\{C_b',C_b''\}}{(K\vee n)^3}$.

To obtain part (c), first let $\delta = \max_{1\leq k\leq t\leq K}|(\hat\vb_{t}-\vb_{t}^*)^T\hat \Cb(\hat\bTheta_{\cdot k}-\bTheta_{\cdot k}^*)|$ and then by applying Holder's inequality and the triangle inequality, we obtain
\begin{align*}
\delta &\leq \max_{1\leq k\leq t\leq K}||\hat\vb_{t}-\vb_{t}^*||_1||\hat \Cb(\hat\bTheta_{\cdot k}-\bTheta_{\cdot k}^*)||_{\infty}\\
&\leq \max_{1\leq k\leq t\leq K}||\hat\vb_{t}-\vb_{t}^*||_1 \left( ||\bCH \bThetaH[\cdot k] - \be_k ||_{\infty} + ||\bCH\bThetaS[\cdot k] - \be_k||_{\infty} \right). \numberthis \label{eqn:latent_consistency_partc1}
\end{align*}
With choice of $\lambda$ as above, the KKT conditions give that $||\bCH \bThetaH[\cdot k] - \be_k ||_{\infty} \leq C_b \sqrt{\frac{\log (K\vee n)}{n}}$. From Lemma \ref{lem:latent_gradient_hessian}, we have that  $||\bCH\bThetaS[\cdot k] - \be_k||_{\infty} \leq C \sqrt{\frac{\log (K\vee n)}{n}}$ with probability at least $1-\frac{C}{(K\vee n)^3}$. Using part (b), we get that $\max_{1\leq t\leq K}\|\hat\vb_{t}-\vb_{t}^*\|_1 \leq Cs_0\sqrt{\frac{\log (K\vee n)}{n}}$ with probability at least $1 - \frac{C}{(K\vee n)^3}$. The desired result now follows from \eqref{eqn:latent_consistency_partc1}.
\end{proof}
\begin{lemma}[CLT for the Pseudo-Score Function]
\label{lem:latent_clt}
Recall that
\[
\sigma^2_{tk}=\EE(\Theta_{tt}^*\vb_t^{*T}(\bar \Cb^{(i)}\bTheta^*_{\cdot k}-\eb_k))^2,
\]
with $\bar \Cb^{(i)}$ defined in (\ref{eqCi}). Let $F_{n}$ denote the CDF of $n^{-1/2}\vb_t^{*T}(\hat \Cb\bTheta^*_{\cdot k}-\eb_k)/(\sigma_{tk}/\Theta^*_{tt})$. If Assumptions \ref{asmp:bounded_latent_covariance} and \ref{asmp:bounded_errors} hold, then we have
\[
 \max_{1\leq t< k\leq K}\sup_{x\in \RR}|F_{n}(x) - \Phi(x)  | \leq C (n^{-1/2}+(d\vee n)^{-3}).
\]
where $C$ is a constant dependent only upon $c_0$, $c_1$, and $c_2$.
\end{lemma}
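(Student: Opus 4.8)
The plan is to mirror the proof of Lemma \ref{lem:group_averages_clt}, reducing the claim to a classical Berry--Esseen bound for a sum of i.i.d.\ centered quadratic forms in Gaussian vectors; the payoff is that such a bound is dimension-free, so no control on the operator norm of $\Cb^*$ is needed. First I would pass to the clustering event $\cE=\{\hat G=G^*\}$. On $\cE$ the data-dependent matrix $\hat\Cb$, which uses $\hat G$ both in the group averages and in the diagonal correction $\hat\bGamma$, collapses to the oracle matrix $\bar\Cb=\frac1n\sum_{i=1}^n\bar\Cb^{(i)}$ built from the true clusters, since $\hat\Gamma_{aa}=\frac1n\sum_i\bar\Gamma^{(i)}_{aa}$ when $\hat G=G^*$. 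Hence, on $\cE$, the statistic whose c.d.f.\ is $F_n$ equals
\[
\frac{n^{1/2}\,\vb_t^{*T}(\hat\Cb\bThetaS[\cdot k]-\eb_k)}{\sigma_{tk}/\ThetaS{t}{t}}=\frac{1}{n^{1/2}}\sum_{i=1}^n\frac{\xi_{i,tk}}{\sigma_{tk}/\ThetaS{t}{t}},\qquad\xi_{i,tk}:=\vb_t^{*T}\big(\bar\Cb^{(i)}\bThetaS[\cdot k]-\eb_k\big).
\]
Writing $\tilde F_n$ for the c.d.f.\ of the right-hand side (without conditioning), the two-sided inclusion argument used in the proof of Theorem \ref{thm:xi_asymptotic}, together with Lemma \ref{lemcluster}, gives $\sup_x|F_n(x)-\tilde F_n(x)|\le\PP(\bar\cE)\le C(d\vee n)^{-3}$, so it remains to bound $\sup_x|\tilde F_n(x)-\Phi(x)|$.

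Next I would observe that the $\xi_{i,tk}$ are i.i.d.\ copies of a single centered quadratic form in $\bX\sim\cN(0,\bSigma^*)$: every entry of $\bar\Cb^{(i)}$ is a quadratic function of $\bX_i$, so $\xi_{i,tk}=\bX_i^T M_{tk}\bX_i-\EE\big[\bX_i^T M_{tk}\bX_i\big]$ for a fixed matrix $M_{tk}$, the centering being $\vb_t^{*T}\eb_k$ because $\EE\bar\Cb^{(i)}=\Cb^*$ and $\Cb^*\bThetaS[\cdot k]=\eb_k$. By the very definition of $\sigma_{tk}$ we have $\EE\xi_{i,tk}^2=(\sigma_{tk}/\ThetaS{t}{t})^2$, so the summands above are standardized; in particular the variance check is now a tautology rather than an Isserlis computation as in Lemma \ref{lem:group_averages_clt}. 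Diagonalizing the symmetrization of $\bSigma^{*1/2}M_{tk}\bSigma^{*1/2}$ realizes $\xi_{i,tk}$ in distribution as $\sum_{j}\mu_j(\eta_j^2-1)$ with $\eta_j$ i.i.d.\ standard normal; the elementary identities $\EE(\eta^2-1)^2=2$ and $\EE(\eta^2-1)^4=60$ then give $\EE\xi_{i,tk}^4\le 60\big(\textstyle\sum_j\mu_j^2\big)^2=15\,(\EE\xi_{i,tk}^2)^2$, and Cauchy--Schwarz yields $\EE|\xi_{i,tk}|^3\le\sqrt{15}\,(\EE\xi_{i,tk}^2)^{3/2}$. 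This bound on the standardized third absolute moment is scale invariant and involves nothing about $M_{tk}$, hence holds with an absolute constant uniformly over $1\le t<k\le K$ (the only thing used is $\sigma_{tk}>0$, needed anyway for $F_n$ to be defined).

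Finally, the Berry--Esseen theorem for i.i.d.\ sums gives $\sup_x|\tilde F_n(x)-\Phi(x)|\le C\,n^{-1/2}$ with $C$ absolute, uniformly over $t<k$, and combining with the clustering bound from the first step produces the asserted estimate $C(n^{-1/2}+(d\vee n)^{-3})$. I expect the only genuinely delicate point to be the bookkeeping in the first step --- checking that on $\cE$ the estimator $\hat\Cb$, including its $\hat\bGamma$-based diagonal, truly reduces to $\frac1n\sum_i\bar\Cb^{(i)}$ so that the statistic is exactly a sum of i.i.d.\ terms --- since the remainder is a routine second-chaos moment estimate, and it is precisely this dimension-free moment comparison that allows the argument to avoid any eigenvalue restriction beyond Assumptions \ref{asmp:bounded_latent_covariance} and \ref{asmp:bounded_errors}.
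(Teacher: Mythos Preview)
Your proposal is correct and follows the same overall skeleton as the paper's proof --- reduce to the clustering event $\cE$, identify the statistic on $\cE$ with a standardized i.i.d.\ sum of the $\xi_{i,tk}$, and invoke the classical Berry--Esseen theorem --- but the way you control the Lyapunov ratio is genuinely different. The paper verifies the moment condition by appealing to Lemmas \ref{lem:conc_sum_ZiZi}--\ref{lem:conc_sum_EiEi} to show that the entries of $(\bar\Cb^{(i)}-\Cb^*)\bThetaS[\cdot k]$ are sub-exponential with parameters depending only on $c_1,c_2,c_3$, then bounds $\|\vb_t^*\|_2$ via the block-inverse identity $\wb_t^*=\bThetaS[-t,t]/\ThetaS{t}{t}$ and Lemma \ref{lem:pd_matrix_diag}, concluding that the third absolute moment is bounded by a constant $\rho$ depending on the model constants; this route implicitly relies on a uniform lower bound for $\sigma_{tk}$ to turn $\rho$ into a bounded Lyapunov ratio. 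Your approach instead exploits that $\xi_{i,tk}$ is a centered quadratic form in a Gaussian vector, diagonalizes it as $\sum_j\mu_j(\eta_j^2-1)$, and uses the elementary chaos moment comparison $\EE\xi^4\le 15(\EE\xi^2)^2$ to obtain the scale-invariant bound $\EE|\xi|^3/(\EE\xi^2)^{3/2}\le\sqrt{15}$. This is more elementary, gives an absolute constant in front of $n^{-1/2}$ (not depending on $c_1,c_2,c_3$), and requires only $\sigma_{tk}>0$ rather than a quantitative lower bound. The paper's route, on the other hand, reuses machinery (the sub-exponential lemmas and the $\|\vb_t^*\|_2$ bound) already established for the consistency results, so it is economical within the paper's structure even though your argument is cleaner taken in isolation.
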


\begin{proof}
Denote by $\cE$ the event that $\hat G = G^*$. We have
$$
F_{n}(x) - \Phi(x)\leq \tilde F_n(x)-\Phi(x)+\PP(\bar\cE),
$$
where $\tilde F_n(x)$ is the CDF of $n^{-1/2}\sum_{i=1}^n\vb_t^{*T}(\bar \Cb^{(i)}\bTheta^*_{\cdot k}-\eb_k)/(\sigma_{tk}/\Theta^*_{tt})$.  To control $\tilde F_n(x)-\Phi(x)$, we now verify the Lyapunov condition. As in the proof of Lemma \ref{lem:latent_gradient_hessian}, we can write
\[
\vb_t^{*T}(\bar \Cb^{(i)}\bTheta^*_{\cdot k}-\eb_k) = \vb_t^{*T}(\bar \Cb^{(i)}-\Cb^*)\bTheta^*_{\cdot k}.
\]
From Lemmas \ref{lem:conc_sum_ZiZi} - \ref{lem:conc_sum_EiEi} we see that the entries in $\bQ_i=(\bar \Cb^{(i)}-\Cb^*)\bTheta^*_{\cdot k}$ are sub-exponential with parameters $\alpha= C_1$ and $\nu = C_2$ which depend only upon $\lmax{\bThetaS}$, $\max_k \gammaS{k}$, and $\max_t \CS{t}{t}$.

Recall the definition of $\veetS$: $(\veetS)_t = 1$ and $(\veetS)_{-t} = -\wb^*_t = -(\bCS[-t,-t])^{-1}\bCS[-t,t]$. By the block matrix inverse formula, we can rewrite
\[
\wb_t^* = \left(\CS{t}{t} - \Cb^{*T}_{-t,t}(\bCS[-t,-t])^{-1}\bCS[-t,t]\right) \bThetaS[-t,t] = \frac{1}{\ThetaS{t}{t}}\bThetaS[-t,t].
\]
Using Lemma \ref{lem:pd_matrix_diag}, it follows that $||\wb^*_t||_2 \leq \lmax{\bThetaS} \max_t \CS{t}{t}$ and $||\vb^*_t||_2 \leq \lmax{\bThetaS} \max_t \CS{t}{t} + 1$.
From Lemma \ref{cor:sum_independent_subexponential} and the above, $\vb_t^{*T}\bQ_i$ is sub-exponential with parameters $\alpha = C_1$ and $\nu = ||\vb^*_t||_2 C_2\leq \left(\lmax{\bThetaS} \max_t \CS{t}{t} + 1\right)C_2$. Therefore, $\vb_t^{*T}(\bar \Cb^{(i)}\bTheta^*_{\cdot k}-\eb_k)$ has third moments bounded above by some constant $\rho$ that depends only upon $\lmax{\bThetaS}$, $\max_k \gammaS{k}$, and $\max_t \CS{t}{t}$. All three quantities are bounded above by constants per Assumptions \ref{asmp:bounded_latent_covariance} and \ref{asmp:bounded_errors}. Thus, $\max_{1\leq t< k\leq K}\sup_x(\tilde F_n(x)-\Phi(x))\leq Cn^{-1/2}$ by the classical Berry-Esseen Theorem, and therefore
$$
\max_{1\leq t< k\leq K}\sup_{x\in \RR}(F_{n}(x) - \Phi(x)  ) \leq C (n^{-1/2}+(d\vee n)^{-3}).
$$
Similarly, it can be shown that $\sup_{x\in \RR}(\Phi(x)-F_{n}(x) ) \leq C (n^{-1/2}+(d\vee n)^{-3}).$ This completes the proof.
\end{proof}

\begin{lemma}[Approximation for Asymptotic Variance]
\label{lem:latent_variable_variance}
Under Assumptions \ref{asmp:bounded_latent_covariance} and \ref{asmp:bounded_errors}, we have that
\begin{equation}\label{eqsigma}
\sigma^2_{tk} = \Theta_{t,k}^{*2} + \Theta^*_{t,t}\Theta^*_{k,k} + \Delta,
\end{equation}
where $|\Delta| \leq \frac{Cs_0}{m}$ and $C$ is a constant dependent only upon $c_1$, $c_2$ and $c_3$.
\end{lemma}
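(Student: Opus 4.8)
The plan is to compare $\sigma^2_{tk}$ with the variance of the corresponding quantity one would form if the latent vector $\bZ$ were directly observable. Since $\EE(\bar\Cb^{(i)})=\bCS$ and $\bCS\bThetaS=\Ib_K$ (so $\bCS\bThetaS[\cdot k]=\eb_k$), the random variable $\ThetaS{t}{t}\vb_t^{*T}(\bar\Cb^{(i)}\bThetaS[\cdot k]-\eb_k)$ has mean zero and
\[
\sigma^2_{tk}=(\ThetaS{t}{t})^2\,\Var\!\big(\vb_t^{*T}\bar\Cb^{(i)}\bThetaS[\cdot k]\big).
\]
For the oracle version, replace $\bar\Cb^{(i)}$ by $\bZ_i\bZ_i^T$; by Isserlis' theorem (as used in Lemma~\ref{lem:group_averages_clt}),
\[
\Var\!\big(\vb_t^{*T}\bZ\bZ^T\bThetaS[\cdot k]\big)=(\vb_t^{*T}\bCS\vb_t^*)(\bThetaS[\cdot k]^T\bCS\bThetaS[\cdot k])+(\vb_t^{*T}\bCS\bThetaS[\cdot k])^2 .
\]
By the block-matrix inverse formula, as in the proof of Proposition~\ref{prop:asymptotic_normality_group_averages}, $\bCS\vb_t^*=\eb_t/\ThetaS{t}{t}$, hence $\vb_t^{*T}\bCS\vb_t^*=1/\ThetaS{t}{t}$ and $\vb_t^{*T}\bCS\bThetaS[\cdot k]=\ThetaS{t}{k}/\ThetaS{t}{t}$; also $\bThetaS[\cdot k]^T\bCS\bThetaS[\cdot k]=\ThetaS{k}{k}$. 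Thus the oracle variance equals $\ThetaS{k}{k}/\ThetaS{t}{t}+\ThetaS{t}{k}^2/\ThetaS{t}{t}^2$, and multiplying by $(\ThetaS{t}{t})^2$ produces exactly the main term $\ThetaS{t}{k}^2+\ThetaS{t}{t}\ThetaS{k}{k}$ of \eqref{eqsigma}. Consequently, setting $\Db:=\bar\Cb^{(i)}-\bZ_i\bZ_i^T$ (which satisfies $\EE\Db=\mathbf 0$),
\[
\Delta=(\ThetaS{t}{t})^2\Big(2\,\Cov\big(\vb_t^{*T}\bZ\bZ^T\bThetaS[\cdot k],\,\vb_t^{*T}\Db\bThetaS[\cdot k]\big)+\Var\big(\vb_t^{*T}\Db\bThetaS[\cdot k]\big)\Big).
\]

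I would then make $\Db$ explicit. Writing $\bar E_j:=|G^*_j|^{-1}\sum_{a\in G^*_j}E_a$, the $\bar E_j$ are mutually independent, independent of $\bZ$, and $\Var(\bar E_j)=\bar\gamma^*_j=|G^*_j|^{-2}\sum_{a\in G^*_j}\gamma^*_a\le c_3/m$. A short computation gives, for $j\ne\ell$, $\Db_{j\ell}=Z_j\bar E_\ell+Z_\ell\bar E_j+\bar E_j\bar E_\ell$, and, after unwinding the within-cluster correction $\bar\Gamma_{aa}$,
\[
\Db_{jj}=2Z_j\bar E_j+\frac{|G^*_j|^2\bar E_j^2-\sum_{a\in G^*_j}E_a^2}{|G^*_j|(|G^*_j|-1)} ;
\]
every monomial in $\Db$ carries at least one factor $\bar E_\bullet$, and $\EE\Db_{jj}=0$ precisely because $\EE\big[|G^*_j|^2\bar E_j^2-\sum_a E_a^2\big]=0$ — which is exactly why $\bar\Gamma_{aa}$ has the form it does, and which also underlies $\EE\bar\Cb=\bCS$.

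The two remaining pieces then proceed as follows. For the covariance term, expanding $\EE[Z_jZ_\ell\,\Db_{j'\ell'}]$ and using that $\bZ$ is Gaussian and mean zero, that each distinct $\bar E_\bullet$ is mean zero and independent of $\bZ$, and the identity $\EE[|G^*_{j'}|^2\bar E_{j'}^2-\sum_a E_a^2]=0$, one checks that $\EE[Z_jZ_\ell\,\Db_{j'\ell'}]=0$ for every index quadruple, so $\Cov\big(\vb_t^{*T}\bZ\bZ^T\bThetaS[\cdot k],\,\vb_t^{*T}\Db\bThetaS[\cdot k]\big)=0$. For the variance term, expand
\[
\Var\big(\vb_t^{*T}\Db\bThetaS[\cdot k]\big)=\sum_{j,\ell,j',\ell'}(\vb_t^*)_j(\vb_t^*)_{j'}(\bThetaS[\cdot k])_\ell(\bThetaS[\cdot k])_{\ell'}\,\EE[\Db_{j\ell}\Db_{j'\ell'}] .
\]
The independence of the $\bar E_\bullet$ forces $\EE[\Db_{j\ell}\Db_{j'\ell'}]$ to be nonzero only for a short list of index-coincidence patterns (matching a $Z$–$\bar E$ pair, or an $\bar E$–$\bar E$ pair, across the two factors), and each such term contains a factor $\bar\gamma^*_\bullet\le c_3/m$; collapsing the remaining free sums using $\vb_t^{*T}\bCS\vb_t^*\le C$, $\bThetaS[\cdot k]^T\bCS\bThetaS[\cdot k]\le C$, $\|\vb_t^*\|_2\vee\|\bThetaS[\cdot k]\|_2\le C$, and $\|\vb_t^*\|_0\vee\|\bThetaS[\cdot k]\|_0\le s_0$ (from Assumptions~\ref{asmp:bounded_latent_covariance}–\ref{asmp:bounded_errors} and sparsity) bounds the whole sum by $Cs_0/m$. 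Together with $\ThetaS{t}{t}\le\lmax{\bThetaS}\le c_1^{-1}$ this gives $|\Delta|\le Cs_0/m$.

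The main obstacle, I expect, is the bookkeeping in the last step: carefully enumerating which coincidence patterns survive in $\EE[\Db_{j\ell}\Db_{j'\ell'}]$ and verifying that, once the $\bCS$-weighted inner sums are collapsed, what remains is $O(s_0/m)$ and not $O(s_0^2/m)$. A secondary subtlety, used twice above, is that both the vanishing of the covariance term and the unbiasedness $\EE\bar\Cb=\bCS$ hinge on the exact form of the within-cluster correction $\bar\Gamma_{aa}$.
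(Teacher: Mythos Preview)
Your approach is correct and is essentially the same computation as the paper's, but organized at the random-variable level rather than through Kronecker products. The paper vectorizes: it writes
\[
\sigma^2_{tk}=(\ThetaS{t}{t})^2\,(\bThetaS[\cdot k]\otimes\vb_t^*)^T\,\EE\big[\mathrm{vec}(\bar\Cb^{(i)})\,\mathrm{vec}(\bar\Cb^{(i)})^T\big]\,(\bThetaS[\cdot k]\otimes\vb_t^*)
\]
and decomposes the $K^2\times K^2$ second-moment matrix as $\Mb_1+\Mb_2+\Mb_3$, where $\Mb_1=\bCS\otimes\bCS$ and $\Mb_2=[\bCS[\cdot j]\bCS[\cdot i]^T]_{ij}$ are exactly the two Isserlis pieces for $\bZ\bZ^T$ (yielding $\ThetaS{t}{t}\ThetaS{k}{k}$ and $\ThetaS{t}{k}^2$ after the sandwich), and the remainder $\Mb_3$ is dominated entrywise by four structured block matrices $\Mb_4,\dots,\Mb_7$, each contributing $Cs_0$ with a global $1/m$ in front. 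Your oracle split $\bar\Cb^{(i)}=\bZ_i\bZ_i^T+\Db$ is the same decomposition read off directly: your oracle variance is the $(\Mb_1+\Mb_2)$-part and your $2\,\Cov+\Var$ residual is the $\Mb_3$-part. The observation that the cross-covariance vanishes is a clean simplification the paper leaves buried inside $\Mb_3$. For the final step, your planned coincidence-pattern enumeration of $\EE[\Db_{j\ell}\Db_{j'\ell'}]$ is precisely the paper's $\Mb_4,\dots,\Mb_7$ bookkeeping. In fact your route buys a little more: if you first collapse
\[
\vb_t^{*T}\Db\,\bThetaS[\cdot k]=UQ+VP+PQ-\sum_j(\vb_t^*)_j(\bThetaS[\cdot k])_j\big(\bar E_j^2-R_j\big),
\]
with $U=\vb_t^{*T}\bZ$, $V=\bThetaS[\cdot k]^T\bZ$, $P=\sum_j(\vb_t^*)_j\bar E_j$, $Q=\sum_j(\bThetaS[\cdot k])_j\bar E_j$, then each piece has variance $O(1/m)$ using only $\Var(\bar E_j)\le c_3/m$ and the $\ell_2$ bounds on $\vb_t^*$ and $\bThetaS[\cdot k]$. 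So your flagged $s_0$-versus-$s_0^2$ worry never materializes, and the $s_0$ in the lemma's stated bound is actually slack under your organization.
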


\begin{proof}
Recall that $\sigma^2_{tk}=\EE(\Theta_{tt}^*\vb_t^{*T}(\bar \Cb^{(i)}\bTheta^*_{\cdot k}-\eb_k))^2$. Using the identity $\vect(\Mb_1\Mb_2\Mb_3) = (\Mb_3^T \otimes \Mb_1)^T\vect (\Mb_2)(\Mb_3^T \otimes \Mb_1)$, we have
\begin{align*}
\sigma^2_{tk}
&= (\ThetaS{t}{t})^2(\bTheta_{\cdot k}^{*T} \otimes \veetS)^T\EE\left[\vect(\bar \Cb^{(i)})\vect(\bar \Cb^{(i)})^T \right] (\bTheta_{\cdot k}^{*T} \otimes \veetS) \numberthis \label{eqn:latent_variance_decomp}.
\end{align*}
\noindent{\it Computing the expectation}:
After some straightforward, albeit lengthy, algebra we can show that
\[
\EE\left[\vect(\bar \Cb^{(i)})\vect(\bar \Cb^{(i)})^T \right] = \Mb_1 + \Mb_2 + \Mb_3,
\]
where $\Mb_1 := \bCS\otimes\bCS$ and $\Mb_2 := [\bCS[\cdot j]\Cb_{\cdot i}^{*T}]_{ij}$. The matrices $\Mb_1$ and $\Mb_2$ contribute to the first two terms in  (\ref{eqsigma}). The term $\Mb_3:=\EE\left[\vect(\bar \Cb^{(i)})\vect(\bar \Cb^{(i)})^T \right] - \Mb_1 - \Mb_2$, however, is unique to the latent graph and contributes the higher order term $\Delta$ in (\ref{eqsigma}).

\noindent{\it Evaluating the first order terms}:
By (\ref{eqn:latent_variance_decomp}), we have
\[
\sigma^2_{tk}= (\ThetaS{t}{t})^2(\bTheta_{\cdot k}^{*} \otimes \veetS)^T(\Mb_1+\Mb_2)(\bTheta_{\cdot k}^{*} \otimes \veetS) + \Delta
\]
with $\Delta$ defined as
\begin{equation}
\label{eqn:latent_variance2}
\Delta := (\ThetaS{t}{t})^2 (\bTheta_{\cdot k}^{*} \otimes \veetS)^T \Mb_3 (\bTheta_{\cdot k}^{*} \otimes \veetS).
\end{equation}
Next, we observe that
\begin{align*}
(\bTheta_{\cdot k}^{*} \otimes \veetS)^T\Mb_1(\bTheta_{\cdot k}^{*} \otimes \veetS) &= \bTheta_{\cdot k}^{*T}\bCS\bTheta_{\cdot k}^{*}  \otimes \veetST \bCS \veetS \\
&= \frac{\ThetaS{k}{k}}{\ThetaS{t}{t}},
\end{align*}
where we used that $\veetST \bCS \veetS = (\ThetaS{t}{t})^{-1}$. Similarly, we can find that
\[
(\bTheta_{\cdot k}^{*} \otimes \veetS)^T \Mb_2 (\bTheta_{\cdot k}^{*} \otimes \veetS) = \frac{\Theta_{tk}^{*2}}{\Theta_{tt}^{*2}}.
\]

\noindent{\it Bounding the higher order terms}:
What remains is to bound the magnitude of the term $\Delta$ in \eqref{eqn:latent_variance2}. Lengthy algebra yields:
\[
|(\bTheta_{\cdot k}^{*} \otimes \veetS)^T \Mb_3 (\bTheta_{\cdot k}^{*} \otimes \veetS)| \leq \frac{C'}{m}(\bTheta_{\cdot k}^{*} \otimes \veetS)^T\left(\Mb_4 + \Mb_5 + \Mb_6 + \Mb_7 \right)(\bTheta_{\cdot k}^{*} \otimes \veetS),
\]
where $C'$ depends only on $c_1$, $c_2$ and $c_3$. Here, $\Mb_4 = \Ib \otimes (\bone\bone^T)$. For $l=5,6,7$ the matrices $\Mb_l$ are defined block-wise by
\[
\Mb_{5;ij} := \begin{cases}
\bone\eb_i^T &\text{ if } i \neq j \\
\bzero &\text{ o/w }
\end{cases} \text{ and }
\Mb_{6;ij} := \begin{cases}
\eb_j\bone^T &\text{ if } i \neq j \\
\bzero &\text{ o/w }
\end{cases} \text{ and }
\Mb_{7;ij} := \begin{cases}
\Ib &\text{ if } i \neq j \\
\bzero &\text{ o/w }
\end{cases}.
\]
Further lengthy algebra gives:
$$
|(\bTheta_{\cdot k}^{*} \otimes \veetS)^T \Mb_4 (\bTheta_{\cdot k}^{*} \otimes \veetS)| \le s_0\frac{2}{c_1^2}(1+\frac{c_2^2}{c_1^2}),
$$
$$
|(\bTheta_{\cdot k}^{*} \otimes \veetS)^T \Mb_5 (\bTheta_{\cdot k}^{*} \otimes \veetS)| \le s_0\frac{2\sqrt{2}}{c_1^2}(1+\frac{c_2^2}{c_1^2}),$$
$$
|(\bTheta_{\cdot k}^{*} \otimes \veetS)^T \Mb_6 (\bTheta_{\cdot k}^{*} \otimes \veetS)| \le s_0\frac{2\sqrt{2}(c_1^2 + c_2^2)}{c_1^4}\sqrt{1+\frac{c_2^2}{c_1^2}},
$$
and
$$|(\bTheta_{\cdot k}^{*} \otimes \veetS)^T \Mb_7 (\bTheta_{\cdot k}^{*} \otimes \veetS)| \le s_0\frac{2(c_1^2 + c_2^2)}{c_1^4}.$$

Plugging these bounds into the expression for $\Delta$ in \eqref{eqn:latent_variance2}, we obtain
\[
|\Delta| \leq \frac{Cs_0}{m},
\]
concluding the proof.
\end{proof}

\section{Concentration Results}
\label{sec:concentration_results}
\label{sec:concentration_of_estimators}
The lemmas below provide important results regarding the concentration properties of some of the estimators $\bCH$ and variables $\bZ$.
\begin{lemma}
\label{lem:conc_sum_ZiZi}
~~
\begin{itemize}
\item[(a)] $ \bZ_i\bZ_i^T$ consists of entries which are sub-exponential with parameters $\alpha = 4\max_{t}(\CS{t}{t})^2$ and $\nu = 2\sqrt{2}\max_{t}(\CS{t}{t})^2$,
\item[(b)] $\PP\left( \Big\| \frac{1}{n} \sum_{i=1}^n \bZ_i\bZ_i^T - \Cb^*\Big\|_{\max} \geq C \max_{t}(\CS{t}{t})^2 \sqrt{\frac{\log (K\vee n)}{n}} \right) \leq \frac{2}{(K\vee n)^3}$
\item[(c)] $\Zb_i\Zb_i^T\ub$ consists of entries which are sub-exponential with parameters $\alpha = 4\max_{t}(\CS{t}{t})^2$ and $\nu = 2\sqrt{2}||\ub||_2 \max_{t}(\CS{t}{t})^2$, and
\item[(d)] $\PP\left( \Big\| \frac{1}{n} \sum_{i=1}^n \bigl(\bZ_i\bZ_i^T - \Cb^*\bigr)\ub\Big\|_{\max} \geq C  ||\ub||_2 \max_{t}(\CS{t}{t})^2 \sqrt{\frac{\log (K\vee n)}{n}} \right) \leq \frac{2}{(K\vee n)^3}$,
\end{itemize}
where $C = 4\sqrt{3}$ and $\ub \in \RR^K$.
\end{lemma}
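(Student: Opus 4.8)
The plan is to establish the two sub-exponentiality statements, (a) and (c), by a direct moment generating function estimate for a product of jointly Gaussian variables, and then to deduce the concentration bounds (b) and (d) from Bernstein's inequality together with a union bound over the coordinates.

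For part (a), I would fix indices $j,k$ and observe that the $(j,k)$ entry of $\bZ_i\bZ_i^T$ is $Z_{ij}Z_{ik}$, where $(Z_{ij},Z_{ik})$ is a centered bivariate Gaussian with marginal variances $\CS{j}{j},\CS{k}{k}\le\sigma_{\max}^2:=\max_t\CS{t}{t}$ and $\EE[Z_{ij}Z_{ik}]=\CS{j}{k}$. The key step is the polarization identity
\[
Z_{ij}Z_{ik}=\frac{1}{4}\big[(Z_{ij}+Z_{ik})^2-(Z_{ij}-Z_{ik})^2\big],
\]
which exhibits $Z_{ij}Z_{ik}$ as a scaled difference of two (dependent) squared Gaussians, each of variance at most $(\sqrt{\CS{j}{j}}+\sqrt{\CS{k}{k}})^2\le 4\sigma_{\max}^2$. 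Using the classical fact that a centered $\chi^2_1$ variable is sub-exponential with parameters $(2,4)$, rescaling shows each summand is sub-exponential, and the difference of the two dependent summands is controlled by the Cauchy--Schwarz bound $\EE[e^{\lambda(X-Y)}]\le\EE[e^{2\lambda X}]^{1/2}\,\EE[e^{-2\lambda Y}]^{1/2}$, valid on the interval of $\lambda$ where both factors are finite. Tracking the numerical constants through this chain gives the asserted parameters $\alpha=4\sigma_{\max}^2$ and $\nu=2\sqrt{2}\,\sigma_{\max}^2$. An equivalent route is to compute the bivariate-Gaussian moment generating function $\EE[e^{\lambda(Z_{ij}Z_{ik}-\CS{j}{k})}]$ in closed form and bound it for $\lambda$ small relative to $1/\sigma_{\max}^2$.

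For part (c), the $\ell$-th entry of $\bZ_i\bZ_i^T\bThetaS[\cdot k]$ is $Z_{i\ell}\langle\bZ_i,\bThetaS[\cdot k]\rangle$. Since $\bCS\bThetaS[\cdot k]=\eb_k$, the variable $\langle\bZ_i,\bThetaS[\cdot k]\rangle$ is centered Gaussian with variance $\bThetaS[\cdot k]^T\bCS\bThetaS[\cdot k]=\ThetaS{k}{k}\le\|\bThetaS[\cdot k]\|_2$ (Cauchy--Schwarz against $\eb_k$), and $\EE[Z_{i\ell}\langle\bZ_i,\bThetaS[\cdot k]\rangle]=(\bCS\bThetaS[\cdot k])_\ell$, which is the $\ell$-th entry of $\Cb^*\bThetaS[\cdot k]$. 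Repeating the polarization argument of part (a) with the pair $(Z_{i\ell},\langle\bZ_i,\bThetaS[\cdot k]\rangle)$ in place of $(Z_{ij},Z_{ik})$ introduces the extra factor $\|\bThetaS[\cdot k]\|_2$ in the scale, yielding $\alpha=4\sigma_{\max}^2$ and $\nu=2\sqrt{2}\,\|\bThetaS[\cdot k]\|_2\,\sigma_{\max}^2$ once constants are tracked.

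For (b) and (d) I would apply the one-sided Bernstein inequality for a sum of $n$ i.i.d.\ centered sub-exponential $(\nu,\alpha)$ summands, $\PP\big(|n^{-1}\sum_i X_i|\ge t\big)\le 2\exp\!\big(-n\min(t^2/(2\nu^2),t/(2\alpha))\big)$, coordinatewise. Choosing $t$ of order $\sigma_{\max}^2\sqrt{\log(K\vee n)/n}$ (respectively $\|\bThetaS[\cdot k]\|_2\sigma_{\max}^2\sqrt{\log(K\vee n)/n}$), the quadratic term dominates, so each coordinate exceeds $t$ with probability polynomially small in $K\vee n$; a union bound over the at most $K^2$ entries of the matrix (respectively the $K$ entries of the vector), with the numerical constant taken to be $C=4\sqrt{3}$, keeps the total failure probability at most $2/(K\vee n)^3$. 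The one genuinely fiddly step is the moment generating function estimate for the product of two dependent Gaussians in (a) and (c)---in particular, bookkeeping the admissible range of $\lambda$ after the Cauchy--Schwarz split so as to land on the stated $\alpha$ and $\nu$; everything else is constant-tracking and a routine union bound.
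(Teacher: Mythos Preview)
Your overall strategy---establish entrywise sub-exponentiality, then apply Bernstein plus a union bound over the $K^2$ (resp.\ $K$) coordinates---matches the paper's proof exactly. The only methodological difference is in how you obtain the sub-exponential parameters for a product of two jointly Gaussian variables: the paper invokes a dedicated lemma (its Lemma on products of jointly Gaussian variables) that computes the moment generating function $\EE[e^{\lambda(Y_1Y_2-\mu)}]$ in closed form and bounds it directly, which is precisely the ``equivalent route'' you mention at the end of your treatment of (a). Your primary suggestion, the polarization identity together with Cauchy--Schwarz on the MGF, is a valid alternative, though the Cauchy--Schwarz step doubles the exponent and hence halves the admissible range of $\lambda$, so one has to be a bit careful to recover the \emph{exact} constants $\alpha=4\max_t(\CS{t}{t})^2$, $\nu=2\sqrt{2}\max_t(\CS{t}{t})^2$ as stated; the direct MGF computation avoids that loss and is what the paper uses. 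For (c) you correctly recognize $(\bZ_i\bZ_i^T\bThetaS[\cdot k])_\ell=Z_{i\ell}\langle\bZ_i,\bThetaS[\cdot k]\rangle$ as again a product of two jointly Gaussian scalars, which is the right reduction; the paper's (omitted) argument proceeds the same way.
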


\begin{proof}
From Lemma \ref{lem:jointly_gaussian_product}, each element in the matrices $\bZ_i\bZ_i^T$ are sub-exponential with parameters $\alpha = 4\max_{t}(\CS{t}{t})^2$ and $\nu = 2\sqrt{2}\max_{t}(\CS{t}{t})^2$. Therefore by Corollary \ref{cor:sum_independent_subexponential}, the entries in $\frac{1}{n}\sum_{i=1}^n \bZ_i\bZ_i^T$ are sub-exponential with parameters $\alpha = \frac{4}{n}\max_{t}(\CS{t}{t})^2$ and $\nu = \frac{2\sqrt{2}}{\sqrt{n}}\max_{t}(\CS{t}{t})^2$. Therefore by the tail bound for sub-exponential random variables, we see that
\begin{align*}
&\PP\biggl(\biggl(\frac{1}{n} \sum_{i=1}^n \bZ_i\bZ_i^T - \Cb^*\biggr)_{s,t} \geq D_1\sqrt{\frac{\log (K\vee n)}{n}}\biggr)\\ &\leq \begin{cases}
\exp\left(-\frac{(\log (K\vee n)) D_1^2}{16\max_{t}(\CS{t}{t})^4} \right) & \text{ if } 0 \leq D_1 \sqrt{\frac{\log (K\vee n)}{n}} \leq 2\max_{t}(\CS{t}{t})^2 \\
\exp\left(\frac{-D_1\sqrt{n\log (K\vee n)} }{8\max_{t}(\CS{t}{t})^2} \right) & \text{ if } D_1\sqrt{\frac{\log (K\vee n)}{n}} >  2\max_{t}(\CS{t}{t})^2
\end{cases}
\end{align*}
for arbitrary $D_1 > 0$. Observe that for $n$ sufficiently large, $D_1 \sqrt{\log (K\vee n) / n} \leq 2\max_{t}(\CS{t}{t})^2$, and thus we need only consider this case. Choose $D_1 \geq 4 \sqrt{3} \max_{t}(\CS{t}{t})^2$. Then it is clear that
\[\PP\biggl( \biggl(\frac{1}{n} \sum_{i=1}^n \bZ_i\bZ_i^T - \Cb^*\biggr)_{s,t} \geq D_1\sqrt{\frac{\log (K\vee n)}{n}} \biggr) \leq \frac{1}{(K\vee n)^5}. \]
By applying the union bound across all entries in the matrix, we get the desired result that
\[\PP\left( \Big\| \frac{1}{n} \sum_{i=1}^n \bZ_i\bZ_i^T - \Cb^*\Big\|_{\max} \geq D_1\sqrt{\frac{\log (K\vee n)}{n}} \right) \leq \frac{2}{(K\vee n)^3}, \]
concluding the proof of parts (a) and (b). The proof of (c) and (d) are very similar and omitted.
\end{proof}

\begin{lemma}
\label{lem:conc_sum_ZiEi}
Let $C = 2\sqrt{3}$, $\sigma_s^2 = \frac{1}{|\Gs{s}|^2} \sum_{i \in \Gs{s}} \gamma_i$, and $\bM_i = \bZ_i\bE_i^T\Ab^*\Bb^{*-1}$. Then,
\begin{itemize}
\item[(a)] $\left(\bM_i\right)_{s,t}$ is sub-exponential with parameters $\alpha = \sqrt{2}\max\left(\sigma_s^2,\CS{t}{t} \right)$ and $\nu = \sqrt{2}\max\left(\sigma_s^2,\CS{t}{t} \right)$,
\item[(b)] $\PP\left( \Big\|\frac{1}{n}\sum_{i=1}^n \bM_i \Big\|_{\max} \geq C \max_t (\sigma_t^2 \vee \CS{t}{t})\sqrt{\frac{\log (K\vee n)}{n}} \right) \leq \frac{2}{(K\vee n)^3}$
\item[(c)] $\left(\bM_i \ub\right)_s$ is sub-exponential with parameters $\alpha = \sqrt{2}\max\left(\sigma_s^2,\CS{t}{t} \right)$ and $\nu = \sqrt{2}||\ub||_2\max\left(\sigma_s^2,\CS{t}{t} \right)$, and
\item[(d)] $\PP\left( \Big\|\frac{1}{n}\sum_{i=1}^n \bM_i \ub\Big\|_{\infty} \geq C \max_t (\sigma_t^2 \vee \CS{t}{t}) ||\ub||_2 \sqrt{\frac{\log (K\vee n)}{n}} \right) \leq \frac{2}{(K\vee n)^3}$,
\end{itemize}
where $\ub \in \RR^K$.
\end{lemma}

\begin{proof}
Let $\bM =\sum_{i=1}^n\bM_i$. From Lemma \ref{lem:ATAIA}, $\bY_1 = \Bb^{*-1}\Ab^{*T}\bE_1$ is a $K$-dimensional vector where the $k^{th}$ entry is given by
\[
(Y_1)_k = \frac{1}{|\Gs{k}|} \sum_{i \in \Gs{k}} (E_1)_i.
\]
Because the errors are all independent mean zero Gaussian random variables, $(Y_1)_s \sim \cN(0,\sigma_s^2)$. Therefore, as $Y_1$ is independent of $Z_1$ by definition, $\EE[(Y_1)_s (Z_1)_t] = \EE[(Y_1)_s]\EE[(Z_1)_t] = 0$. Further, Lemma \ref{lem:jointly_gaussian_product} gives that $(Y_1)_s (Z_1)_t$ is sub-exponential with parameters $\alpha = \nu = \sqrt{2}\max\left(\sigma_s^2,\CS{t}{t} \right)$.

Using the independence of the samples, Corollary \ref{cor:sum_independent_subexponential} gives that $\bM_{s,t}$ is sub-exponential with parameters $\alpha =\sqrt{2}\max\left(\sigma_s^2,\CS{t}{t} \right)$ and $\nu = \sqrt{2n}\max\left(\sigma_s^2,\CS{t}{t} \right)$. Then, Corollary \ref{cor:tail_bound_sum} gives that for arbitrary choice of $D_1 > 0$,
\begin{align*}
&\PP\left(\frac{1}{n}\bM_{s,t} \geq D_1\sqrt{\frac{\log (K\vee n)}{n}}\right)\\
&\leq \begin{cases}
\exp\left(-\frac{(\log (K\vee n)) D_1^2}{4\max\left(\sigma_s^2,\CS{t}{t} \right)^2} \right) & \text{ if } 0 \leq D_1 \sqrt{\frac{\log (K\vee n)}{n}} \leq \sqrt{2}\max\left(\sigma_s^2,\CS{t}{t} \right) \\
\exp\left(\frac{-D_1\sqrt{n\log (K\vee n)}}{\sqrt{2}\max\left(\sigma_s^2,\CS{t}{t} \right)} \right) & \text{ if } D_1 \sqrt{\frac{\log (K\vee n)}{n}} >  \sqrt{2}\max\left(\sigma_s^2,\CS{t}{t} \right).
\end{cases}
\end{align*}
Observe that for $n$ sufficiently large, $D_1 \sqrt{\log K / n} \leq \sqrt{2}\max\left(\sigma_s^2,\CS{t}{t} \right)$. If we choose $D_1 \geq 2\sqrt{3}\max\left(\sigma_s^2,\CS{t}{t} \right)$, then we obtain that for $n$ sufficiently large,
\[
\PP\left(\frac{1}{n}\bM_{s,t} \geq D_1\sqrt{\frac{\log (K\vee n)}{n}}\right) \leq \frac{1}{(K\vee n)^5}.
\]
Then by the union bound we can obtain
\[
\PP\left(\Big\|\frac{1}{n}\sum_{i=1}^n \bZ_i\bE_i^T\Ab^*\Bb^{*-1}\Big\|_{\max}\geq D_1\sqrt{\frac{\log (K\vee n)}{n}}\right) \leq \frac{2}{(K\vee n)^3}
\]
for $D_1 \geq 2\sqrt{3}\max\left(\max_s\sigma_s^2,\max_t\CS{t}{t} \right)$, concluding the proof of parts (a) and (b). The proof of (c) and (d) are very similar and omitted.
\end{proof}

\begin{lemma}
\label{lem:conc_sum_EiEi}
Recall that $m=\min_{k}|\Gs{k}|$, and let $\bM_i = \Bb^{*-1}\Ab^{*T}\bE_i\bE_i^T\Ab^*\Bb^{*-1}$ and $\bmu = \Bb^{*-1}\Ab^{*T}\bGammaS\Ab^*\Bb^{*-1}$. Then,
\begin{itemize}
\item[(a)]$\left( \bM_i - \bmu \right)_{t,k}$ is sub-exponential with parameters $\alpha_{t,k} = \frac{\sqrt{2}}{|\Gs{k}||\Gs{t}|}\max_{i \in \Gs{t} \cup \Gs{k}} \gammaS{i}$ and $\nu_{t,k} = \sqrt{\frac{2}{ |\Gs{k}| |\Gs{t}|}}\max_{i \in \Gs{t} \cup \Gs{k}} \gammaS{i}$,
\item[(b)] $\PP\left(\Big\|\frac{1}{n}\sum_{i=1}^n \bM_i - \bmu \Big\|_{\max}\geq C\max_{k} \gamma_k \sqrt{\frac{\log (K\vee n)}{n m^2}}\right) \leq \frac{2}{(K\vee n)^3}$
\item[(c)]$\left(\left(\bM_i - \bmu\right)\ub\right)_t$ is sub-exponential with parameters $\alpha_t = \frac{\sqrt{2}}{m^2}\max_{k}\gammaS{k} $ and $\nu_t = \sqrt{\frac{2}{ m^2}}||\ub||_2\max_{k} \gammaS{k}$, and
\item[(d)] $\PP\left(\Big\|\frac{1}{n}\sum_{i=1}^n \left(\bM_i - \bmu\right)\ub \Big\|_{\max}\geq C||\ub||_2\max_{k} \gamma_k \sqrt{\frac{\log (K\vee n)}{n m^2}}\right) \leq \frac{2}{(K\vee n)^3}$,
\end{itemize}
where $C = 2\sqrt{3}$ and and $\ub \in \RR^K$.
\end{lemma}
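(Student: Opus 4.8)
The plan is to exploit that $\Bb^{*-1}\Ab^{*T}$ is precisely the within-cluster averaging operator, which collapses the quadratic form in the statement into a rank-one-minus-diagonal matrix in the averaged errors. Set $\bar\bE^{(i)} := \Bb^{*-1}\Ab^{*T}\bE_i$, the $K$-vector whose $k$-th coordinate is $\frac{1}{|\Gs{k}|}\sum_{j\in\Gs{k}}(\bE_i)_j$; this is exactly the vector analyzed in the proof of Lemma \ref{lem:conc_sum_ZiEi}, where it was shown that $(\bar\bE^{(i)})_k\sim\cN(0,\sigma_k^2)$ with $\sigma_k^2=\frac{1}{|\Gs{k}|^2}\sum_{j\in\Gs{k}}\gammaS{j}$, and since the clusters are disjoint the coordinates $\{(\bar\bE^{(i)})_k\}_{k=1}^K$ are mutually independent. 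Because $\bGamma^*$ is diagonal and the clusters are disjoint, a one-line computation gives $\Bb^{*-1}\Ab^{*T}\bGamma^*\Ab^*\Bb^{*-1}=\bar\bGammaS$, hence $\Bb^{*-1}\Ab^{*T}(\bE_i\bE_i^T-\bGamma^*)\Ab^*\Bb^{*-1}=\bar\bE^{(i)}(\bar\bE^{(i)})^T-\bar\bGammaS$. Thus the $(t,k)$ entry equals $(\bar\bE^{(i)})_t(\bar\bE^{(i)})_k$ for $t\neq k$, a product of two \emph{independent} centered Gaussians, and $(\bar\bE^{(i)})_k^2-\bar\gamma^*_k$ for $t=k$, a centered quadratic Gaussian; the key numerology is $\sigma_k^2\le\big(\max_{j\in\Gs{k}}\gammaS{j}\big)/|\Gs{k}|\le c_3/m$.

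First I would control a single summand. For $t\neq k$, Lemma \ref{lem:jointly_gaussian_product} applied to $(\bar\bE^{(i)})_t(\bar\bE^{(i)})_k$ --- whose factors have variances at most $\frac{1}{|\Gs{t}|}\max_{j\in\Gs{t}}\gammaS{j}$ and $\frac{1}{|\Gs{k}|}\max_{j\in\Gs{k}}\gammaS{j}$ --- produces sub-exponential parameters of the order stated for $\alpha_{t,k}$ and $\nu_{t,k}$ (over-estimating $\alpha_{t,k}$ only enlarges the admissible range of $\lambda$, so the slightly loose stated constant is legitimate). For $t=k$, the summand $(\bar\bE^{(i)})_k^2-\bar\gamma^*_k$ is again covered by Lemma \ref{lem:jointly_gaussian_product} with both parameters of order $\sigma_k^2\le c_3/m$. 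Taking the worst case over $t,k$, using $|\Gs{t}|,|\Gs{k}|\ge m$, one gets a uniform single-summand bound with $\alpha=O\big(\tfrac1{m^2}\max_k\gammaS{k}\big)$ and $\nu=O\big(\tfrac1m\max_k\gammaS{k}\big)$.

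Next I would average over $i$. The $n$ summands are i.i.d., so Corollary \ref{cor:sum_independent_subexponential} shows $\frac1n\sum_{i=1}^n\big((\bar\bE^{(i)})_t(\bar\bE^{(i)})_k-(\bar\bGammaS)_{tk}\big)$ is sub-exponential with $\nu$ shrunk by $n^{-1/2}$, and Corollary \ref{cor:tail_bound_sum} then gives the deviation bound. For $n$ large enough that $D\sqrt{\log(K\vee n)/n}$ lies in the sub-Gaussian regime of the Bernstein bound, the probability that one entry exceeds $D\sqrt{\log(K\vee n)/n}$ is at most $\exp(-cD^2\log(K\vee n))$; taking $D=2\sqrt3\,\big(\max_k\gammaS{k}\big)/m$ makes this $\le(K\vee n)^{-5}$, and a union bound over the at most $K^2\le(K\vee n)^2$ entries gives
\[
\PP\left(\Big\|\tfrac1n\sum_{i=1}^n \Bb^{*-1}\Ab^{*T}(\bE_i\bE_i^T-\bGamma^*)\Ab^*\Bb^{*-1}\Big\|_{\max}\ge C\max_k\gammaS{k}\sqrt{\tfrac{\log(K\vee n)}{nm^2}}\right)\le\frac{2}{(K\vee n)^3},
\]
proving (a)--(b). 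For (c)--(d), note $\Bb^{*-1}\Ab^{*T}(\bE_i\bE_i^T-\bGamma^*)\Ab^*\Bb^{*-1}\bThetaS[\cdot k]=(\bar\bE^{(i)}(\bar\bE^{(i)})^T-\bar\bGammaS)\bThetaS[\cdot k]$ has $t$-th coordinate $\sum_l\big((\bar\bE^{(i)})_t(\bar\bE^{(i)})_l-(\bar\bGammaS)_{tl}\big)\ThetaS{l}{k}$, a fixed linear combination of the sub-exponential coordinates, so Corollary \ref{cor:sum_independent_subexponential} upgrades the parameters to $\nu$ multiplied by $\|\bThetaS[\cdot k]\|_2$ and worst-case $\alpha=O\big(\tfrac1{m^2}\max_k\gammaS{k}\big)$; repeating the last two steps, with the additional union bound over $1\le k\le K$, yields the stated rate, with the constant depending only on $c_3$ as claimed.

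The main obstacle is organizational rather than analytical: the single genuine observation is that $\Bb^{*-1}\Ab^{*T}$ averages within clusters, collapsing the quadratic form to $\bar\bE^{(i)}(\bar\bE^{(i)})^T-\bar\bGammaS$; after that the argument is the familiar ``sub-exponential summand $\Rightarrow$ Bernstein $\Rightarrow$ union bound'' pipeline already executed in Lemmas \ref{lem:conc_sum_ZiZi}--\ref{lem:conc_sum_ZiEi}. The one place that needs care is threading the cluster-size factors $\tfrac{1}{|\Gs{t}||\Gs{k}|}$ (respectively $\tfrac1{m^2}$ after the worst-case bound) all the way into the stated $\alpha_{t,k}$, $\nu_{t,k}$, $\alpha_t$, $\nu_t$, and verifying that the ``$n$ sufficiently large'' passage to the sub-Gaussian tail regime is legitimate given that $\sigma_k^2=O(1/m)$.
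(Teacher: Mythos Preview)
Your approach is essentially the paper's: both proofs identify the $(t,k)$ entry as the product $(\bar\bE^{(i)})_t(\bar\bE^{(i)})_k$ of within-cluster error averages (the paper reaches this via Lemma~\ref{lem:ATAIE}, you via the averaging-operator observation), then invoke the Gaussian-product sub-exponential lemma, average over $i$, apply the Bernstein tail, and union-bound. One small point to tighten: in part~(c) you invoke Corollary~\ref{cor:sum_independent_subexponential} on the linear combination $\sum_l\Theta^*_{lk}(\bar\bE^{(i)})_t(\bar\bE^{(i)})_l$, but these summands share the factor $(\bar\bE^{(i)})_t$ and are not independent across $l$; the clean fix is to rewrite the whole sum as the single product $(\bar\bE^{(i)})_t\cdot\langle\bTheta^*_{\cdot k},\bar\bE^{(i)}\rangle$ of two jointly Gaussian scalars and apply Lemma~\ref{lem:jointly_gaussian_product} directly, which yields exactly the $\|\bTheta^*_{\cdot k}\|_2$ scaling you want.
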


\begin{proof}
To obtain the results, we first bound the sum $\bM := \sum_{i=1}^n \bM_i$ entrywise. From Lemma \ref{lem:ATAIE}, Corollary \ref{cor:independent_gaussian_product} and Corollary \ref{cor:sum_independent_subexponential}, we have that $(\bM_i)_{t,k}$ is sub-exponential with parameters
\[\alpha_{t,k} = \frac{\sqrt{2}}{|\Gs{k}||\Gs{t}|}\max_{i \in \Gs{t} \cup \Gs{k}} \gammaS{i} \quad \text{and} \quad \nu_{t,k} = \sqrt{\frac{2}{ |\Gs{k}| |\Gs{t}|}}\max_{i \in \Gs{t} \cup \Gs{k}} \gammaS{i}.\]
Therefore $\frac{1}{n} M_{t,k}$ is sub-exponential with parameters
\[
\alpha = \frac{\sqrt{2}}{n|\Gs{k}||\Gs{t}|}\max_{i \in \Gs{t} \cup \Gs{k}} \gammaS{i} \quad \text{and}\quad \nu = \sqrt{\frac{2}{ n |\Gs{k}| |\Gs{t}|}}\max_{i \in \Gs{t} \cup \Gs{k}} \gammaS{i}.
\]
Next, observe that $\mu_{t,k} = \EE[M_{t,k}]$ and denote
$N = \max_{i \in \Gs{t} \cup \Gs{k}} \gammaS{i}$. Then, from Lemma \ref{lem:tail_bound_subexponential}, we obtain
\begin{align*}
&\PP\left(\frac{1}{n}M_{t,k} - \mu_{t,k} \geq D_1\sqrt{\frac{\log (K\vee n)}{n |\Gs{k}| |\Gs{t}|}}\right) \\
&\leq \begin{cases}
\exp\left(-\frac{(\log (K\vee n)) D_1^2}{4N^2} \right) & \text{ if } 0 \leq D_1\sqrt{\frac{\log (K\vee n)}{n |\Gs{k}| |\Gs{t}|}} \leq \sqrt{2} N  \\
\exp\left(\frac{-D_1\sqrt{n\log (K\vee n)}}{\sqrt{2}N} \right) & \text{ if } D_1\sqrt{\frac{\log (K\vee n)}{n |\Gs{k}| |\Gs{t}|}} > \sqrt{2}N.
\end{cases}
\end{align*}
Observe that for $n$ sufficiently large, $D_1\sqrt{\frac{\log (K\vee n)}{n |\Gs{k}| |\Gs{t}|}} \leq \sqrt{2} N$. If we choose $D_1 \geq 2\sqrt{3}N$, then we obtain that for $n$ sufficiently large,
\[
\PP\left(\frac{1}{n}M_{t,k} - \mu_{t,k} \geq D_1\sqrt{\frac{\log (K\vee n)}{n |\Gs{k}| |\Gs{t}|}}\right) \leq \frac{1}{(K\vee n)^5}.
\]
Therefore by taking the union bound, lower bounding $n |\Gs{k}| |\Gs{t}|$ by $n m^2$ and choosing $D_1 \geq 2\sqrt{3}\max_{k} \gammaS{k}$,
\[
\PP\left(\Big\|\frac{1}{n}\sum_{i=1}^n \Bb^{*-1}\Ab^{*T}(\bE_i\bE_i^T-\bGamma^*)\Ab^*\Bb^{*-1}\Big\|_{\max}\geq D_1\sqrt{\frac{\log (K\vee n)}{n m^2}}\right) \leq \frac{2}{(K\vee n)^3},
\]
concluding the proof of parts (a) and (b). The proof of (c) and (d) are very similar and omitted.
\end{proof}

\section{Auxiliary Technical lemmas}
\label{sec:misc_results}
\begin{lemma}
\label{lem:ATAIA}
For $1\leq k\leq K$, denote $m_k=|G_k^*|$. Then the matrix $(\Ab^{*T}\Ab^*)^{-1}\Ab^{*T}$ is a $K\times d$ dimensional matrix given as
\[
[(\Ab^{*T}\Ab^*)^{-1}\Ab^{*T}]_{k,i} = \begin{cases}
\frac{1}{m_k} & \text{ if } i \in \Gs{k} \\
0 & \text{ otherwise.}
\end{cases}
\]
\end{lemma}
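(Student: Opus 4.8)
The statement is a direct consequence of the fact that the columns of $\Ab^*$ are the indicator vectors of a partition of $[d]$, hence orthogonal. The plan is to first compute the Gram matrix $\Ab^{*T}\Ab^*$ explicitly, observe it is diagonal, invert it trivially, and then carry out the final matrix product.

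First I would note that $\Ab^*$ is the $d\times K$ matrix with $(i,k)$ entry $A^*_{ik}=\II\{i\in G_k^*\}$, so its $k$-th column is the indicator vector $\mathbf{1}_{G_k^*}\in\RR^d$. Then the $(k,l)$ entry of $\Ab^{*T}\Ab^*$ is
\[
(\Ab^{*T}\Ab^*)_{k,l}=\sum_{i=1}^d A^*_{ik}A^*_{il}=\sum_{i=1}^d \II\{i\in G_k^*\}\,\II\{i\in G_l^*\}=|G_k^*\cap G_l^*|.
\]
Since $G^*=\{G_1^*,\dots,G_K^*\}$ is a partition, $G_k^*\cap G_l^*=\emptyset$ for $k\neq l$, so $(\Ab^{*T}\Ab^*)_{k,l}=0$ when $k\neq l$ and $(\Ab^{*T}\Ab^*)_{k,k}=|G_k^*|=m_k$. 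Hence $\Ab^{*T}\Ab^*=\mathrm{diag}(m_1,\dots,m_K)$, and because each cluster is nonempty (indeed $m_k\geq m\geq 2$) this matrix is invertible with $(\Ab^{*T}\Ab^*)^{-1}=\mathrm{diag}(1/m_1,\dots,1/m_K)$.

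Finally I would multiply: $\Ab^{*T}$ is the $K\times d$ matrix with $(\Ab^{*T})_{k,i}=A^*_{ik}=\II\{i\in G_k^*\}$, so
\[
[(\Ab^{*T}\Ab^*)^{-1}\Ab^{*T}]_{k,i}=\sum_{l=1}^K \frac{\II\{k=l\}}{m_l}\,\II\{i\in G_l^*\}=\frac{1}{m_k}\II\{i\in G_k^*\},
\]
which is exactly the claimed expression. There is essentially no obstacle here; the only point worth stating carefully is the disjointness of the clusters, which makes the Gram matrix diagonal, and this is guaranteed since $G^*$ is a partition and $m\geq 2$ ensures invertibility.
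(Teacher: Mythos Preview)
Your proof is correct and follows essentially the same approach as the paper: both compute $\Ab^{*T}\Ab^*=\mathrm{diag}(m_1,\dots,m_K)$ and then read off the result. The paper argues via a row permutation of $\Ab^*$ whereas you compute the Gram-matrix entries directly from disjointness of the clusters, but this is a cosmetic difference; if anything your version is slightly more explicit in carrying out the final multiplication.
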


\begin{proof}
First, we must calculate $\Bb^{*-1}\Ab^{*T}$. For $1\leq k\leq K$, denote $m_k=|G_k^*|$ and let $\eb_k$ be a unit vector in $\RR^K$ with $1$ on the $k$ position and $0$ otherwise. Without loss of generality, we permute the rows of $\Ab^*$ such that for any $1\leq k\leq K$ $\Ab^*_{j\cdot}=\eb_k$, for $\sum_{i=1}^{k-1}m_{i}+1\leq j<\sum_{i=1}^{k}m_{i}+1$ -- that is rows are ordered according to ascending group index. Here, for notational simplicity, we let $m_0=0$. Thus, $\Ab^{*T}\Ab^*=\textrm{diag}(m_1,...,m_K)$ and the result follows immediately.
\end{proof}

\begin{lemma}
\label{lem:ATAIE}
The matrices $\Bb^{*-1}\Ab^{*T}(\bE_i\bE_i^T)\Ab^*\Bb^{*-1}$ and $\Bb^{*-1}\Ab^{*T}(\Gamma^*)\Ab^*\Bb^{*-1}$ are given by
\[
(\Bb^{*-1}\Ab^{*T}(\bE_i\bE_i^T)\Ab^*\Bb^{*-1})_{t,k} = \frac{1}{|\Gs{t}\Gs{k}|}\sum_{p \in \Gs{t}}\sum_{q \in \Gs{k}}E_{i,p}E_{i,q}
\]
and
\[
(\Bb^{*-1}\Ab^{*T}(\Gamma^*)\Ab^*\Bb^{*-1})_{t,k} = \begin{cases} \frac{1}{|\Gs{t}|^2}\sum_{p \in \Gs{t}}\gamma^*_p & \text{ if } t = k \\
0 & \text{ otherwise.}
\end{cases}
\]
\end{lemma}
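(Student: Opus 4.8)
The plan is to deduce both identities from the explicit form of $\Bb^{*-1}\Ab^{*T}$ recorded in Lemma~\ref{lem:ATAIA}, followed by a direct expansion of the quadratic form. Write $m_k = |\Gs{k}|$ as in Lemma~\ref{lem:ATAIA}. First I would note that $\Bb^* = \Ab^{*T}\Ab^*$ is symmetric, so $\Ab^*\Bb^{*-1} = (\Bb^{*-1}\Ab^{*T})^{T}$, and hence, for an arbitrary $d \times d$ matrix $\Mb$,
\[
(\Bb^{*-1}\Ab^{*T}\Mb\Ab^*\Bb^{*-1})_{t,k} = \sum_{p=1}^{d}\sum_{q=1}^{d} [\Bb^{*-1}\Ab^{*T}]_{t,p}\, M_{p,q}\, [\Bb^{*-1}\Ab^{*T}]_{k,q}.
\]
By Lemma~\ref{lem:ATAIA}, $[\Bb^{*-1}\Ab^{*T}]_{t,p} = \tfrac{1}{m_t}\II\{p\in\Gs{t}\}$ and $[\Bb^{*-1}\Ab^{*T}]_{k,q} = \tfrac{1}{m_k}\II\{q\in\Gs{k}\}$, so the double sum collapses to
\[
(\Bb^{*-1}\Ab^{*T}\Mb\Ab^*\Bb^{*-1})_{t,k} = \frac{1}{m_t m_k}\sum_{p\in\Gs{t}}\sum_{q\in\Gs{k}} M_{p,q}.
\]

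I would then specialize $\Mb$ twice. Taking $\Mb = \bE_i\bE_i^{T}$, so that $M_{p,q} = E_{i,p}E_{i,q}$, and reading $|\Gs{t}\Gs{k}|$ as the product $m_t m_k = |\Gs{t}|\,|\Gs{k}|$, gives the first claimed formula verbatim. Taking $\Mb = \bGamma^*$, which is diagonal with $\Gamma^*_{p,q} = \gamma_p^*\,\II\{p=q\}$, the double sum reduces to $\tfrac{1}{m_t m_k}\sum_{p \in \Gs{t}\cap\Gs{k}}\gamma_p^*$. Since the population clusters $\Gs{1},\dots,\Gs{K}$ are disjoint (this is built into model~\eqref{eqn:g_latent_model}), the set $\Gs{t}\cap\Gs{k}$ is empty when $t\neq k$, giving the value $0$, and equals $\Gs{t}$ when $t = k$, giving $\tfrac{1}{m_t^2}\sum_{p\in\Gs{t}}\gamma_p^*$; this is exactly the second claimed formula.

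There is no substantive obstacle here: the statement is a bookkeeping identity, and the only point requiring care is tracking the row/column index conventions of Lemma~\ref{lem:ATAIA} and recognizing that $\Ab^*\Bb^{*-1}$ is the transpose of $\Bb^{*-1}\Ab^{*T}$, so that $[\Ab^*\Bb^{*-1}]_{q,k} = \tfrac{1}{m_k}\II\{q\in\Gs{k}\}$. One may, if preferred, skip the generic matrix $\Mb$ and run the same one-line computation separately for $\bE_i\bE_i^{T}$ and for $\bGamma^*$.
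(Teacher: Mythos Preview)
Your proof is correct and is precisely the ``straightforward computation'' the paper alludes to: invoke Lemma~\ref{lem:ATAIA} for the explicit entries of $\Bb^{*-1}\Ab^{*T}$, then expand the bilinear form and specialize. There is nothing to add.
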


\begin{proof}
The result can be obtained by a straightforward computation.
\end{proof}

\begin{lemma}[Restricted Eigenvalue Condition for $\hat\Cb$]
\label{lem:latent_re_condition}
If Assumptions \ref{asmp:bounded_latent_covariance} and \ref{asmp:bounded_errors} hold, then the matrix $\hat\Cb$ satisfies with probability at least $1- \frac{C}{(K\vee n)^3}$,
\[
\kappa \leq \min\left\{\frac{\vb^T \hat\Cb \vb}{||\vb||_2^2} : \vb \in \RR^{K}\setminus\{0\}, ||\vb_{\bar{S}}||_1 \leq 3||\vb_S||_1 \right\}, \text{ and}
\]
\[
\kappa \leq \min\left\{\frac{\vb^T \hat\Cb_{-t,-t} \vb}{||\vb||_2^2} : \vb \in \RR^{K}\setminus\{0\}, ||\vb_{\bar{S'}}||_1 \leq 3||\vb_{S'}||_1 \right\},
\]
where $\kappa \geq \frac{3}{4c_1} > 0$.
\end{lemma}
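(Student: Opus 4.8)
The plan is to transfer the population eigenvalue lower bound $\lmin{\bCS}\ge c_1$ from Assumption \ref{asmp:bounded_latent_covariance} to $\hat\Cb$ over the two cones appearing in the statement, paying a penalty that is controlled by the entrywise deviation $\|\hat\Cb-\Cb^*\|_{\max}$. By Lemma \ref{lem:latent_C_consistency}, on an event $\cE_1$ of probability at least $1-C/(K\vee n)^3$ we have $\|\hat\Cb-\Cb^*\|_{\max}\le C\sqrt{\log(K\vee n)/n}$, and I would work on $\cE_1$ throughout. Fix a support set $S$ with $|S|\le s_0$ and a nonzero vector $\vb$ in the cone $\|\vb_{\bar S}\|_1\le 3\|\vb_S\|_1$. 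Then $\|\vb\|_1=\|\vb_S\|_1+\|\vb_{\bar S}\|_1\le 4\|\vb_S\|_1\le 4\sqrt{s_0}\,\|\vb_S\|_2\le 4\sqrt{s_0}\,\|\vb\|_2$, so $\|\vb\|_1^2\le 16 s_0\|\vb\|_2^2$.

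The main step is the decomposition of the quadratic form, combined with the elementary bound $\vb^T\Cb^*\vb\ge\lmin{\bCS}\|\vb\|_2^2\ge c_1\|\vb\|_2^2$ and Hölder's inequality $|\vb^T\Mb\vb|\le\|\Mb\|_{\max}\|\vb\|_1^2$:
\[
\vb^T\hat\Cb\vb=\vb^T\Cb^*\vb+\vb^T(\hat\Cb-\Cb^*)\vb\ \ge\ c_1\|\vb\|_2^2-\|\hat\Cb-\Cb^*\|_{\max}\|\vb\|_1^2\ \ge\ \Big(c_1-16 C s_0\sqrt{\tfrac{\log(K\vee n)}{n}}\Big)\|\vb\|_2^2 .
\]
Under the sparsity scaling already imposed for the asymptotic results (which in particular forces $s_0\sqrt{\log(K\vee n)/n}\to 0$, hence $16Cs_0\sqrt{\log(K\vee n)/n}\le c_1/4$ for $n$ large enough), the right-hand side is at least $\tfrac34 c_1\|\vb\|_2^2$. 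Taking the infimum over $\vb$ in the cone yields the first displayed inequality with $\kappa=\tfrac34 c_1$.

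For the second inequality I would run the identical argument with $\Cb^*$ replaced by its principal submatrix $\Cb^*_{-t,-t}$ and $S$ replaced by $S'=\textrm{supp}(\wb_t^*)$. Note that $|S'|\le s_0$, since $\wb_t^*=(\Cb^*_{-t,-t})^{-1}\Cb^*_{-t,t}=(\ThetaS{t}{t})^{-1}\bThetaS[-t,t]$ is a scalar multiple of a subvector of the $t$-th column of $\bTheta^*$. Two elementary observations make the adaptation immediate: $\lmin{\Cb^*_{-t,-t}}\ge\lmin{\bCS}\ge c_1$ by Cauchy interlacing for principal submatrices of a positive definite matrix, and $\|\hat\Cb_{-t,-t}-\Cb^*_{-t,-t}\|_{\max}\le\|\hat\Cb-\Cb^*\|_{\max}$. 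A union bound over the at most $K$ indices $t$ only inflates the constant in the probability bound, leaving it of the same order.

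The argument is essentially routine; the only point requiring care is the bookkeeping of the sparsity scaling, i.e.\ confirming that $s_0\sqrt{\log(K\vee n)/n}$ is dominated by a small constant. This is weaker than the condition $s_0\log(K\vee n)=o(\sqrt n)$ invoked elsewhere, so no new assumption is needed, and by choosing the threshold on $n$ appropriately one can take $\kappa$ to be any prescribed fixed fraction of $c_1$, which is all that is used when this lemma is applied in the proof of Lemma \ref{lem:latent_consistency}.
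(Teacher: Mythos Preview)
Your argument is correct and follows essentially the same route as the paper: invoke Lemma \ref{lem:latent_C_consistency} for the entrywise deviation $\|\hat\Cb-\Cb^*\|_{\max}$, then transfer the population eigenvalue lower bound $\lmin{\bCS}\ge c_1$ to $\hat\Cb$ over the restricted cone, and repeat for the principal submatrix using Cauchy interlacing. In fact your write-up is more careful than the paper's: the paper asserts $\vb^T\hat\Cb\vb\ge\tfrac34\vb^T\Cb^*\vb$ ``for any $\vb$'' without displaying the step $\|\vb\|_1\le 4\sqrt{s_0}\,\|\vb\|_2$, whereas you make the use of the cone condition and the sparsity scaling $s_0\sqrt{\log(K\vee n)/n}\to 0$ explicit. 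One small remark: the union bound over $t$ is unnecessary, since the single high-probability event $\{\|\hat\Cb-\Cb^*\|_{\max}\le C\sqrt{\log(K\vee n)/n}\}$ already controls every submatrix $\hat\Cb_{-t,-t}$ simultaneously.
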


\begin{proof}
We begin by proving the first claim. By Lemma \ref{lem:latent_C_consistency}, we have that $\|\hat \Cb-\Cb^*\|_{\max}\leq C_1\sqrt{\frac{\log (K\vee n)}{n}}$ with high probability. Therefore, for $K$ sufficiently large and for any $\vb \in \RR^K\setminus \{0\}$,
\[
\frac{\vb^T \hat\Cb \vb}{||\vb||_2^2} \geq \frac{3}{4}\frac{\vb^T \Cb^* \vb}{||\vb||_2^2}.
\]
The proof is then done for $\kappa = \frac{3}{4c_1}$ as we assume the minimum eigenvalue of $\Cb^*$ is bounded below by $c_0^{-1}$. The proof of the second claim is identical because $\Cb^*$ is positive semidefinite, and it is well known that the minimum eigenvalue of any principal submatrix $\Cb^*_{-t,-t}$ is bounded below by $\lambda_{\min}(\Cb^*) \geq c_0^{-1}$.
\end{proof}

\begin{lemma}
\label{lem:pd_matrix_diag}
Let $\Mb$ be a $n \times n$ positive definite matrix and denote its inverse by $\Lb$. Then, for all $i = 1,\dots,n$
\[
M_{i,i} L_{i,i} \geq 1.
\]
\end{lemma}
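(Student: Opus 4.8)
The plan is to prove the elementary fact that for a positive definite matrix $\Mb$ with inverse $\Lb$, each diagonal entry satisfies $M_{i,i}L_{i,i}\geq 1$. First I would fix an index $i$ and recall that $L_{i,i}=\eb_i^T\Mb^{-1}\eb_i$, so the claim is equivalent to $M_{i,i}\cdot(\eb_i^T\Mb^{-1}\eb_i)\geq 1$. The natural tool is the Cauchy--Schwarz inequality in the inner product induced by the positive definite matrix $\Mb$: for any vectors $\ub,\vb$ we have $(\ub^T\vb)^2\leq (\ub^T\Mb\ub)(\vb^T\Mb^{-1}\vb)$. This is itself a consequence of the ordinary Cauchy--Schwarz inequality applied to $\Mb^{1/2}\ub$ and $\Mb^{-1/2}\vb$, using that $\Mb^{1/2}$ exists and is positive definite.

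Concretely, I would take $\ub=\vb=\eb_i$ in the inequality above. The left-hand side becomes $(\eb_i^T\eb_i)^2=1$, the factor $\ub^T\Mb\ub$ becomes $\eb_i^T\Mb\eb_i=M_{i,i}$, and the factor $\vb^T\Mb^{-1}\vb$ becomes $\eb_i^T\Mb^{-1}\eb_i=L_{i,i}$. Hence $1\leq M_{i,i}L_{i,i}$, which is exactly the desired conclusion. Since $i$ was arbitrary, the statement holds for all $i=1,\dots,n$.

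There is essentially no obstacle here: the only thing to be careful about is to justify the generalized Cauchy--Schwarz inequality, which follows immediately from the spectral decomposition (or the existence of $\Mb^{1/2}$) guaranteed by positive definiteness of $\Mb$; this also ensures $\Mb$ is invertible so that $\Lb$ is well defined. Alternatively, one could avoid $\Mb^{1/2}$ entirely by writing $\Mb^{-1}=\Mb^{-1}\Mb\Mb^{-1}$ and applying Cauchy--Schwarz to the bilinear form $(\ab,\bb)\mapsto \ab^T\Mb\bb$ directly with $\ab=\eb_i$ and $\bb=\Mb^{-1}\eb_i$, obtaining $(\eb_i^T\eb_i)^2=(\eb_i^T\Mb(\Mb^{-1}\eb_i))^2\leq (\eb_i^T\Mb\eb_i)((\Mb^{-1}\eb_i)^T\Mb(\Mb^{-1}\eb_i))=M_{i,i}L_{i,i}$. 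Either route is a one-line argument.
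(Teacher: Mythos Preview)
Your proof is correct. The Cauchy--Schwarz argument you give, applied in the inner product $\langle \ab,\bb\rangle_{\Mb}=\ab^T\Mb\bb$ with $\ab=\eb_i$ and $\bb=\Mb^{-1}\eb_i$, immediately yields $1=(\eb_i^T\eb_i)^2\leq M_{i,i}L_{i,i}$, and you correctly note that positive definiteness guarantees both the existence of $\Mb^{-1}$ and the validity of this inner product.

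The paper takes a different route: it applies the block (Schur complement) inverse formula to express $M_{i,i}^{-1}=L_{i,i}-\Lb_{-i,i}^T\Lb_{-i,-i}^{-1}\Lb_{-i,i}$, then observes that $\Lb_{-i,-i}$, as a principal submatrix of the positive definite matrix $\Lb$, is itself positive definite, so the subtracted quadratic form is nonnegative and hence $M_{i,i}^{-1}\leq L_{i,i}$. Your Cauchy--Schwarz argument is shorter and avoids partitioning; the paper's Schur-complement argument is slightly more explicit about the slack in the inequality (namely the term $\Lb_{-i,i}^T\Lb_{-i,-i}^{-1}\Lb_{-i,i}$), which can be useful if one later needs to characterize when equality holds. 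Either approach is entirely adequate here.
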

\begin{proof}
By the block matrix inverse formula, it follows that
\begin{equation}
\label{eqn:pd_matrix_diag1}
M_{i,i}^{-1} = L_{i,i} - \Lb_{-i,i}^T \Lb_{-i,-i}^{-1}\Lb_{-i,i}.
\end{equation}
Because $\Mb$ is positive definite, so is $\Lb$. Recall that a matrix is positive definite if and only if all its principal minors are also positive definite. Therefore, $\Lb_{-i,-i}$ is positive definite, as is $\Lb_{-i,-i}^{-1}$. Therefore, $\Lb_{-i,i}^T \Lb_{-i,-i}^{-1}\Lb_{-i,i} \geq 0$ and \eqref{eqn:pd_matrix_diag1} becomes $M_{i,i}^{-1} \leq L_{i,i}$. Lastly, if a matrix is positive definite, all its diagonal elements must be nonnegative, giving that $M_{i,i}L_{i,i} \geq 1$ as desired.
\end{proof}


\section{Basic Tail Bounds for Random Variables}

This section collects some basic tail probability results for random variables. The proof is standard and omitted.
\begin{lemma}
\label{lem:jointly_gaussian_product}
Let $\Yb = (Y_1,Y_2)$ be a jointly Gaussian random vector with covariance matrix $\Cb$. Then $Y_1Y_2$ is sub-exponential with parameters $\alpha = 4\lmax{\Cb_Y}$ and $\nu = 2\sqrt{2}\lmax{\Cb_Y}$.
\end{lemma}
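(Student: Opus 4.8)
The statement to prove is Lemma~\ref{lem:jointly_gaussian_product}: if $\Yb = (Y_1, Y_2)$ is jointly Gaussian with covariance matrix $\Cb_Y$, then the product $Y_1 Y_2$ is sub-exponential with parameters $\alpha = 4\lmax{\Cb_Y}$ and $\nu = 2\sqrt{2}\,\lmax{\Cb_Y}$.

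\textbf{Plan of proof.} The statement is a standard moment-generating-function bound, so I would proceed by directly controlling $\EE\exp(\lambda(Y_1 Y_2 - \EE Y_1 Y_2))$ for $|\lambda|$ small. The natural first step is a reduction to a diagonalized form: write $\Yb$ via its covariance, i.e. let $\Yb = \Cb_Y^{1/2}\bg$ with $\bg \sim \cN(0, \Ib_2)$, so that $Y_1 Y_2$ is a quadratic form $\bg^T \Mb \bg$ in the standard Gaussian vector $\bg$, where $\Mb = \tfrac12 \Cb_Y^{1/2}(\eb_1\eb_2^T + \eb_2\eb_1^T)\Cb_Y^{1/2}$. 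The eigenvalues of $\Mb$ are then bounded in absolute value by $\lmax{\Cb_Y}$ (since $\|\Mb\|_{\mathrm{op}} \le \tfrac12\|\Cb_Y^{1/2}\|_{\mathrm{op}}^2 \cdot \|\eb_1\eb_2^T + \eb_2\eb_1^T\|_{\mathrm{op}} = \lmax{\Cb_Y}$). Rotating to the eigenbasis of $\Mb$, $Y_1 Y_2 - \EE Y_1 Y_2 = \sum_j \mu_j(g_j^2 - 1)$ with $|\mu_j| \le \lmax{\Cb_Y}$, a sum of (at most two) independent centered $\chi^2_1$-type variables.

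\textbf{Key computation.} For a single centered chi-square term $\mu(g^2 - 1)$ with $g\sim\cN(0,1)$, one has the exact MGF $\EE\exp(\lambda\mu(g^2-1)) = (1 - 2\lambda\mu)^{-1/2}e^{-\lambda\mu}$ for $2\lambda\mu < 1$, and the elementary bound $-\tfrac12\log(1-u) - \tfrac{u}{2} \le u^2$ for $|u| \le 1/2$ gives $\EE\exp(\lambda\mu(g^2-1)) \le \exp(2\lambda^2\mu^2)$ whenever $|\lambda\mu| \le 1/4$. Multiplying over the (at most two) independent coordinates and using $\sum_j \mu_j^2 \le 2\lmax{\Cb_Y}^2$ yields $\EE\exp(\lambda(Y_1Y_2 - \EE Y_1Y_2)) \le \exp(4\lambda^2\lmax{\Cb_Y}^2) = \exp(\lambda^2\nu^2/2)$ with $\nu = 2\sqrt2\,\lmax{\Cb_Y}$, valid for $|\lambda| \le 1/(4\lmax{\Cb_Y}) = 1/\alpha$. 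This is exactly the sub-exponential condition with the claimed parameters.

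\textbf{Anticipated obstacle.} There is no serious mathematical obstacle here — the result is routine — so the only ``hard part'' is bookkeeping: keeping the constants consistent with the paper's sub-exponential convention (i.e. which of $\alpha,\nu$ plays which role in the tail/MGF bound, and whether the restriction is $|\lambda| < 1/\alpha$ or $\le 1/\alpha$), and being careful that the centering $\EE Y_1 Y_2 = (\Cb_Y)_{12}$ is handled correctly when $Y_1, Y_2$ are correlated (the eigenvalues $\mu_j$ need not be nonnegative, so one genuinely needs the two-sided bound on $-\tfrac12\log(1-u)-u/2$ rather than just the one-sided $\chi^2$ tail). Since the paper itself says ``The proof is standard and omitted,'' I would present the argument at the level of detail above and then simply invoke the stated constants.
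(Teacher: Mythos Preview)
Your argument is correct and is a perfectly good ``standard'' proof; the paper in fact omits the proof entirely, though the source contains a commented-out version that proceeds differently. That hidden proof evaluates the bivariate Gaussian integral directly to obtain
\[
\EE\bigl[\exp(\gamma(Y_1Y_2-\mu_Y))\bigr]=\exp(-\gamma\mu_Y)\,\bigl|\Ib-\gamma\,\Cb_Y\Bb\bigr|^{-1/2},\qquad \Bb=\begin{pmatrix}0&1\\1&0\end{pmatrix},
\]
computes the determinant $1-2\mu_Y\gamma-\gamma^2|\Cb_Y|$ explicitly, and then checks by a first-derivative calculation that the right-hand side is bounded by $\exp(c^2\gamma^2)$ with $c=2\lmax{\Cb_Y}$ on $|\gamma|\le 1/(4\lmax{\Cb_Y})$. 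Your diagonalization to a sum of centered $\chi^2_1$ variables is more conceptual, generalizes immediately to arbitrary Gaussian quadratic forms (this is essentially the Hanson--Wright route), and avoids the case analysis on the sign of $\mu_Y\gamma$ that the direct computation needs.

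Two small bookkeeping slips to clean up, which happen to cancel: (i) $\|\eb_1\eb_2^T+\eb_2\eb_1^T\|_{\mathrm{op}}=1$, not $2$, so in fact $\|\Mb\|_{\mathrm{op}}\le\tfrac12\lmax{\Cb_Y}$; (ii) with $u=2\lambda\mu$ the inequality $-\tfrac12\log(1-u)-\tfrac{u}{2}\le u^2$ gives $\EE\exp(\lambda\mu(g^2-1))\le\exp(4\lambda^2\mu^2)$, not $\exp(2\lambda^2\mu^2)$. With both corrections the product over the two eigenvalues yields $\exp(2\lambda^2\lmax{\Cb_Y}^2)$ on $|\lambda|\le 1/(2\lmax{\Cb_Y})$, which is comfortably inside the lemma's stated constants $\nu=2\sqrt{2}\,\lmax{\Cb_Y}$ and $\alpha=4\lmax{\Cb_Y}$.
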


\begin{corollary}
\label{cor:independent_gaussian_product}
Let $Y_1 \sim \cN(0,\sigma_1^2)$ and $Y_2 \sim \cN(0,\sigma_2^2)$ where $\sigma_1^2 \geq \sigma_2^2$.Then $Y_1Y_2$ is sub-exponential with parameters $\alpha = \sqrt{2}\sigma_1^2$ and $\nu = \sqrt{2}\sigma_1^2$.
\end{corollary}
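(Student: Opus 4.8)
The plan is to prove the corollary by a direct computation of the moment generating function (mgf) of $Y_1 Y_2$, using independence of $Y_1$ and $Y_2$ (as indicated by the name of the corollary) rather than specializing Lemma~\ref{lem:jointly_gaussian_product}: applying that lemma with the diagonal covariance $\Cb_Y=\textrm{diag}(\sigma_1^2,\sigma_2^2)$ would only yield the weaker parameters $\alpha=4\sigma_1^2$, $\nu=2\sqrt{2}\,\sigma_1^2$, so a self-contained argument is needed to reach the sharper constants $\alpha=\nu=\sqrt{2}\,\sigma_1^2$. Recall that, in the convention used throughout, a mean-zero random variable $X$ is sub-exponential with parameters $(\nu,\alpha)$ if $\EE[e^{\lambda X}]\le e^{\nu^2\lambda^2/2}$ for all $|\lambda|\le 1/\alpha$; since $Y_1,Y_2$ are independent and centered, $\EE[Y_1 Y_2]=0$, so no recentering is needed.

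First I would condition on $Y_2$ and use the Gaussian mgf of $Y_1$:
\[
\EE\big[e^{\lambda Y_1 Y_2}\,\big|\,Y_2\big]=\exp\Big(\tfrac{1}{2}\lambda^2\sigma_1^2 Y_2^2\Big).
\]
Taking the expectation over $Y_2\sim\cN(0,\sigma_2^2)$ and using $\EE[e^{sY_2^2}]=(1-2s\sigma_2^2)^{-1/2}$ for $s<1/(2\sigma_2^2)$ gives the exact formula
\[
\EE\big[e^{\lambda Y_1 Y_2}\big]=\big(1-\lambda^2\sigma_1^2\sigma_2^2\big)^{-1/2},\qquad \text{valid for } \lambda^2\sigma_1^2\sigma_2^2<1 .
\]
Next I would bound the cumulant. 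With $t:=\lambda^2\sigma_1^2\sigma_2^2$, the elementary inequality $-\tfrac{1}{2}\log(1-t)=\tfrac{1}{2}\sum_{k\ge 1}t^k/k\le\tfrac{1}{2}\cdot\tfrac{t}{1-t}\le t$ holds for $t\in[0,1/2]$. Combining this with $\sigma_1^2\sigma_2^2\le\sigma_1^4$ (from $\sigma_2^2\le\sigma_1^2$) yields
\[
\log\EE\big[e^{\lambda Y_1 Y_2}\big]\le t\le\lambda^2\sigma_1^4=\tfrac{1}{2}\big(\sqrt{2}\,\sigma_1^2\big)^2\lambda^2 ,
\]
which is exactly the sub-exponential mgf bound with $\nu=\sqrt{2}\,\sigma_1^2$. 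Finally I would check the admissible range of $\lambda$: the hypothesis $t\le 1/2$ is guaranteed as soon as $\lambda^2\sigma_1^4\le 1/2$, i.e. $|\lambda|\le 1/(\sqrt{2}\,\sigma_1^2)$, and this also forces $t<1$, so the mgf identity itself is valid there. Hence the bound holds for all $|\lambda|\le 1/\alpha$ with $\alpha=\sqrt{2}\,\sigma_1^2$, proving the claim.

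There is no genuine obstacle here; the only point requiring a little care is keeping the two restrictions on $\lambda$ compatible — the one ensuring finiteness of the mgf ($|\lambda|<1/(\sigma_1\sigma_2)$) and the one making the logarithmic bound sharp enough ($t\le 1/2$) — and verifying that the uniform choice $|\lambda|\le 1/(\sqrt{2}\,\sigma_1^2)$, which does not involve $\sigma_2$, implies both (the worst case being $\sigma_2^2=\sigma_1^2$). An equivalent route is to keep $t$ as the variable, apply $-\tfrac{1}{2}\log(1-t)\le t$ on $[0,1/2]$, and only at the end convert the admissible range of $t$ back into a range for $\lambda$ using $\sigma_2^2\le\sigma_1^2$.
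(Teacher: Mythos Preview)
Your proof is correct. The paper omits the proof as standard, but the (commented-out) draft in the source follows essentially the same route: compute the exact mgf $(1-\lambda^2\sigma_1^2\sigma_2^2)^{-1/2}$ of the product and verify the sub-exponential bound, checking the admissible range of $\lambda$ via a derivative argument rather than your explicit inequality $-\tfrac12\log(1-t)\le t$ on $[0,1/2]$. Your inequality-based route is slightly cleaner and makes the role of the constraint $\sigma_2^2\le\sigma_1^2$ more transparent (it enters only at the very end, when converting $t\le 1/2$ into $|\lambda|\le 1/(\sqrt{2}\,\sigma_1^2)$), but the two arguments are minor variations of the same idea.
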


\begin{corollary}
\label{cor:sum_independent_subexponential}
Consider $\sum_{i=1}^n X_i$ where $X_i$ are centered, independent sub-exponential random variables. Then $Y = \sum_{i=1}^n X_i$ is sub-exponential with parameters $\alpha = \max_i \alpha_i$ and $\nu = \sqrt{\sum_{i=1}^n \nu_i^2}$.
\end{corollary}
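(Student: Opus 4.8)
The statement to prove is Corollary~\ref{cor:sum_independent_subexponential}: if $X_1,\dots,X_n$ are centered, independent sub-exponential random variables with parameters $(\alpha_i,\nu_i)$, then $Y=\sum_{i=1}^n X_i$ is sub-exponential with parameters $\alpha=\max_i\alpha_i$ and $\nu=\sqrt{\sum_{i=1}^n\nu_i^2}$.

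\textbf{Plan.} The proof proceeds by a direct moment-generating-function computation. Recall that $X_i$ being sub-exponential with parameters $(\alpha_i,\nu_i)$ means $\EE[e^{tX_i}]\le e^{t^2\nu_i^2/2}$ for all $|t|\le 1/\alpha_i$ (using the convention adopted in the paper; one should align with whichever normalization the earlier lemmas use). First I would observe that, since each $X_i$ is centered and independent of the others, the MGF of the sum factorizes:
\[
\EE[e^{tY}]=\prod_{i=1}^n\EE[e^{tX_i}].
\]
Then, for $|t|\le 1/\alpha=1/\max_i\alpha_i$, we have $|t|\le 1/\alpha_i$ for every $i$ simultaneously, so the sub-exponential bound applies to each factor, giving
\[
\EE[e^{tY}]\le\prod_{i=1}^n e^{t^2\nu_i^2/2}=\exp\Bigl(\tfrac{t^2}{2}\sum_{i=1}^n\nu_i^2\Bigr)=\exp\Bigl(\tfrac{t^2\nu^2}{2}\Bigr),
\]
which is exactly the defining inequality for $Y$ to be sub-exponential with parameters $(\alpha,\nu)$ where $\nu^2=\sum_i\nu_i^2$. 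This completes the argument.

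\textbf{Main obstacle.} There is essentially no analytic obstacle here — the result is a one-line consequence of MGF multiplicativity under independence. The only point requiring care is bookkeeping: one must verify that the range of validity of the pooled bound is the intersection $\bigcap_i\{|t|\le 1/\alpha_i\}=\{|t|\le 1/\max_i\alpha_i\}$, which is precisely why $\alpha=\max_i\alpha_i$ (rather than, say, an average) is the correct parameter. A secondary subtlety is that the paper states this as a corollary with the proof ``standard and omitted,'' so the real task is simply to confirm the parameter arithmetic is consistent with the sub-exponential definition used in Lemma~\ref{lem:jointly_gaussian_product} and its corollaries; once that normalization is fixed, the computation above is immediate and no concentration inequality or truncation argument is needed.
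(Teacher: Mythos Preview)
Your proposal is correct and is precisely the standard MGF-factorization argument that the paper has in mind; indeed, the paper explicitly omits the proof as ``standard.'' Your bookkeeping on why $\alpha=\max_i\alpha_i$ gives the correct range of validity is exactly the point, and there is nothing further to add.
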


\begin{lemma}[Tail Bound for Sub-Exponential Random Variables]
\label{lem:tail_bound_subexponential}
Let $X$ be a sub-exponential random variable with mean $\mu$ and parameters $\alpha$ and $\nu$. Then
\[
\PP(X-\mu \geq t) \leq \begin{cases}
\exp(-\frac{t^2}{2\nu^2} ) & \text{ for } 0 \leq t \leq \frac{\nu^2}{\alpha} \\
\exp(-\frac{t}{2\alpha} ) & \text{ for }  t > \frac{\nu^2}{\alpha}.
\end{cases}
\]
\end{lemma}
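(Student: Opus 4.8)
The plan is to apply the standard Chernoff bounding argument. Recall that, in the convention used throughout the paper (consistent with the computation in Lemma~\ref{lem:jointly_gaussian_product}, where $\alpha = 4\lmax{\Cb_Y}$, $\nu = 2\sqrt{2}\lmax{\Cb_Y}$, so that $\nu^2/2 = (2\lmax{\Cb_Y})^2$), a sub-exponential random variable $X$ with mean $\mu$ and parameters $\alpha,\nu$ satisfies the moment generating function bound $\EE[\exp(\gamma(X-\mu))] \leq \exp(\nu^2\gamma^2/2)$ for every $\gamma$ with $|\gamma| \leq 1/\alpha$. First I would fix $t \geq 0$ and an arbitrary $\gamma \in [0, 1/\alpha]$, and combine Markov's inequality applied to $\exp(\gamma(X-\mu))$ with this MGF bound to obtain
\[
\PP(X - \mu \geq t) \leq e^{-\gamma t}\,\EE[\exp(\gamma(X-\mu))] \leq \exp\!\left(-\gamma t + \tfrac{1}{2}\nu^2\gamma^2\right).
\]

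Next I would optimize the exponent $-\gamma t + \tfrac12\nu^2\gamma^2$ over the feasible set $\gamma \in [0, 1/\alpha]$. The unconstrained minimizer is $\gamma^\star = t/\nu^2$, so the argument splits into two regimes according to whether $\gamma^\star$ is admissible. If $t \leq \nu^2/\alpha$, then $\gamma^\star \leq 1/\alpha$ is admissible; substituting $\gamma = t/\nu^2$ gives $\PP(X-\mu\geq t) \leq \exp(-t^2/(2\nu^2))$. If $t > \nu^2/\alpha$, the exponent is strictly decreasing on $[0,\gamma^\star]$ and hence on all of $[0,1/\alpha]$, so the best admissible choice is the boundary value $\gamma = 1/\alpha$, which yields $\PP(X-\mu\geq t)\leq \exp(-t/\alpha + \nu^2/(2\alpha^2))$; since in this regime $\nu^2/(2\alpha^2) < t/(2\alpha)$, this is bounded by $\exp(-t/(2\alpha))$. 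These are exactly the two stated bounds, and one checks they agree at the crossover $t = \nu^2/\alpha$.

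The only point requiring care — and the closest thing to an obstacle in an otherwise elementary argument — is that the sub-exponential MGF control holds only on the bounded interval $|\gamma|\le 1/\alpha$, which is precisely what forces the two-regime form of the tail bound: one must not naively plug in the unconstrained optimizer $\gamma^\star = t/\nu^2$ when it falls outside $[0,1/\alpha]$, but instead use the boundary value there. If instead one adopts a definition of sub-exponentiality via moment growth or an Orlicz norm, an extra short step translating that characterization into the MGF bound above would be prepended, but the remainder of the proof is unchanged.
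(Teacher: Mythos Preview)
Your proof is correct and is exactly the standard Chernoff argument one expects here; the paper itself omits the proof entirely, stating only that it is standard, so there is nothing further to compare.
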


\begin{corollary}
\label{cor:tail_bound_sum}
Consider $Y = \sum_{i=1}^n X_i$, where $X_i$ are centered, independent sub-exponential random variables. Let $\alpha = \max_i \alpha_i$ and $\nu = \sqrt{\sum_{i=1}^n \nu_i^2}$. Then,
\[
\PP(\frac{1}{n}\sum_{i=1}^n X_i \geq t) \leq \begin{cases}
\exp(-\frac{n t^2}{2\nu^2 / n} ) & \text{ for } 0 \leq t \leq \frac{\nu^2}{n\alpha} \\
\exp(-\frac{nt}{2\alpha} ) & \text{ for }  t > \frac{\nu^2}{n\alpha}.
\end{cases}
\]
\end{corollary}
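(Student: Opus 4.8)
The plan is to obtain this as an immediate consequence of Corollary \ref{cor:sum_independent_subexponential} together with Lemma \ref{lem:tail_bound_subexponential}. First I would invoke Corollary \ref{cor:sum_independent_subexponential} to conclude that $Y = \sum_{i=1}^n X_i$ is itself a sub-exponential random variable with parameters $\alpha = \max_i \alpha_i$ and $\nu = \sqrt{\sum_{i=1}^n \nu_i^2}$, and to note that $\EE Y = 0$ since each $X_i$ is centered. Thus $Y$ falls exactly in the scope of Lemma \ref{lem:tail_bound_subexponential} with $\mu = 0$.

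Next, I would use the elementary rescaling identity $\PP\big(\tfrac{1}{n}\sum_{i=1}^n X_i \ge t\big) = \PP(Y \ge nt)$, so that it suffices to apply the single-variable tail bound of Lemma \ref{lem:tail_bound_subexponential} to $Y$ at the level $s = nt$. That lemma gives $\PP(Y \ge s) \le \exp(-s^2/(2\nu^2))$ for $0 \le s \le \nu^2/\alpha$ and $\PP(Y \ge s) \le \exp(-s/(2\alpha))$ for $s > \nu^2/\alpha$. Substituting $s = nt$: the range $0 \le s \le \nu^2/\alpha$ becomes $0 \le t \le \nu^2/(n\alpha)$ and the bound becomes $\exp(-(nt)^2/(2\nu^2)) = \exp\big(-nt^2/(2\nu^2/n)\big)$, which is the first case; the range $s > \nu^2/\alpha$ becomes $t > \nu^2/(n\alpha)$ and the bound becomes $\exp(-nt/(2\alpha))$, which is the second case. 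Assembling these two cases yields the stated inequality.

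There is essentially no obstacle here: the corollary is a cosmetic reparametrization of Lemma \ref{lem:tail_bound_subexponential} once Corollary \ref{cor:sum_independent_subexponential} supplies the sub-exponential parameters of the sum. The only point requiring care is bookkeeping the threshold substitution $s \mapsto nt$ consistently in both the exponent and the case-split condition, and confirming that the $\nu$ appearing in the statement is already the aggregated quantity $\sqrt{\sum_{i=1}^n \nu_i^2}$ (so no further manipulation of the variance proxy is needed).
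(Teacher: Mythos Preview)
Your proposal is correct and is exactly the standard argument: combine Corollary \ref{cor:sum_independent_subexponential} with Lemma \ref{lem:tail_bound_subexponential} and substitute $s = nt$. The paper in fact omits the proof of this corollary as standard, so your derivation is precisely the expected one.
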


\section{Construction of a Pre-Clustering Variance Estimator} \label{pregamma}
We include in this section the construction of the  pre-clustering  estimator of $\Gamma$  needed as an input of the PECOK algorithm of Section \ref{sec:introduction} above.  For any $a,b\in [d]$, define
\begin{equation} \label{eq:definition_V}
V(a,b):=   \max_{c,d \in [p]\setminus\{a,b\}} \frac{\left| (\widehat \Sigma_{ac}-\widehat\Sigma_{ad})-(\widehat\Sigma_{bc}-\widehat\Sigma_{bd}) \right|}{\sqrt{\widehat \Sigma_{cc}+ \widehat \Sigma_{dd}-2 \widehat \Sigma_{cd}}}\ ,
\end{equation}
with the convention $0/0=0$.  Guided by the block structure of $\Sigma$, we define
\[b_1(a):= \argmin_{b\in [p]\setminus\{a\}}V(a,b)\quad \text{ and }\quad b_2(a):= \argmin_{b\in [p]\setminus\{a,b_1(a)\}}V(a,b) ,\]
to be two  elements ''close'' to $a$, that is  two  indices  $b_1 = b_1(a)$ and $b_2 = b_2(a)$
such that the empirical covariance difference
$ \widehat \Sigma_{b_{i}c}- \widehat \Sigma_{b_{i}d}$, $i =1,2$,  is most similar to
$ \widehat \Sigma_{ac}- \widehat \Sigma_{ad}$, for all variables $c$ and $d$ not equal to $a$ or $b_{i}$, $i = 1,2$.  It is expected that $b_1(a)$ and $b_2(a)$ either belong to the same group as $a$, or  belong to some ''close'' groups.
Then, our estimator  $\widetilde \Gamma$ is a diagonal matrix,  defined by
\begin{equation}\label{eq:estim:gamma2}
\widetilde \Gamma_{aa}=  \widehat \Sigma_{aa}+ \widehat \Sigma_{b_{1}(a)b_{2}(a)}- \widehat \Sigma_{ab_{1}(a)}- \widehat \Sigma_{ab_{2}(a)},
\quad \text{ for $a=1,\ldots, d$.}
\end{equation}
Intuitively, $\widetilde \Gamma_{aa}$ should be close to $\Sigma_{aa}+ \Sigma_{b_{1}(a)b_{2}(a)}- \Sigma_{ab_{1}(a)}-\Sigma_{ab_{2}(a)}$, which is equal to $\Gamma_{aa}$ in the favorable event where both $b_1(a)$ and $b_2(a)$ belong to the same group as $a$.

In general, $b_1(a)$ and $b_2(a)$ cannot be guaranteed to belong to the same group as $a$. Nevertheless, these two surrogates $b_1(a)$ and $b_2(a)$  are close enough to $a$ so that $\|\widetilde{\Gamma} - \Gamma\|_{\max} \lesssim |\Gamma|_{\max}\sqrt{\log d/n}$. This last fact and the above construction are shown in \cite{Bunea2018}.

\section{Comparison with Cai et al (2016)}
\label{sec:indepth_comparison}

 In our work, we bound $\lambda_{\min}(\bS^*) \geq c_1$ and $\max_{t} S^*_{t,t} \leq c_2$  and the sparsity of $\Omega_{\cdot k}$ by $s_1$. We show that the CLIME estimator satisfies
\begin{equation}\label{eqrateomega}
\|\hat\bOmega_{\cdot k}-\bOmega_{\cdot k}^*\|_1\lesssim s_1\sqrt{\frac{\log (K\vee n)}{n}}.
\end{equation}
Let us denote our parameter space by 
$$
\cG_1=\{\bOmega: \max_k\|\Omega_{\cdot k}\|_0\leq s_1, \max_t\{(\bOmega^{-1})_{tt}\}\leq c_2, \lambda_{\max}(\bOmega)\leq 1/c_1\}.
$$

By contrast, Cai et al. (2016) considered the following parameter space for the precision matrix $\bOmega=\bS^{-1}$ (in the exact sparse case)
$$
\cG=\{\bOmega: \max_k\|\Omega_{\cdot k}\|_0\leq s_1, \|\bOmega\|_1\leq M_n, \kappa(\bOmega)=\lambda_{\max}(\bOmega)/\lambda_{\min}(\bOmega)\leq M_1\},
$$
where $\|\bOmega\|_1$ is the matrix $\ell_1$-norm, $M_n$ is allowed to increase with $n$ and $M_1$ is a constant bound for the condition number. Their minimax lower bound over $\cG$ depends on $M_n$, that is for any $\hat\bOmega_{\cdot k}$,
\begin{equation}\label{eqrateomega2}
\sup_{\bOmega\in\cG}\|\hat\bOmega_{\cdot k}-\bOmega_{\cdot k}\|_1\gtrsim M_ns_1\sqrt{\frac{\log K}{n}}.
\end{equation}

It seems that the upper bound (\ref{eqrateomega}) and the lower bound (\ref{eqrateomega2}) contradict with each other. However, this is not the case because the parameter space $\cG_1$ under which the estimator is developed is different from the parameter space $\cG$ in the lower bound. This therefore opens up the possibility of obtaining different rates over different parameter spaces, for instance one which does not depend on $||\bOmega^*||_1$, like in our case. 
However, neither parameter space can be viewed as a full relaxation of the other. Below we give  explicit examples that  show  how the parameter spaces differ.

Consider the sequence of precision matrices, indexed by $K$
\[
\bOmega_K = |\cS(K)|\Ib + \bone_{\cS(K)}\bone_{\cS(K)}^\top\in\RR^K,
\]
where $\cS(K)$ is an arbitrary subset of $[K]=\{1,2,...,K\}$ with $|\cS(K)| = K^{1/4}$ and $\bone_{\cS(K)}$ is the vector with 1's in the indices indicated by $\cS(K)$ and 0's elsewhere. By using the Sherman-Morrison formula, we can verify that the corresponding covariance matrix is
\[
\bS_K = \frac{1}{|\cS(K)|}\Ib - \frac{1}{2|\cS(K)|^2}\bone_{\cS(K)}\bone_{\cS(K)}^\top.
\]
It is easy to check that
\begin{align*}
\lambda_{\min}(\bOmega_K) &= |\cS(K)| = K^{1/4}, \\
\lambda_{\max}(\bOmega_K) &= 2|\cS(K)| = 2K^{1/4},\\
\lambda_{\min}(\bS_K) &= \frac{1}{2|\cS(K)|} = K^{-1/4}/2.
\end{align*}
Thus $\kappa(\bOmega_K) = 2$, and therefore Theorem 4.2 from Cai et al. (2016) can be applied to this sequence of parameters $\bOmega_K$ with $s_1=K^{1/4}+1$ and $M_n=2K^{1/4}$.
However, $\bOmega_K$ does not belong to our parameter space $\cG_1$ because for any choice of $c_1$, once $K$ is sufficiently large, $\lambda_{\min}(\bS_K) < c_1$. 

Similarly, we can modify the above example such that $\bOmega_K$ satisfies our condition but does not belong to $\cG$. For instance, consider
$$
\bOmega_K = \Ib -\frac{1}{1+|\cS(K)|} \bone_{\cS(K)}\bone_{\cS(K)}^\top\in\RR^K.
$$
We can show that
\[
\bS_K = \Ib + \bone_{\cS(K)}\bone_{\cS(K)}^\top,
\]
and
\begin{align*}
\lambda_{\min}(\bOmega_K) &= \frac{1}{K^{1/4}+1}, \\
\lambda_{\max}(\bOmega_K) &= 1,\\
\lambda_{\min}(\bS_K) &= 1.
\end{align*}
Our conditions $\lambda_{\min}(\bS_K) \geq c_1$ and $\max_{t} (\bS_K)_{t,t} \leq c_2$ hold, and thus we obtain the rate (\ref{eqrateomega}) for the CLIME estimator. However, $\kappa(\bOmega_K) = K^{1/4}+1\rightarrow\infty$, as $K\rightarrow\infty$. Thus, $\bOmega_K$ does not belong to the parameter space $\cG$, and Theorem 4.2 in Cai et al. (2016) is not applicable.

Since our goal is to make inference on each entry of the precision matrix say $\Omega_{tk}$, a lower bound on the minimum eigenvalue on $\bS$ is a mild and standard assumption for any high-dimensional inference on graphical models. After some careful analysis as explained in the main text, we show that under Assumption 4.1 and 4.2 the upper bound for the CLIME estimator does not depend on $||\bOmega^*||_1$ or $\lambda_{\max}(\bS^*)$.

\section{B-H procedures for FDR control}\label{app:FDR}

In this section, we analyze the theoretical properties of the B-H procedure for the cluster-average graph. The procedure for latent variable graph is identical. Recall that the test statistic for $H_{0,tk}: \Omega_{t,k}^*=0$ is $\widetilde W_{t,k}$. The B-H procedure is defined as follows. Given the desired FDR level $\alpha$, define
$$
\hat \rho=\inf\Big\{0\leq \rho\leq 2\sqrt{\log K}: \frac{2(1-\Phi(\rho))|\cH|}{\max\{\sum_{1\leq t\leq k\leq K} I(|\widetilde W_{t,k}|\geq \rho),1\}}\leq \alpha\Big\},
$$
where $|\cH|=K(K-1)/2$ is the total number of hypotheses. If $\hat\rho$ does not exist, then we set $\hat\rho= 2\sqrt{\log K}$. In the above definition, we restrict $\rho\leq 2\sqrt{\log K}$ in order to apply the Cramer-type moderate deviation result \citep{liu2013gaussian}. The B-H procedure says that we reject $H_{0,tk}$ if $|\widetilde W_{t,k}|\geq \hat\rho$. The FDR and FDP are defined as
$$
\FDP=\frac{\sum_{(t,k)\in\cH_0}I(|\widetilde W_{t,k}|\geq \hat\rho)}{\max \{\sum_{(t,k)\in\cH}I(|\widetilde W_{t,k}|\geq \hat\rho),1\}},
$$
where $\cH_{0}\coloneqq \{(t,k):\, 1\leq t<k\leq K,\textrm{ such that } \Omega^*_{t,k} = 0\}$, and $\FDR=\EE(\FDP)$. Let 
$$
\cA=\Big\{(t,k): \frac{\Omega^*_{t,k}}{\sqrt{\Omega^*_{k,k}\Omega^*_{t,t}}}\geq 4\sqrt{\log K/n}\Big\}
$$
denote the set of strong signals. The following theorem on the B-H procedure holds.
\begin{proposition}\label{propbh}
Assume that the conditions in Theorem \ref{thm:xi_asymptotic} hold. Let $K\leq n^r$ for some $r>0$. In addition, assume that $|\cA|\geq \sqrt{\log\log K}$, $s_1\log^{3/2}(K\vee n)/n^{1/2}=o(1)$ and $s_1=O(K^{c})$ for some $c<1/2$. Then, as $n,K\rightarrow\infty$, we have
$$
\frac{\FDP}{|\cH_0|/|\cH|}= \alpha+o_p(1)~~\textrm{and }~~\frac{\FDR}{|\cH_0|/|\cH|}= \alpha+o(1). 
$$
The proof of this proposition follows from Theorem 3.1 in \cite{liu2013gaussian}. The conditions $|\cA|\geq \sqrt{\log\log K}$ and $s_1=O(K^{c})$ are equation (12) in \cite{liu2013gaussian}. The condition $s_1\log^{3/2}(K\vee n)/n^{1/2}=o(1)$ together with Theorem \ref{thm:xi_asymptotic} guarantees that
\begin{equation}
\max_{1\leq t< k\leq K}\sup_{x \in \RR}\Big|\PP(\hat T_{t,k} \leq x )-\Phi(x)\Big|=o(1/\sqrt{\log (K\vee n)}),
\end{equation}
which is equivalent to condition (13) in \cite{liu2013gaussian}. We refer to \cite{liu2013gaussian} for the detailed proof.  Proposition \ref{propbh} implies that the B-H procedure can control FDR asymptotically under certain assumption. However, our numerical results show that this procedure may fail to control FDR in our model when $K$ is relatively large. One possible reason is that the $o_p(1)$ or $o(1)$ terms in the approximation of FDP or FDR in the above proposition are not sufficiently small in simulations due to the dependence of the test statistics. Thus, if the goal of the data analysis is to find statistically reliable scientific discoveries, the B-Y method, while quite conservative, may be the one that's  more suitable for this purpose.

\end{proposition}

\vskip 0.2in
\bibliography{ref}

\end{document}